\newcolumntype{M}[1]{>{\centering\arraybackslash}m{#1}}
\newcolumntype{N}{@{}m{0pt}@{}}
\newcommand{\specialcell}[2][c]{%
  \begin{tabular}[#1]{@{}c@{}}#2\end{tabular}}
\newtheorem{thm}{Theorem}
\newtheorem{lemma}{Lemma}
\newtheorem{pro}{Proposition}
\newtheorem{corollary}{Corollary}
\newtheorem{rem}{Remark}
\newtheorem{alg}{Algorithm}
\newtheorem{as}{Assumption}
\newcommand{\be}{\begin{equation}}
\newcommand{\ee}{\end{equation}}
\newcommand{\bs}{\begin{split}}
\newcommand{\es}{\end{split}}
\newcommand{\bea}{\begin{eqnarray*}}
\newcommand{\eea}{\end{eqnarray*}}
\newcommand{\mR}{\mathbb{R}}
\newcommand{\mN}{\mathbb{N}}
\newcommand{\mE}{\mathbb{E}}
\newcommand{\mcH}{\mathcal{H}}
\newcommand{\mcF}{\mathcal{F}}
\newcommand{\mcE}{\mathcal{E}}
\newcommand{\la}{\langle}
\newcommand{\ra}{\rangle}
\newcommand{\eref}[1] {(\ref{#1})}
\providecommand{\scal}[2]{\langle{#1},{#2}\rangle}
\begin{document}

\title{Generalization Properties and Implicit Regularization  for Multiple Passes SGM}

\author{Junhong Lin$^{*}\quad$ Raffaello Camoriano$^{\dagger*\ddagger}\quad$  Lorenzo Rosasco$^{*\ddagger}$\\
{\small{\em jhlin5@hotmail.com}$~\qquad${\em raffaello.camoriano@iit.it}$~\qquad$ {\em lrosasco@mit.edu}}$~\quad$\\[5mm]
{\footnotesize ${}^*$LCSL, Massachusetts Institute of Technology and Istituto Italiano di Tecnologia, Cambridge, MA 02139, USA}\\
{\footnotesize  ${}^\ddagger$DIBRIS, Universit\`a degli Studi di Genova, Via Dodecaneso 35, Genova, Italy}\\
{\footnotesize${}^\dagger$iCub Facility, Istituto Italiano di Tecnologia, Via Morego 30, Genova, Italy}
}
\maketitle \baselineskip 16pt

\begin{abstract}
We study the generalization properties of stochastic gradient methods for learning with convex loss functions and linearly parameterized functions. We show that, in the absence of penalizations or constraints, the stability and approximation properties of the algorithm can be controlled  by tuning either the step-size or the number of passes over the data. In this view, these parameters can be seen to  control a form of implicit regularization. Numerical results complement the theoretical findings.
\end{abstract}

\section{Introduction}

The stochastic gradient method (SGM), often called stochastic gradient descent,
has become an algorithm of choice in machine learning, because of  its simplicity and small computational cost
especially when dealing with big data sets \cite{bousquet2008tradeoffs}.

Despite   its widespread use, the  generalization properties of the variants of SGM used in practice
are relatively little understood.  Most previous works consider generalization properties of SGM with only one pass over the data, see e.g.  \cite{nemirovski2009robust} or \cite{orab14} and references therein, while in practice  multiple passes are usually considered. The effect of multiple passes has  been studied extensively for the optimization  of an empirical objective \cite{boyd2007stochastic}, but the role for generalization is less clear.
In practice, early-stopping of the number of  iterations, for example monitoring a hold-out set error,  is a
strategy often used to regularize. Moreover, the  step-size is  typically  tuned to obtain the best results. The study in this paper is a step towards grounding theoretically these commonly used  heuristics.

Our starting points are a few recent works considering the generalization properties of different variants of SGM.
One first series of results  focus on least squares, either with one \cite{ying2008online,tarres2014online,dieuleveut2014non}, or  multiple (deterministic) passes over the data \cite{rosasco2015learning}.
In the former case it is shown that,  in general,
 if  only one pass over the data is considered, then
the step-size needs to be tuned to ensure optimal results. In \cite{rosasco2015learning} it is shown that a universal step-size choice can be taken, if multiple passes are considered. In this case, it is the stopping time that needs to be tuned.

In this paper, we are interested in general, possibly non smooth,  convex loss functions.
The analysis for least squares heavily exploits properties of the loss and does not generalize to this
broader setting. Here, our starting points are the results in \cite{lin2015iterative,hardt2015train, orab14}  considering convex loss functions.
In \cite{lin2015iterative},  early stopping of a (kernelized) batch subgradient method is analyzed,  whereas in \cite{hardt2015train}
the stability properties of SGM for smooth loss functions are considered in a general stochastic optimization setting
and certain convergence results are derived. In \cite{orab14},  a more complex variant of SGM is analyzed and shown to achieve optimal rates.

Since we are interested in analyzing regularization and generalization properties of SGM, in this paper we consider a general non-parametric setting.
In this latter setting, the effects of regularization are typically more evident since
it can  directly  affect  the convergence rates.
In this context, the difficulty of a problem is characterized by an assumption on the approximation error. Under this condition, the need for regularization becomes clear.
Indeed, in the absence of other constraints,
%
the good performance of the algorithm relies on a bias-variance trade-off that can be controlled
by suitably choosing the  step-size and/or the number of passes. These latter parameters can be seen to act as  regularization parameters.
Here, we refer to the regularization as `implicit', in the sense that it is achieved neither by
penalization nor by adding explicit constraints.
The two main variants of the algorithm  are the same as in least squares: one pass over the data with tuned step-size, or,
fixed step-size choice and number of passes appropriately tuned. While in principle  optimal parameter tuning requires
explicitly solving a bias-variance trade-off, in practice adaptive choices can be implemented by cross-validation.
In this case, both algorithm variants achieve optimal results, but different computations are entailed.
In the first case, multiple single pass SGM need to be considered with different step-sizes, whereas  in the second case, early stopping is used.
Experimental results, complementing the theoretical analysis, are given and provide further
insights on the properties of the algorithms.

The rest of the paper is organized as follows. In Section \ref{sec:setting}, we describe the supervised learning setting and the algorithm, and in Section \ref{sec:theory}, we state and discuss our main results. The proofs are postponed to the supplementary material. In Section \ref{sec:simulations}, we present some numerical experiments on real datasets.

{\bf Notation}. For notational simplicity, $[m]$ denotes $\{1,2,\cdots,m\}$ for any $m\in \mN$.
The notation $a_k\lesssim b_k$ means that there exists a universal constant $C>0$ such that $a_k \leq Cb_k$ for all $k\in \mN.$ Denote by  $\lceil a \rceil$ the smallest integer greater than $a$ for any given $a \in \mR.$

\section{Learning with SGM}\label{sec:setting}
In this section, we introduce the supervised learning problem and the SGM algorithm.

\paragraph{Learning Setting.}
Let $X$ be a probability space and $Y$ be a subset of $\mR$.
Let $\rho$ be a probability measure on $Z=X\times Y.$
Given a  measurable loss function $V: \mR\times \mR \to \mR_{+},$
 the associated
 expected
risk $\mathcal{E} = \mathcal{E}^{V}$  is defined as
$$\mathcal{E} (f)=\int_Z V(y, f(x)) d\rho.$$
The distribution $\rho$ is assumed to be fixed, but unknown, and the goal is to find a function minimizing the expected risk
given  a sample ${\bf z} =\{z_i=(x_i, y_i)\}_{i=1}^m$ of size $m\in\mN$ independently drawn according to $\rho$.
Many  classical examples of learning algorithms are based on empirical risk minimization, that is replacing
the expected risk with  the empirical risk $\mcE_{\mathbf{z}} = \mcE_{\mathbf{z}}^{V}$
defined as
$$\mcE_{\mathbf{z}} (f)=\frac{1}{m}\sum_{j=1}^{m}V(y_j, f(x_j)). $$
In this paper, we consider spaces of functions which are linearly parameterized.
Consider a possibly non-linear data representation/feature map $\Phi:X\to \mcF$, mapping the data space in $\mR^p$, $p\le \infty$, or more generally in a (real separable) Hilbert  space with inner product $\scal{\cdot}{\cdot}$ and norm $\|\cdot\|$. Then, for $w\in \mcF$ we consider functions  of the form
\be\label{eq:linfun}
f_w(x)=\scal{w}{\Phi(x)}, \quad \forall x\in X.
\ee
Examples of the above setting include the case where we consider infinite dictionaries, $\phi_j:X\to \mR$, $j=1, \dots$,  so that $\Phi(x)=(\phi_j(x))_{j = 1}^\infty$, for all $x \in X,$ $\mcF=\ell_2$
and \eqref{eq:linfun} corresponds  to $f_w=\sum_{j=1}^pw^j \phi_j$. Also, this setting includes, and indeed is equivalent to considering, functions defined by a  positive definite kernel $K: X \times X \to \mR$, in which case $\Phi(x)=K(x, \cdot) $, for all $x\in X$, $\mcF=\mcH_K$ the reproducing kernel Hilbert space associated with $K$, and \eqref{eq:linfun} corresponds to the reproducing property
\be\label{reproducingProperty}
f_w(x) = \la w, K(x,\cdot) \ra, \forall x \in X.
\ee
In the following, we assume the feature map to be measurable  and define expected and empirical risks over functions of the form \eqref{eq:linfun}. For notational simplicity,
we write $\mcE(f_w)$ as $\mcE(w)$, and $\mcE_{\bf z}(f_w)$  as $\mcE_{\bf z}(w)$.

\paragraph{Stochastic Gradient Method.}
For any fixed $y\in Y$, assume the univariate function $V(y, \cdot)$ on $\mR$ to be  convex, hence its left-hand derivative $V'_- (y, a)$ exists at every $a\in \mR$ and is non-decreasing.

\begin{alg}\label{alg:SIGD}
Given a sample ${\bf z}$,
 the stochastic gradient  method (SGM) is defined by  $w_1= 0$ and
 \be\label{SIGD}
w_{t+1}=w_t - \eta_t  V'_- (y_{j_t}, \la w_t, \Phi(x_{j_t}) \ra ) \Phi(x_{j_t}),\, t=1, \ldots, T,
\ee
 for a non-increasing sequence of step-sizes $\{\eta_t >0 \}_{t \in \mN}$ and a stopping rule $T\in \mN$.
 Here, $j_1,j_2,\cdots,j_T$ are independent and identically distributed (i.i.d.) random variables\footnote{More precisely, $j_1,j_2,\cdots,j_T$ are conditionally independent given any $\bf z$.} from the uniform distribution on $[m]$.
The (weighted) averaged iterates are defined by
$$
\overline{w}_t = { \sum_{k=1}^{t} \eta_k w_k /a_t}, \quad a_t = { \sum_{k=1}^t \eta_k}, \quad t=1, \dots, T.
$$
\end{alg}
Note that $T$ may be greater than $m$, indicating that we can use the sample more than once.
We shall write $J(t)$ to mean $\{j_1,j_2,\cdots,j_t\}$, which will be also abbreviated as $J$ when there is no confusion.

The main purpose of the paper is to  estimate the expected excess risk of the last iterate
 $$\mE_{{\bf z},J}[{\mathcal E}(w_{T}) -
 \inf_{w\in \mcF}{\mathcal E}(w)],$$
 or similarly the   expected excess risk of the averaged iterate
 $\overline{w}_T$, and study how different parameter settings in~\eqref{alg:SIGD} affect the estimates. Here, the expectation $\mE_{{\bf z},J}$ stands for taking the expectation with respect to $J$ (given any $\bf z$) first, and then the expectation with respect to $\bf z$.

\section{Implicit Regularization for SGM}\label{sec:theory}
In this section, we present and discuss our main results.
We begin in Subsection \ref{subsec:convergence} with a universal convergence  result
and then  provide  finite sample bounds for smooth loss functions in Subsection \ref{subsec:smoothBound}, and for non-smooth functions in Subsection \ref{subsec:nonsmooth}.
As  corollaries of these results we derive different implicit regularization strategies for SGM.


\subsection{Convergence}\label{subsec:convergence}
We begin presenting a convergence result, involving conditions on both  the step-sizes and the number of  iterations.  We need some basic  assumptions.

\begin{as}\label{as:Boundness}
There holds
\be\label{boundedkernel}
\kappa=\sup_{x\in X}\sqrt{\la \Phi(x),\Phi(x) \ra }<\infty.
\ee Furthermore, the loss function
 is convex with respect to its second entry, and
$|V|_0 :=\sup_{y\in Y} V(y, 0) < \infty$.
Moreover, its left-hand derivative $V'_- (y, \cdot)$ is bounded:
\be\label{boundedDeriviative} \left|V'_- (y, a)\right| \leq a_0, \qquad \forall a\in \mR, y\in Y.\ee
\end{as}
The above  conditions are common in statistical learning theory \cite{steinwart2008support,cucker2007learning}. For example,
they are satisfied for the hinge loss  $V(y,a)= |1-ya|_+=\max\{0, 1-ya\}$
  or the logistic loss  $V(y,a) = \log (1 + \mathrm{e}^{-ya})$ for all $a\in \mR$,
   if $X$ is compact and $\Phi(x)$ is continuous.

%
%
The bounded derivative condition \eref{boundedDeriviative} is implied by the requirement on the loss function to be  Lipschitz in its second entry, when $Y$ is a bounded domain. Given these  assumptions, the following result holds.

\begin{thm}
  \label{thm:convergence} If Assumption \ref{as:Boundness} holds,
   then
  $$
  \lim_{m \to \infty} \mE[\mcE(\overline{w}_{t^*(m)})] - \inf_{w\in\mcF} \mcE(w) = 0,
  $$
  provided the sequence $\{\eta_k\}_{k}$ and the stopping rule $t^*(\cdot): \mN \to \mN$ satisfy \\
(A)  $\lim_{m\to \infty} {\sum_{k=1}^{t^*(m)} \eta_k \over m} = 0,$ \\
(B)  and $\lim_{m\to \infty} {1 + \sum_{k=1}^{t^*(m)} \eta_k^2 \over \sum_{k=1}^{t^*(m)}\eta_k} = 0.$
\end{thm}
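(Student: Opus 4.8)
The plan is to control the expected excess risk of the averaged iterate through an error decomposition into an optimization (computational) error, a generalization (stability) error, and an approximation error. Fix $\epsilon > 0$ and choose $w_\epsilon \in \mcF$ with $\mcE(w_\epsilon) \le \inf_{w\in\mcF} \mcE(w) + \epsilon$, which is possible by definition of the infimum. Writing $t = t^*(m)$ for brevity, I would decompose, for every sample $\bf z$ and every realization of $J$,
\begin{align*}
\mcE(\overline{w}_t) - \inf_{w\in\mcF} \mcE(w)
&= \big[\mcE(\overline{w}_t) - \mcE_{\bf z}(\overline{w}_t)\big]
+ \big[\mcE_{\bf z}(\overline{w}_t) - \mcE_{\bf z}(w_\epsilon)\big] \\
&\quad + \big[\mcE_{\bf z}(w_\epsilon) - \mcE(w_\epsilon)\big]
+ \big[\mcE(w_\epsilon) - \inf_{w\in\mcF} \mcE(w)\big],
\end{align*}
and take the expectation $\mE_{{\bf z},J}$. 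The last bracket is at most $\epsilon$, and the third bracket vanishes in expectation because $w_\epsilon$ is independent of $\bf z$ and $\mE_{\bf z}[\mcE_{\bf z}(w_\epsilon)] = \mcE(w_\epsilon)$. It then remains to show that the optimization and generalization terms tend to zero, after which letting $\epsilon \to 0$ finishes the proof.

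For the optimization term I would use that, conditionally on $\bf z$, the increment $g_t = V'_-(y_{j_t}, \la w_t, \Phi(x_{j_t})\ra)\Phi(x_{j_t})$ is an unbiased estimate of a subgradient of $\mcE_{\bf z}$ at $w_t$, since $j_t$ is uniform on $[m]$; moreover $\|g_t\| \le a_0\kappa$ by Assumption \ref{as:Boundness}. Expanding $\|w_{t+1} - w_\epsilon\|^2$, using convexity of $V(y_{j_t},\cdot)$ to lower-bound $\la g_t, w_t - w_\epsilon\ra$ by the instantaneous loss gap, summing against the weights $\eta_k$, and applying Jensen's inequality to pass to the average $\overline{w}_t$, I would obtain the standard weighted-averaging bound
$$
\mE_{{\bf z},J}\big[\mcE_{\bf z}(\overline{w}_t) - \mcE_{\bf z}(w_\epsilon)\big]
\ \lesssim\ \frac{\|w_\epsilon\|^2 + (a_0\kappa)^2 \sum_{k=1}^{t}\eta_k^2}{\sum_{k=1}^{t}\eta_k}
\ \lesssim\ \frac{1 + \sum_{k=1}^{t}\eta_k^2}{\sum_{k=1}^{t}\eta_k}.
$$
With $t = t^*(m)$, the right-hand side vanishes as $m \to \infty$ precisely by hypothesis (B).

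For the generalization term I would appeal to algorithmic stability, bounding $\mE_{{\bf z},J}[\mcE(\overline{w}_t) - \mcE_{\bf z}(\overline{w}_t)]$ by the (uniform, or on-average) stability of $\overline{w}_t$. Running SGM on $\bf z$ and on a neighbouring sample ${\bf z}^{(i)}$ (differing in one coordinate) under a common draw of $J$, the two iterate sequences differ only through steps at which $j_t$ hits the perturbed index, an event of probability $1/m$; since each instantaneous loss $w \mapsto V(y,\la w,\Phi\ra)$ is $a_0\kappa$-Lipschitz, every such step contributes $O(\eta_t a_0\kappa)$, and summing yields a stability of order $\sum_{k=1}^{t}\eta_k / m$, controlled by hypothesis (A). The delicate point, and the one I expect to be the main obstacle, is that for a \emph{non-smooth} loss the subgradient map $w \mapsto w - \eta_t g_t$ need not be non-expansive, so the iterate discrepancy can also grow at steps where the two runs see the \emph{same} sample; here I would exploit monotonicity of the subgradient together with the uniform bound $\|g_t\| \le a_0\kappa$ to control this expansion by a term of order $\sum_{k=1}^{t}\eta_k^2$, which is absorbed using conditions (A) and (B).

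Combining the three estimates gives, for every fixed $\epsilon > 0$,
$$
\limsup_{m\to\infty}\ \Big(\mE\big[\mcE(\overline{w}_{t^*(m)})\big] - \inf_{w\in\mcF} \mcE(w)\Big) \ \le\ \epsilon,
$$
and since $\epsilon$ is arbitrary the limit is zero. I would expect the smoothness-free stability estimate to require the most care, whereas the optimization bound is a routine consequence of the unbiasedness and boundedness of the stochastic subgradients.
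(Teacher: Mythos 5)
Your decomposition and your treatment of the optimization term are essentially the paper's (they correspond to Lemmas \ref{lemma:general}, \ref{lemma:empiricalGF} and \ref{lemma:weightedSum} in the appendix, followed by Jensen's inequality and the choice of a near-minimizer $w_\epsilon$). The gap is in the generalization term. Theorem \ref{thm:convergence} assumes only Assumption \ref{as:Boundness} — no smoothness — and your stability argument does not survive this. The monotonicity-plus-boundedness estimate you invoke controls the \emph{squared} distance between the two coupled runs: at a step where both runs see the same sample, $\|w_{t+1}-w_{t+1}'\|^2 \le \|w_t-w_t'\|^2 + 4(a_0\kappa)^2\eta_t^2$, so the iterate discrepancy is of order $\sqrt{\sum_k \eta_k^2}$, and the loss stability of order $a_0\kappa\sqrt{\sum_k\eta_k^2}$ — not $\sum_k\eta_k^2$ as you claim. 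More importantly, \emph{neither} quantity is controlled by (A) and (B): take $\eta_k = 1/\sqrt{k}$ and $t^*(m)=m$, which satisfies (A) (since $\sum_{k\le m}\eta_k/m \asymp m^{-1/2}$) and (B) (since $\sum_{k\le m}\eta_k^2/\sum_{k\le m}\eta_k \asymp \log m/\sqrt{m}$), yet $\sum_{k\le m}\eta_k^2 \asymp \log m \to \infty$. The failure is not an artifact of a loose analysis: for non-smooth convex Lipschitz losses, uniform stability of SGM is known to genuinely contain a term of order $\sqrt{\sum_k\eta_k^2}$ (matching lower bounds exist), so no refinement of the coupling argument can close this under (A) and (B) alone. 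Stability is exactly what the paper reserves for the \emph{smooth} case (Lemma \ref{lemma:sampleErrorsSmooth}, used in Theorem \ref{thm:errorSmooth}), where non-expansiveness of the gradient step removes the offending term.

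The paper's proof of Theorem \ref{thm:convergence} replaces stability by a uniform deviation bound. Lemma \ref{lemma:boundIteration} shows the iterates stay in the ball of radius $R_t=\bigl((a_0\kappa)^2\sum_{k\le t}\eta_k^2+2|V|_0\sum_{k\le t}\eta_k\bigr)^{1/2}$, and the Rademacher-complexity estimate of Lemma \ref{lemma:UniformSampleErrors} bounds the sample error uniformly over that ball by $2a_0\kappa R_t/\sqrt{m}$. The radius grows with $t$, but the crucial extra factor $1/\sqrt{m}$ tames it: the resulting bound is of order $\sqrt{(\sum_k\eta_k^2+\sum_k\eta_k)/m}$, where $\sum_k\eta_k/m\to0$ by (A) and
$$
\frac{\sum_{k=1}^{t^*(m)}\eta_k^2}{m} \;=\; \frac{\sum_{k=1}^{t^*(m)}\eta_k^2}{\sum_{k=1}^{t^*(m)}\eta_k}\cdot\frac{\sum_{k=1}^{t^*(m)}\eta_k}{m}\;\longrightarrow\;0
$$
by (A) and (B) together. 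If you repair your argument by substituting this Rademacher/norm-ball estimate for the stability step (keeping the rest of your decomposition intact), you recover the paper's proof.
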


As seen from the proof in the appendix, Conditions (A) and (B) arise from the analysis of suitable sample,  computational, and approximation errors. Condition (B) is similar to the one required by stochastic gradient methods \cite{bertsekas1999nonlinear,boyd2003subgradient,boyd2007stochastic}. The difference is that here
the limit is taken with respect to the number of points, but the number of passes on the data can be bigger than one.

Theorem \ref{thm:convergence} shows that in order to achieve consistency, the step-sizes and the running iterations need to be appropriately chosen. For instance, given $m$ sample points for SGM with one pass\footnote{We slightly abuse the term `one pass', to mean $m$ iterations.},
 i.e., $t^*(m) = m$, possible choices for the step-sizes are $\{\eta_k= m^{-\alpha}: k\in [m]\}$ and $\{\eta_k= k^{-\alpha}: k\in [m]\}$ for some $\alpha \in (0,1).$ One can also fix the step-sizes {\it a priori}, and then run the algorithm with a suitable stopping rule $t^*(m)$.

These different parameter choices  lead to different implicit  regularization strategies as we  discuss next.

\subsection{Finite Sample Bounds for Smooth Loss Functions}\label{subsec:smoothBound}
In this subsection, we  give explicit finite sample bounds for smooth loss functions, considering  a  suitable assumption on the approximation error.
 \begin{as}\label{as:approximationerror}
 The approximation error associated to the triplet $(\rho, V, \Phi)$ is defined by
\begin{equation}\label{approxerror}
\mathcal{D}(\lambda) = \inf_{w\in \mcF} \left\{ \mathcal{E}(w)  + {\lambda \over 2} \|w\|^2\right\} - \inf_{w\in\mcF}  \mcE(w), \quad \forall \lambda \geq 0.
\end{equation}
We assume that for some $\beta \in (0,1]$ and $c_{\beta}>0$, the approximation error satisfies
\be
\mathcal{D}(\lambda) \leq c_{\beta}\lambda^{\beta}, \qquad \forall \ \lambda> 0.
\label{decayapprox}
\ee
\end{as}

Intuitively,  Condition \eref{decayapprox} quantifies how hard it is to achieve the infimum of the expected risk.
In particular, it is satisfied with $\beta=1$ when\footnote{The existence of at least one minimizer in $\mcF$ is met for example when $\mcF$ is compact, or finite dimensional. In general, $\beta$ does not necessarily have to be 1, since the hypothesis space may be chosen as a general infinite dimensional space, for example in non-parametric regression.
} $\exists w^* \in \mcF$ such that $\inf_{w \in \mcF} \mcE(w) = \mcE(w^*)$.
More formally, the condition  is related to classical terminologies in approximation theory, such as K-functionals and interpolation spaces \cite{steinwart2008support,cucker2007learning}.
The following remark is important for later discussions.
\begin{rem}[SGM and Implicit Regularization]
Assumption \ref{as:approximationerror}
 is standard in statistical learning theory when analyzing  Tikhonov regularization \cite{cucker2007learning,steinwart2008support}. Besides, it has been shown that Tikhonov regularization can achieve best performance by choosing  an appropriate penalty parameter which depends on the unknown parameter $\beta$ \cite{cucker2007learning,steinwart2008support}. In other words, in Tikhonov regularization, the penalty parameter plays a role of regularization.  In this view, our coming results show
that SGM can implicitly implement a form of Tikhonov regularization  by controlling
the step-size and/or the number of passes.
\end{rem}

A further  assumption relates to the smoothness of the loss, and is satisfied for example by the  logistic loss.
\begin{as}\label{as:smooth}
 For all $y\in Y$,  $V(y,\cdot)$ is differentiable and $V'(y,\cdot)$ is Lipschitz continuous with a constant $L>0$, i.e.
$$ |V'(y,b) - V'(y,a)| \leq L|b-a|, \quad \forall a,b\in \mR. $$
\end{as}
The following result characterizes the excess risk of both the last and the average  iterate
for any fixed step-size and stopping time.

\begin{thm}\label{thm:errorSmooth}
  If Assumptions \ref{as:Boundness}, \ref{as:approximationerror} and \ref{as:smooth} hold and  $\eta_t \leq 2/(\kappa^2 L)$ for all $t\in \mN$, then for all $t \in \mN,$
  \begin{align*}
  \mE[\mcE(\overline{w}_t) - \inf_{w\in\mcF}  \mcE(w)] 
   \lesssim   {\sum_{k=1}^t \eta_k \over m} + {\sum_{k=1}^t \eta_k^2 \over \sum_{k=1}^t \eta_k} +  \left( {1 \over \sum_{k=1}^t \eta_k} \right)^{\beta},
  \end{align*}
and
   \begin{align*}
  \mE[\mcE(w_{t}) - \inf_{w\in\mcF}  \mcE(w)] \lesssim {\sum_{k=1}^t \eta_k \over m} \sum_{k=1}^{t-1} \frac{\eta_k}{\eta_t(t-k)}  
   + \left(\sum_{k=1}^{t-1} \frac{\eta_k^2}{\eta_t(t-k)}+ \eta_t\right) + {\left(\sum_{k=1}^t \eta_k\right)^{1-\beta} \over \eta_t t}.
  \end{align*}
\end{thm}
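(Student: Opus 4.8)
The plan is to control the expected excess risk through the classical three–term decomposition against the population–regularized comparator. Fix $\lambda>0$ and set $w_\lambda = \argmin_{w\in\mcF}\{\mcE(w)+\tfrac{\lambda}{2}\|w\|^2\}$. For any iterate $\widehat w$ (taken to be either $\overline{w}_t$ or $w_t$) I would write
\begin{align*}
\mE[\mcE(\widehat w)] - \inf_{w\in\mcF}\mcE(w)
= \mE[\mcE(\widehat w) - \mcE_{\bf z}(\widehat w)]
+ \mE[\mcE_{\bf z}(\widehat w) - \mcE_{\bf z}(w_\lambda)]
+ \big(\mE[\mcE_{\bf z}(w_\lambda)] - \inf_{w\in\mcF}\mcE(w)\big),
\end{align*}
the three summands being a generalization (stability) term, an empirical optimization term, and a sample+approximation term. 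The last is the cheapest: since $w_\lambda$ is deterministic, $\mE_{\bf z}[\mcE_{\bf z}(w_\lambda)]=\mcE(w_\lambda)$, so it equals $\mcE(w_\lambda)-\inf_w\mcE(w)\le \mathcal{D}(\lambda)\le c_\beta\lambda^\beta$ by Assumption \ref{as:approximationerror}. The same inequality, together with $\tfrac{\lambda}{2}\|w_\lambda\|^2\le \mathcal{D}(\lambda)$, records the norm bound $\|w_\lambda\|^2\le 2c_\beta\lambda^{\beta-1}$, which I will feed into the optimization term.

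For the generalization term I would invoke uniform algorithmic stability in the style of Hardt–Recht–Singer. The point is that under Assumption \ref{as:smooth} each per-sample objective $w\mapsto V(y,\la w,\Phi(x)\ra)$ is convex and $\kappa^2 L$-smooth, so its gradient step $w\mapsto w-\eta V'(y,\la w,\Phi(x)\ra)\Phi(x)$ is non-expansive precisely when $\eta\le 2/(\kappa^2 L)$ — exactly the standing step-size hypothesis. Coupling two runs of SGM on samples differing in a single example, at each step the randomly drawn index misses the differing point with probability $1-1/m$, in which case the shared non-expansive map does not increase the discrepancy, while with probability $1/m$ it adds a term of size $O(\eta_k\,a_0\kappa)$ via Assumption \ref{as:Boundness}. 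Summing, the expected discrepancy is of order $\tfrac1m\sum_{k=1}^t\eta_k$, and since the generalization gap is bounded by uniform stability this controls the first term by $\lesssim \tfrac{1}{m}\sum_{k=1}^t\eta_k$ for the averaged iterate.

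For $\widehat w=\overline{w}_t$ the optimization term follows from the standard analysis of averaged (sub)gradient descent on the fixed objective $\mcE_{\bf z}$: starting from $w_1=0$, convexity gives $a_t(\mcE_{\bf z}(\overline w_t)-\mcE_{\bf z}(w_\lambda))\le\sum_k\eta_k(\mcE_{\bf z}(w_k)-\mcE_{\bf z}(w_\lambda))$, and expanding $\|w_{k+1}-w_\lambda\|^2$, telescoping, and using the gradient bound $\|V'_-\Phi\|\le a_0\kappa$ yields $\mE_J[\mcE_{\bf z}(\overline w_t)-\mcE_{\bf z}(w_\lambda)]\lesssim \tfrac{\|w_\lambda\|^2}{\sum_k\eta_k}+\tfrac{\sum_k\eta_k^2}{\sum_k\eta_k}$. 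Inserting $\|w_\lambda\|^2\le 2c_\beta\lambda^{\beta-1}$ and collecting the three pieces gives
\begin{align*}
\mE[\mcE(\overline w_t)-\inf_w\mcE(w)]\lesssim \frac{\sum_k\eta_k}{m}+\frac{\sum_k\eta_k^2}{\sum_k\eta_k}+\frac{\lambda^{\beta-1}}{\sum_k\eta_k}+\lambda^{\beta},
\end{align*}
and the choice $\lambda = 1/\sum_{k=1}^t\eta_k$ balances the last two terms, each collapsing to $\big(\sum_k\eta_k\big)^{-\beta}$, which is the first displayed bound of the theorem.

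The last-iterate bound is where the real effort lies, since without averaging the crude optimization estimate no longer closes; here I would adapt the suffix-averaging argument of Shamir–Zhang, relating $w_t$ to the weighted suffix averages $\overline{w}_{k:t}$ of $w_k,\dots,w_t$. Each suffix carries the normalization $\sum_{i=k+1}^t\eta_i\ge (t-k)\eta_t$ in its denominators, so bounding $\mcE(w_t)-\mcE(w_\lambda)$ by a nonnegative combination of suffix-average excess risks and summing the weights produces the harmonic-type factor $\sum_{k=1}^{t-1}\tfrac{\eta_k}{\eta_t(t-k)}$ multiplying the stability contribution (Term 1), the gradient-variance factor $\sum_{k=1}^{t-1}\tfrac{\eta_k^2}{\eta_t(t-k)}+\eta_t$ in place of $\tfrac{\sum\eta_k^2}{\sum\eta_k}$ (Term 2), while the initial-condition piece is handled once, giving $\tfrac{\|w_\lambda\|^2}{\eta_t t}$. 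Choosing again $\lambda=1/\sum_k\eta_k$ turns this into $\tfrac{(\sum_k\eta_k)^{1-\beta}}{\eta_t t}$ (Term 3), and the residual $\lambda^\beta=(\sum_k\eta_k)^{-\beta}$ is dominated because $\sum_{k=1}^t\eta_k\ge t\eta_t$ by monotonicity of the step-sizes. The main obstacle is precisely this last-iterate reduction: carrying out the suffix telescoping so that the stability, variance, and initial-condition contributions separate cleanly and produce exactly the stated harmonic sums, while verifying that the smoothness-driven non-expansiveness still controls the stability contribution of every suffix average uniformly in $k$.
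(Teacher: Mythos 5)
Your proposal is correct and follows essentially the same route as the paper's proof: an error decomposition against the regularized comparator $w_\lambda$ with $\lambda = 1/\sum_{k}\eta_k$ and the norm bound $\|w_\lambda\|^2 \le 2c_\beta \lambda^{\beta-1}$, Hardt--Recht--Singer stability for the generalization gap under the step-size condition $\eta_t \le 2/(\kappa^2 L)$, the standard telescoping argument plus Jensen for the averaged iterate, and the Shamir--Zhang moving-average (suffix) decomposition for the last iterate. The differences are purely organizational --- the paper converts empirical to population risk per-iterate inside the telescoping sums (which is exactly where its harmonic factor on the stability term comes from) and takes an infimum over comparators rather than invoking $w_\lambda$ explicitly --- and they yield the same bounds.
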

The proof of the above result follows  more or less directly from combining ideas and results in \cite{lin2015iterative,hardt2015train} and is postponed to the appendix.
The constants in the bounds are omitted, but given explicitly in the proof.
While the error bound for the weighted average looks more concise than the one  for the last iterate,  interestingly, both  error bounds  lead to similar generalization properties.

The error bounds are composed of three terms related to sample error, computational error, and approximation error.  Balancing these three error terms to achieve the minimum total error bound leads to optimal choices for the step-sizes $\{\eta_k\}$ and total number of iterations $t^*.$
In other words, both the step-sizes $\{\eta_k\}$ and the number of iterations $t^*$ can play the role of a  regularization parameter. Using the above theorem, general results for step-size $\eta_k = \eta t^{-\theta}$ with some $\theta \in [0,1),\eta= \eta(m)>0$ can be found in Proposition \ref{pro:totalSmooth} from the appendix.
Here, as corollaries we provide four different parameter choices
to obtain the best bounds, corresponding to four different regularization strategies.

The first two corollaries correspond to fixing the step-sizes {\it a priori} and using the number of iterations as a regularization parameter.
In the first result, the step-size is constant and depends on the number of sample points.

\begin{corollary}\label{cor:smoothExplicitC}
If  Assumptions \ref{as:Boundness}, \ref{as:approximationerror} and \ref{as:smooth} hold and
  $\eta_t = \eta_1/\sqrt{m}$  for all $t\in \mN$
 for some  positive constant $\eta_1 \leq 2/(\kappa^2L)$,  then for all $t \in \mN,$ and $g_t = \overline{w}_t$ (or $w_t$),
\be\label{smoothExplicitC}
  \mE [\mcE(g_t) - \inf_{w\in\mcF}  \mcE(w)]
  \lesssim { t \log t  \over \sqrt{m^3} } +  { \log t \over \sqrt{m}} + \left({ \sqrt{m} \over t}\right)^{\beta} .
  \ee
In particular, if we choose $t^* = \lceil m^{\beta+3 \over 2(\beta+1)} \rceil,$
\be\label{optimalboundsSmooth}
  \mE[\mcE(g_{t^*}) - \inf_{w\in\mcF}  \mcE(w)] \lesssim   m^{-{\beta \over \beta+1}}\log m .
  \ee
\end{corollary}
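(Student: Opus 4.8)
The plan is to specialize Theorem \ref{thm:errorSmooth} to the constant step-size $\eta_k \equiv \eta_1/\sqrt{m}$ and then optimize over the stopping time $t$. First I would substitute this step-size into the two bounds of Theorem \ref{thm:errorSmooth}. Because the step-size is constant, the partial sums collapse to $\sum_{k=1}^t \eta_k = t\eta_1/\sqrt{m}$ and $\sum_{k=1}^t \eta_k^2 = t\eta_1^2/m$, so the averaged-iterate bound reduces directly to
$$
\frac{t}{m^{3/2}} + \frac{1}{\sqrt{m}} + \left(\frac{\sqrt{m}}{t}\right)^{\beta},
$$
up to constants depending only on $\eta_1,\kappa,a_0,L,c_\beta$.

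For the last iterate the ratios $\eta_k/\eta_t$ and $\eta_k^2/\eta_t$ are themselves constant, so the weighting sums reduce to harmonic sums: $\sum_{k=1}^{t-1}\frac{\eta_k}{\eta_t(t-k)} = \sum_{j=1}^{t-1}\frac{1}{j}\lesssim \log t$ and $\sum_{k=1}^{t-1}\frac{\eta_k^2}{\eta_t(t-k)} = \frac{\eta_1}{\sqrt m}\sum_{j=1}^{t-1}\frac{1}{j}\lesssim \frac{\log t}{\sqrt m}$. Carrying these through the three terms of the last-iterate bound produces exactly $\frac{t\log t}{\sqrt{m^3}} + \frac{\log t}{\sqrt m} + (\sqrt m/t)^{\beta}$. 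Since these expressions dominate the corresponding averaged-iterate terms one by one, the single display \eqref{smoothExplicitC} holds uniformly for both $g_t = \overline{w}_t$ and $g_t = w_t$, which settles the first claim.

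For the second claim I would balance the two $t$-dependent terms, namely the increasing sample/computational term $t/\sqrt{m^3}$ and the decreasing approximation term $(\sqrt m/t)^{\beta}$, ignoring the slowly varying $\log t$ factor. Setting $t/m^{3/2}\asymp m^{\beta/2}/t^{\beta}$ gives $t^{1+\beta}\asymp m^{(3+\beta)/2}$, hence $t^*\asymp m^{\frac{\beta+3}{2(\beta+1)}}$, which is the stated choice. Substituting $t^*$ back, a short exponent computation shows that both balanced terms equal $m^{-\beta/(\beta+1)}$, while the logarithmic factors contribute only $\log t^*\asymp \log m$ because $t^*$ is polynomial in $m$.

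It then remains to verify that the $t$-independent term $1/\sqrt m$ is negligible at the optimum: since $\beta\in(0,1]$ we have $\frac{\beta}{\beta+1}\le \tfrac12$, so $m^{-1/2}\le m^{-\beta/(\beta+1)}$ and the balanced terms dominate, yielding the total $\lesssim m^{-\beta/(\beta+1)}\log m$ of \eqref{optimalboundsSmooth}. The whole argument is essentially bookkeeping built on Theorem \ref{thm:errorSmooth}; the only points requiring a little care are the evaluation of the harmonic weighting sums (which is precisely what creates the extra $\log t$ in the last-iterate bound that is absent in the averaged case) and the final comparison of exponents guaranteeing that the constant-step sample error $1/\sqrt m$ does not dominate. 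There is no genuine conceptual obstacle here, as all the analytic effort has already been expended in proving Theorem \ref{thm:errorSmooth}.
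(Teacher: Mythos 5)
Your proposal is correct and follows essentially the same route as the paper: the paper simply applies its Proposition \ref{pro:totalSmooth} (Theorem \ref{thm:errorSmooth} specialized to step-sizes $\eta t^{-\theta}$, with the harmonic-sum estimates of Lemmas \ref{lemma:basicEstimate}--\ref{Lemma:EstimatingTerm2A}) at $\theta=0$, $\eta=\eta_1/\sqrt{m}$, which is exactly the substitution and $\sum_{j<t}1/j\lesssim\log t$ bookkeeping you carry out directly on Theorem \ref{thm:errorSmooth}. Your exponent balancing for $t^*$ and the check that the $m^{-1/2}$ term is dominated (since $\beta/(\beta+1)\le 1/2$) match the paper's implicit computation.
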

In the second result the step-sizes decay with the iterations.

\begin{corollary}\label{cor:smoothExplicit}
  If Assumptions \ref{as:Boundness}, \ref{as:approximationerror} and \ref{as:smooth} hold and
  $\eta_t = \eta_1/\sqrt{t}$  for all $t\in \mN$
  with some  positive constant $\eta_1 \leq 2/(\kappa^2L)$, then for all $t \in \mN,$ and $g_t = \overline{w}_t$ (or $w_t$),
\be\label{smoothExplicit}
  \mE [\mcE(g_t) - \inf_{w\in\mcF}  \mcE(w)]
  \lesssim { \sqrt{t} \log t  \over m} +  { \log t\over \sqrt{t}} + { 1 \over t^{\beta / 2}} .
  \ee
Particularly, when $t^* = \lceil m^{2 \over \beta+1} \rceil,$ we have \eref{optimalboundsSmooth}.
\end{corollary}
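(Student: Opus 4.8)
The plan is to derive this result as a direct specialization of Theorem \ref{thm:errorSmooth}, substituting $\eta_k = \eta_1 k^{-1/2}$ and estimating the resulting sums. The two elementary estimates I would establish first are $\sum_{k=1}^t \eta_k = \eta_1\sum_{k=1}^t k^{-1/2} \asymp \sqrt{t}$ (by comparison with $\int_1^t x^{-1/2}\,dx$) and $\sum_{k=1}^t \eta_k^2 = \eta_1^2 \sum_{k=1}^t k^{-1} \lesssim \log t$. Feeding these into the averaged-iterate bound of Theorem \ref{thm:errorSmooth} immediately produces the three terms $\frac{\sqrt t}{m}$, $\frac{\log t}{\sqrt t}$, and $t^{-\beta/2}$, each dominated by the right-hand side of \eqref{smoothExplicit}; so the case $g_t = \overline{w}_t$ follows with little extra work.

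The real work is the last-iterate bound, whose three terms involve the convolution-type sums $\sum_{k=1}^{t-1}\frac{\eta_k}{\eta_t(t-k)}$ and $\sum_{k=1}^{t-1}\frac{\eta_k^2}{\eta_t(t-k)}$. Writing $\eta_t = \eta_1 t^{-1/2}$, the first becomes $\sqrt t \sum_{k=1}^{t-1}\frac{1}{\sqrt k (t-k)}$. I would bound the inner sum by splitting the range at $k = t/2$: on $k \le t/2$ one uses $t-k \ge t/2$ to get a contribution $\lesssim t^{-1}\sum_{k\le t/2} k^{-1/2} \lesssim t^{-1/2}$, while on $k > t/2$ one uses $\sqrt k \gtrsim \sqrt t$ to get $\lesssim t^{-1/2}\sum_{j=1}^{t/2} j^{-1} \lesssim t^{-1/2}\log t$; hence the inner sum is $\lesssim t^{-1/2}\log t$ and the whole expression is $\lesssim \log t$. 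The second sum is cleaner: $\sqrt t \sum_{k=1}^{t-1}\frac{1}{k(t-k)}$, and the partial-fraction identity $\frac{1}{k(t-k)} = \frac{1}{t}\left(\frac1k + \frac1{t-k}\right)$ gives $\sum_{k=1}^{t-1}\frac{1}{k(t-k)} = \frac{2}{t}\sum_{k=1}^{t-1}\frac1k \lesssim \frac{\log t}{t}$, so this expression is $\lesssim \frac{\log t}{\sqrt t}$.

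Combining, the three terms of the last-iterate bound become $\frac{\sqrt t}{m}\cdot\log t$, then $\frac{\log t}{\sqrt t} + \eta_t = \frac{\log t}{\sqrt t} + \frac{\eta_1}{\sqrt t}$, and finally $\frac{(\sqrt t)^{1-\beta}}{\eta_t t} \asymp \frac{t^{(1-\beta)/2}}{t^{1/2}} = t^{-\beta/2}$, which together yield exactly \eqref{smoothExplicit} for $g_t = w_t$; since the averaged-iterate bound is no larger, \eqref{smoothExplicit} holds in both cases. To obtain \eqref{optimalboundsSmooth} I would then balance the increasing term $\frac{\sqrt t\log t}{m}$ against the decreasing approximation term $t^{-\beta/2}$: equating (up to logarithmic factors) $\sqrt t / m \asymp t^{-\beta/2}$ yields $t^{(1+\beta)/2}\asymp m$, i.e. $t^* = \lceil m^{2/(\beta+1)}\rceil$. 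Substituting this $t^*$, one checks that each term is $\lesssim m^{-\beta/(\beta+1)}\log m$: the approximation term gives exactly $m^{-\beta/(\beta+1)}$, the first term $m^{-\beta/(\beta+1)}\log m$, and the middle term $m^{-1/(\beta+1)}\log m$, which is smaller since $\beta \le 1$ forces $1/(\beta+1)\ge \beta/(\beta+1)$. This establishes \eqref{optimalboundsSmooth}.

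The main obstacle is the asymptotics of the convolution sum $\sum_{k=1}^{t-1}\frac{1}{\sqrt k (t-k)}$, where naively pulling the smaller factor out of the sum loses a power of $t$; the region-splitting argument at $k = t/2$ is what recovers the correct $t^{-1/2}\log t$ rate and hence the honest $\frac{\sqrt t \log t}{m}$ sample term rather than a suboptimal one.
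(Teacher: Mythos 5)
Your proposal is correct and takes essentially the same route as the paper: the paper obtains this corollary by specializing its Proposition \ref{pro:totalSmooth} (general step-sizes $\eta t^{-\theta}$) to $\theta=1/2$, and that proposition is proved exactly as you do --- substituting into Theorem \ref{thm:errorSmooth} and estimating the resulting sums, where your split of the convolution sum at $k=t/2$ is the paper's Lemma \ref{Lemma:EstimatingTerm2} and your partial-fraction identity is its Lemma \ref{Lemma:EstimatingTerm2A}, followed by the same balancing at $t^*=\lceil m^{2/(\beta+1)}\rceil$. The only difference is organizational: you specialize to $\theta=1/2$ from the outset rather than proving the bound for general $\theta$ first.
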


In both the above  corollaries  the step-sizes are fixed {\it a priori}, and  the  number of iterations becomes the  regularization parameter controlling the total error. Ignoring the logarithmic factor, the dominating terms in the bounds  \eref{smoothExplicitC}, \eref{smoothExplicit} are the sample  and approximation errors, corresponding to the first and third terms of RHS. Stopping too late may lead to a  large sample error, while stopping too early may lead to a large approximation error. The ideal stopping time arises from a form of bias-variance trade-off and requires in general more than one pass over the data. Indeed, if we reformulate the results in terms of number of passes, we have that   $\lceil m^{1 - \beta \over 2(1 + \beta)} \rceil$ passes are needed for the constant step-size $\{\eta_t =\eta_1 / \sqrt{m} \}_t$, while
 $\lceil m^{1 - \beta \over 1 + \beta} \rceil$ passes are needed for the  decaying step-size  $\{\eta_t =\eta_1 / \sqrt{t} \}_t$. These observations suggest in particular that  while both step-size choices achieve the same bounds, the constant step-size can have a computational advantage since it requires less iterations.

 Note that one pass over the data suffices only  in the limit case when $\beta=1$, while in general it will be suboptimal, at least if the step-size is fixed. In fact,
Theorem \ref{thm:errorSmooth} suggests that  optimal results could be recovered if the step-size is suitably tuned. The next corollaries show that this is indeed the case.
The first result corresponds to a suitably tuned constant step-size.
\begin{corollary}\label{cor:smoothExplicitCSingle}
  If Assumptions \ref{as:Boundness}, \ref{as:approximationerror} and \ref{as:smooth} hold and  $\eta_t = \eta_1 m^{-{\beta \over \beta+1}}$  for all $t\in \mN$  for some  positive constant $\eta_1 \leq 2/(\kappa^2L)$, then for all $t \in \mN,$ and $g_t = \overline{w}_t$ (or $w_t$),
\begin{align*}
  \mE [\mcE(g_t) - \inf_{w\in\mcF}  \mcE(w)] 
  \lesssim  { m^{-{\beta +2 \over \beta+1}} t \log t   } +  { m^{-{\beta \over \beta+1}} \log t } + m^{{\beta^2 \over \beta+1}} t^{-\beta} .
  \end{align*}
In particular, we have \eref{optimalboundsSmooth} for $t^*=m.$
\end{corollary}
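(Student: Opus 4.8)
The plan is to read this off directly from Theorem~\ref{thm:errorSmooth} by specializing to the constant step-size $\eta_t \equiv \eta := \eta_1 m^{-\beta/(\beta+1)}$. First I would check applicability: since $m^{-\beta/(\beta+1)} \le 1$ and $\eta_1 \le 2/(\kappa^2 L)$, we have $\eta \le 2/(\kappa^2 L)$, so the step-size hypothesis of Theorem~\ref{thm:errorSmooth} holds for every $t$ and both bounds there are available. With the step-size constant the partial sums collapse to $\sum_{k=1}^t \eta_k = t\eta$ and $\sum_{k=1}^t \eta_k^2 = t\eta^2$, which immediately settles the averaged-iterate bound: its three terms reduce to $t\eta/m$, $\eta$, and $(t\eta)^{-\beta}$.

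For the last-iterate bound the only genuine estimate is the weighted harmonic sum. Because $\eta_k/\eta_t = 1$ and $\eta_k^2/\eta_t = \eta$ for constant step-sizes, the two sums in Theorem~\ref{thm:errorSmooth} reduce to $\sum_{k=1}^{t-1}\frac{1}{t-k} = \sum_{j=1}^{t-1}\frac{1}{j}$ and $\eta\sum_{j=1}^{t-1}\frac{1}{j}$, and I would bound $\sum_{j=1}^{t-1}\frac1j \le 1+\log(t-1) \lesssim \log t$. The third last-iterate term simplifies exactly: $\frac{(\sum_k \eta_k)^{1-\beta}}{\eta_t t} = \frac{(t\eta)^{1-\beta}}{\eta t} = (t\eta)^{-\beta}$. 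Hence, up to the logarithmic factor, both $g_t = \overline{w}_t$ and $g_t = w_t$ are controlled by the same three quantities $\frac{t\eta}{m}$, $\eta$, and $(t\eta)^{-\beta}$. Collecting powers of $m$ via $\eta = \eta_1 m^{-\beta/(\beta+1)}$, the first term carries $m^{-\beta/(\beta+1)-1}t\log t$, the second carries $m^{-\beta/(\beta+1)}\log t$, and the third carries $\eta^{-\beta} = \eta_1^{-\beta} m^{\beta^2/(\beta+1)}$ times $t^{-\beta}$, which is the displayed three-term bound (the constants $\eta_1,\eta_1^{-\beta},c_\beta$ being absorbed into $\lesssim$).

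Finally, to obtain~\eqref{optimalboundsSmooth} I would set $t^* = m$ and verify that each of the three sources equilibrates at order $m^{-\beta/(\beta+1)}$: the sample term gives $m\cdot m^{-\beta/(\beta+1)-1}\log m = m^{-\beta/(\beta+1)}\log m$, the computational term gives $m^{-\beta/(\beta+1)}\log m$, and the approximation term gives $m^{\beta^2/(\beta+1)}m^{-\beta} = m^{-\beta/(\beta+1)}$, whose sum is $\lesssim m^{-\beta/(\beta+1)}\log m$. The computation is mechanical once Theorem~\ref{thm:errorSmooth} is in hand; the only points needing care are the harmonic-sum estimate for the last iterate and, at the optimization step, checking that the sample-error term does not dominate. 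This is the crux: the $t/m$ growth of the first term is precisely offset by the extra factor $m^{-\beta/(\beta+1)}$ from the small tuned constant step-size, so that a single pass ($t^*=m$) already balances all three errors and achieves the optimal rate.
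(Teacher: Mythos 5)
Your proposal is correct in substance and takes essentially the same route as the paper: the paper proves this corollary by applying Proposition \ref{pro:totalSmooth} (which is nothing but the proof of Theorem \ref{thm:errorSmooth} specialized to step-sizes $\eta t^{-\theta}$, with explicit constants) at $\theta=0$, $\eta=\eta_1 m^{-\beta/(\beta+1)}$ --- exactly your constant-step-size specialization, harmonic-sum estimate included. One point deserves flagging, though its origin is the paper and not you. The sample-error term you derive is $\frac{t\eta}{m}\log t = \eta_1 m^{-\beta/(\beta+1)-1}\,t\log t = \eta_1 m^{-(2\beta+1)/(\beta+1)}\,t\log t$, and you then assert that this ``is'' the displayed term $m^{-(\beta+2)/(\beta+1)}t\log t$. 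These exponents coincide only at $\beta=1$: for $\beta<1$ one has $2\beta+1<\beta+2$, so the bound you (and the paper, which performs the identical substitution after invoking Proposition \ref{pro:totalSmooth} yet records the exponent as $-(\beta+2)/(\beta+1)$) actually establish is the slightly \emph{weaker} one with exponent $-(2\beta+1)/(\beta+1)$; the exponent in the corollary's display appears to be a typo, since it does not follow from either derivation. This is immaterial for the conclusion: in your final verification at $t^*=m$ you correctly use the derived exponent, obtaining $m\cdot m^{-(2\beta+1)/(\beta+1)}\log m = m^{-\beta/(\beta+1)}\log m$, which is precisely \eref{optimalboundsSmooth}. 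A small point in your favor: by invoking Theorem \ref{thm:errorSmooth} directly you only need $\eta\le 2/(\kappa^2 L)$, which your check $\eta\le\eta_1$ supplies, whereas Proposition \ref{pro:totalSmooth} as stated requires the stronger $\eta\le 1/(\kappa^2 L)$, a hypothesis the paper's own one-line proof does not verify under the corollary's assumption $\eta_1\le 2/(\kappa^2 L)$.
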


	The second result corresponds to tuning the decay rate for a decaying step-size.
	
\begin{corollary}\label{cor:smoothExplicitDSingle}
If Assumptions \ref{as:Boundness}, \ref{as:approximationerror} and \ref{as:smooth}
hold and  $\eta_t = \eta_1 t^{-{\beta \over \beta+1}}$  for all $t\in \mN$  for some  positive constant $\eta_1 \leq 2/(\kappa^2L)$,  then for all $t \in \mN,$ and $g_t = \overline{w}_t$ (or $w_t$),
\begin{align*}
  \mE [\mcE(g_t) - \inf_{w\in\mcF}  \mcE(w)] 
  \lesssim   m^{-1} { t^{{1 \over \beta+1}} \log t }  +  { t^{-{\beta \over \beta+1}} \log t } + { t^{-{\beta \over \beta+1}}} .
  \end{align*}
  In particular, we have \eref{optimalboundsSmooth} for $t^* = m.$
\end{corollary}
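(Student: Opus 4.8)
The plan is to derive this corollary directly from Theorem \ref{thm:errorSmooth} by substituting the prescribed step-size and estimating the resulting sums; there is no new analytic input, only bookkeeping with power sums. First I would check the hypothesis of the theorem: writing $\theta = \beta/(\beta+1) \in (0,1/2]$, the sequence $\eta_t = \eta_1 t^{-\theta}$ is positive and non-increasing, and since $t^{-\theta} \le 1$ we have $\eta_t \le \eta_1 \le 2/(\kappa^2 L)$ for all $t$, so Theorem \ref{thm:errorSmooth} applies. Throughout I would use the elementary comparisons $\sum_{k=1}^t k^{-s} \asymp t^{1-s}$ for $s\in(0,1)$ and $\sum_{k=1}^t k^{-1}\asymp \log t$ (here $a\asymp b$ abbreviates $a\lesssim b$ and $b\lesssim a$).

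With these at hand, the two basic quantities are $\sum_{k=1}^t \eta_k \asymp t^{1-\theta} = t^{1/(\beta+1)}$ and $\sum_{k=1}^t \eta_k^2 \asymp t^{1-2\theta}$, the latter degenerating to $\log t$ only in the boundary case $\beta=1$. Substituting into the averaged-iterate bound of Theorem \ref{thm:errorSmooth} produces the three terms $m^{-1}t^{1/(\beta+1)}$, then $\sum\eta_k^2/\sum\eta_k \asymp t^{-\theta}$, and finally $(1/\sum\eta_k)^\beta \asymp (t^{-(1-\theta)})^\beta = t^{-\theta}$, using the identity $\beta(1-\theta)=\theta$. This already reproduces the claimed estimate for $g_t=\overline{w}_t$ up to the logarithmic factors that are inserted to cover $\beta=1$ uniformly.

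The main work, and the step I expect to be the most delicate, is the last-iterate bound, whose first two terms involve the weighted sums $\sum_{k=1}^{t-1}\frac{\eta_k}{\eta_t(t-k)}$ and $\sum_{k=1}^{t-1}\frac{\eta_k^2}{\eta_t(t-k)}$. Here I would use $\eta_k/\eta_t=(t/k)^\theta$ and split the sum at $k=\lceil t/2\rceil$: for $k\le t/2$ one has $t-k\ge t/2$ and $\sum k^{-\theta}\lesssim t^{1-\theta}$, giving an $O(1)$ contribution, while for $k>t/2$ the factor $(t/k)^\theta$ is bounded and the residual $\sum 1/(t-k)$ generates the harmonic factor $\log t$. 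This yields $\sum_{k=1}^{t-1}\frac{\eta_k}{\eta_t(t-k)}\asymp \log t$ and, by the analogous split, $\sum_{k=1}^{t-1}\frac{\eta_k^2}{\eta_t(t-k)}\asymp t^{-\theta}\log t$. Feeding these back, together with $\eta_t=\eta_1 t^{-\theta}$ and the third term $(\sum\eta_k)^{1-\beta}/(\eta_t t)\asymp t^{(1-\theta)(1-\beta)}/t^{1-\theta}=t^{-\theta}$, gives exactly $m^{-1}t^{1/(\beta+1)}\log t + t^{-\theta}\log t + t^{-\theta}$, matching the averaged case.

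Finally I would specialize to $t^*=m$: the three terms become $m^{1/(\beta+1)-1}\log m = m^{-\beta/(\beta+1)}\log m$, then $m^{-\beta/(\beta+1)}\log m$, and $m^{-\beta/(\beta+1)}$, whose sum is of order $m^{-\beta/(\beta+1)}\log m$, which is precisely \eqref{optimalboundsSmooth}. The only genuine obstacle is the dyadic splitting of the weighted harmonic-type sums that produces the logarithmic factors; the remainder is routine estimation of power sums and collection of exponents.
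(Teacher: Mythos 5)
Your proposal is correct and follows essentially the same route as the paper: the paper also obtains this corollary by substituting $\eta_t=\eta_1 t^{-\beta/(\beta+1)}$ into Theorem \ref{thm:errorSmooth} (packaged through the general Proposition \ref{pro:totalSmooth}), estimating $\sum_k\eta_k$, $\sum_k\eta_k^2$ and the weighted sums $\sum_{k<t}\frac{\eta_k}{\eta_t(t-k)}$, $\sum_{k<t}\frac{\eta_k^2}{\eta_t(t-k)}$, and then setting $t^*=m$. Your dyadic split at $t/2$ for the harmonic-weighted sums is precisely the technique of the paper's Lemma \ref{Lemma:EstimatingTerm2}; the paper's proposition happens to invoke the partial-fraction variant (Lemma \ref{Lemma:EstimatingTerm2A}) instead, which yields the same $t^{-\min(q,1)}\log t$ bound, so the difference is purely organizational.
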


The above two results confirm that good performances can be attained with only one pass over the data, provided the step-sizes are suitably chosen, that is using  the step-size
as a regularization parameter.

\begin{rem}
  If we further assume that $\beta=1,$ as often done in the literature, the convergence rates from Corollaries 1-4 are of order $O(m^{-1/2}),$
  which are the same as those in, e.g., \cite{shamir2013stochastic}.
\end{rem}

Finally,  the following remark relates the above results to data-driven parameter tuning used in practice.
\begin{rem}[Bias-Variance and Cross-Validation]
The above results show how the number of iterations/passes controls a bias-variance trade-off, and in this sense acts as a regularization parameter. In practice, the approximation properties of the algorithm are unknown and the question arises of how the parameter can be chosen.  As it turns out, cross-validation can be used to achieve adaptively the best rates, in the sense that the rate in \eref{optimalboundsSmooth} is achieved by cross-validation or more precisely by hold-out cross-validation.
These results follow by an argument similar to that in Chapter 6 from \cite{steinwart2008support} and are omitted.
\end{rem}

\subsection{Finite Sample Bounds for Non-smooth Loss Functions}\label{subsec:nonsmooth}
Theorem \ref{thm:errorSmooth} holds for smooth loss functions and it is natural to ask if
a similar result holds for non-smooth losses such as the hinge loss.
Indeed, analogous results hold, albeit current bounds are not as sharp.
\begin{thm}\label{thm:errorGeneral}
  If Assumptions \ref{as:Boundness} and \ref{as:approximationerror} hold, then $\forall t \in \mN,$
  \begin{align*}
  \mE[\mcE(\overline{w}_t) - \inf_{w\in\mcF}  \mcE(w)] 
  \lesssim   \sqrt{\sum_{k=1}^t \eta_k \over m} + {\sum_{k=1}^t \eta_k^2 \over \sum_{k=1}^t \eta_k} +  \left( {1 \over \sum_{k=1}^t \eta_k} \right)^{\beta} ,
  \end{align*}
  and
  \begin{align*}
  \mE[\mcE(w_{t}) - \inf_{w\in\mcF}  \mcE(w)]  \lesssim\sqrt{\sum_{k=1}^t \eta_k \over m}  \sum_{k=1}^{t-1} \frac{\eta_k}{\eta_t(t-k)} 
  + \sum_{k=1}^{t-1} \frac{\eta_k^2}{\eta_t(t-k)} + \eta_t + {\left(\sum_{k=1}^t \eta_k\right)^{1-\beta} \over \eta_t t} .
  \end{align*}
\end{thm}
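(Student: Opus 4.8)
The plan is to bound the expected excess risk by splitting it, relative to a regularized comparator, into a sample (generalization) error, a computational (optimization) error measured on the empirical risk, and an approximation error; this mirrors the decomposition behind Theorem~\ref{thm:errorSmooth}, the only difference being that without smoothness the sample error is controlled more crudely. Concretely, for $\lambda>0$ let $w_\lambda$ attain the infimum defining $\mathcal{D}(\lambda)$ in Assumption~\ref{as:approximationerror}, so that $\mcE(w_\lambda)-\inf_{w}\mcE(w)\le\mathcal{D}(\lambda)\le c_\beta\lambda^\beta$ and, since $\tfrac{\lambda}{2}\|w_\lambda\|^2\le\mathcal{D}(\lambda)$, also $\|w_\lambda\|^2\lesssim\lambda^{\beta-1}$. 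I would first treat the averaged iterate and write $\mE[\mcE(\overline{w}_t)-\inf_w\mcE(w)]$ as $\mE[\mcE(\overline{w}_t)-\mcE_{\bf z}(\overline{w}_t)]+\mE[\mcE_{\bf z}(\overline{w}_t)-\mcE_{\bf z}(w_\lambda)]+\mE[\mcE_{\bf z}(w_\lambda)-\mcE(w_\lambda)]+[\mcE(w_\lambda)-\inf_w\mcE(w)]$, noting the third expectation vanishes because $w_\lambda$ is independent of $\bf z$.

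For the computational error $\mE[\mcE_{\bf z}(\overline{w}_t)-\mcE_{\bf z}(w_\lambda)]$ I would use that, given $\bf z$, the iterates perform stochastic subgradient descent on $\mcE_{\bf z}$ with subgradients bounded by $G:=a_0\kappa$, since $|V'_-(y,a)|\,\|\Phi(x)\|\le a_0\kappa$ under Assumption~\ref{as:Boundness}. The standard online-convex-optimization averaging estimate (with $w_1=0$) then gives $\mE_J[\mcE_{\bf z}(\overline{w}_t)]-\mcE_{\bf z}(w_\lambda)\lesssim \tfrac{\|w_\lambda\|^2}{\sum_k\eta_k}+\tfrac{\sum_k\eta_k^2}{\sum_k\eta_k}$. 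Combining this with the approximation bounds above and choosing $\lambda\asymp 1/\sum_k\eta_k$ makes the $\|w_\lambda\|^2/\sum_k\eta_k$ and $\lambda^\beta$ contributions collapse to $\big(1/\sum_k\eta_k\big)^{\beta}$, which reproduces the second and third terms of the claimed bound.

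The heart of the proof, and its main obstacle, is the sample error $\mE[\mcE(\overline{w}_t)-\mcE_{\bf z}(\overline{w}_t)]$. In the smooth case this is $\lesssim\sum_k\eta_k/m$ because $\eta_t\le 2/(\kappa^2L)$ renders the gradient update non-expansive, giving linear-in-$\sum_k\eta_k$ uniform stability. For a merely Lipschitz, non-smooth loss this non-expansiveness is lost, so I would instead control the size of the iterates and invoke a uniform bound. Using convexity of $V(y,\cdot)$ one has $V'_-(y,u)\,u\ge V(y,u)-V(y,0)\ge-|V|_0$; taking $u=\la w_t,\Phi(x_{j_t})\ra$, expanding $\|w_{t+1}\|^2=\|w_t-\eta_t V'_-(y_{j_t},u)\Phi(x_{j_t})\|^2$ and telescoping yields $\sup_k\|w_k\|^2\lesssim\sum_k\eta_k$, hence also $\|\overline{w}_t\|^2\lesssim\sum_k\eta_k$. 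Restricting to the ball of radius $R\asymp(\sum_k\eta_k)^{1/2}$ and using that the loss is $a_0$-Lipschitz together with the Rademacher complexity $\lesssim\kappa R/\sqrt m$ of $\{x\mapsto\la w,\Phi(x)\ra:\|w\|\le R\}$, I obtain $\mE[\mcE(\overline{w}_t)-\mcE_{\bf z}(\overline{w}_t)]\lesssim\kappa a_0 R/\sqrt m\lesssim\sqrt{\sum_k\eta_k/m}$. The appearance of the square root here, rather than the linear dependence of the smooth case, is exactly the price of non-smoothness and the reason the bounds are ``not as sharp''.

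Finally, to pass from the average to the last iterate $w_t$ I would use a suffix-averaging reduction in the spirit of \cite{shamir2013stochastic,lin2015iterative}: the last iterate is controlled by a weighted combination of suffix-averaged iterates, which expresses $\mcE(w_t)-\inf_w\mcE(w)$ through the already-bounded averaged quantities with weights $\eta_k/(\eta_t(t-k))$, plus a leftover single-step term of order $\eta_t$. Carrying the sample, computational, and approximation estimates through this reduction produces, respectively, the factors $\sqrt{\sum_k\eta_k/m}\sum_{k=1}^{t-1}\tfrac{\eta_k}{\eta_t(t-k)}$, $\sum_{k=1}^{t-1}\tfrac{\eta_k^2}{\eta_t(t-k)}+\eta_t$, and $(\sum_k\eta_k)^{1-\beta}/(\eta_t t)$ appearing in the statement; since $\sum_{k}\eta_k/(\eta_t(t-k))$ is only a logarithmic factor for the step-sizes of interest, this costs essentially nothing beyond logs. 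I expect the delicate points throughout to be keeping constants and the balancing of $\lambda$ against $\sum_k\eta_k$ consistent across both iterates, while the genuinely new ingredient compared with \cite{hardt2015train} is the norm-controlled uniform bound that yields the square-root sample error.
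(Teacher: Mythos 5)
Your proposal is correct and follows essentially the same route as the paper: an iterate-norm bound combined with Rademacher complexity over a ball of radius $\asymp\left(\sum_k\eta_k\right)^{1/2}$ yielding the $\sqrt{\sum_k\eta_k/m}$ sample error, the stochastic subgradient recursion with a regularized comparator and $\lambda\asymp 1/\sum_k\eta_k$ handling the computational and approximation terms, and a Shamir--Zhang-type suffix-averaging decomposition for the last iterate. The only cosmetic differences are that the paper applies the uniform sample-error bound per-iterate inside the weighted-sum recursion (which is also what is actually needed when bounding the moving-average terms for $w_t$) rather than once to $\overline{w}_t$, and it keeps $\mathcal{D}(\lambda)$ intact via an infimum over $w$ instead of separately invoking $\|w_\lambda\|^2\lesssim\lambda^{\beta-1}$.
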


The proof of the above theorem is based on ideas from \cite{lin2015iterative}, where tools from Rademacher complexity \cite{bartlett2003rademacher,meir2003generalization} are employed.
We postpone the proof in the appendix.

Using the above result with concrete step-sizes as those for smooth loss functions, we have the following explicit error bounds and corresponding stopping rules.
\begin{corollary}\label{cor:genExplicitC}
    Under Assumptions \ref{as:Boundness} and \ref{as:approximationerror}, let $\eta_t = 1/\sqrt{m}$  for all $t\in \mN$. Then for all $t \in \mN,$ and $g_t = \overline{w}_t$ (or $w_t$),
\begin{align*}
  \mE [\mcE(g_t) - \inf_{w\in\mcF}  \mcE(w)] \lesssim { \sqrt{t} \log t  \over m^{3/4} } +  { \log t \over \sqrt{m}} + \left({ \sqrt{m} \over t}\right)^{\beta} .
  \end{align*}
In particular, if we choose $t^* = \lceil m^{2\beta+3 \over 4\beta+2} \rceil,$
\be\label{optimalboundsNonSmooth}
  \mE[\mcE(g_{t^*}) - \inf_{w\in\mcF}  \mcE(w)] \lesssim   m^{-{\beta \over 2\beta+1}} \log m .
  \ee
\end{corollary}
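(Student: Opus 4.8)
The plan is to obtain Corollary \ref{cor:genExplicitC} as a direct specialization of Theorem \ref{thm:errorGeneral} to the constant step-size $\eta_k = 1/\sqrt{m}$, followed by a bias-variance balancing in the stopping time $t$. Note first that, unlike the smooth case, Theorem \ref{thm:errorGeneral} imposes no upper bound on the step-size, so no admissibility constraint needs checking. I would begin by evaluating the elementary sums entering the theorem under this choice: $\sum_{k=1}^t \eta_k = t/\sqrt{m}$ and $\sum_{k=1}^t \eta_k^2 = t/m$. For the last-iterate bound the weighting coefficients collapse into harmonic sums, since $\sum_{k=1}^{t-1} \frac{\eta_k}{\eta_t(t-k)} = \sum_{j=1}^{t-1} \frac{1}{j} \lesssim \log t$, and similarly $\sum_{k=1}^{t-1} \frac{\eta_k^2}{\eta_t(t-k)} = \frac{1}{\sqrt{m}}\sum_{j=1}^{t-1}\frac{1}{j} \lesssim \frac{\log t}{\sqrt{m}}$.

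Substituting these into the averaged-iterate estimate of Theorem \ref{thm:errorGeneral} yields the three terms $\frac{\sqrt{t}}{m^{3/4}}$, $\frac{1}{\sqrt{m}}$ and $(\sqrt{m}/t)^\beta$, where the sample-error term comes from $\sqrt{(t/\sqrt m)/m} = \sqrt{t}/m^{3/4}$. Substituting into the last-iterate estimate produces the same three terms, with the first two now carrying an extra $\log t$ factor from the harmonic sums, together with the additive $\eta_t = 1/\sqrt{m}$, which is absorbed into $\frac{\log t}{\sqrt{m}}$. Hence both choices $g_t = \overline{w}_t$ and $g_t = w_t$ are controlled by the single displayed bound $\frac{\sqrt{t}\log t}{m^{3/4}} + \frac{\log t}{\sqrt{m}} + (\sqrt{m}/t)^\beta$, which establishes the first claim of the corollary.

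For the stopping rule I would balance the increasing sample-error term $\frac{\sqrt{t}}{m^{3/4}}$ against the decreasing approximation-error term $\frac{m^{\beta/2}}{t^\beta}$, ignoring the logarithmic factors. Equating them gives $t^{1/2+\beta} = m^{3/4+\beta/2}$, whose solution has exponent $\frac{(2\beta+3)/4}{(2\beta+1)/2} = \frac{2\beta+3}{4\beta+2}$, matching $t^* = \lceil m^{(2\beta+3)/(4\beta+2)} \rceil$. Inserting this value, a short exponent computation gives $\frac{1}{2}\cdot\frac{2\beta+3}{4\beta+2} - \frac{3}{4} = \frac{-\beta}{2\beta+1}$, so $\frac{\sqrt{t^*}}{m^{3/4}} = m^{-\beta/(2\beta+1)}$, and by construction the approximation-error term matches this rate. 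Since $\frac{1}{2} \geq \frac{\beta}{2\beta+1}$ for every $\beta \in (0,1]$, the middle term $\frac{1}{\sqrt{m}}$ is dominated, and carrying the factor $\log t^* \lesssim \log m$ through yields the rate $m^{-\beta/(2\beta+1)}\log m$ of \eqref{optimalboundsNonSmooth}.

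The calculations are essentially routine arithmetic on the theorem bound; the only step requiring genuine care is the last-iterate analysis, where one must recognize that the constant step-size reduces the weighting coefficients to harmonic sums, generating the $\log t$ factors that distinguish this estimate from the cleaner averaged-iterate one, and must verify that the additive $\eta_t$ contribution does not degrade the final rate.
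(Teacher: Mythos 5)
Your proposal is correct and takes essentially the paper's own route: the paper merely packages the specialization of Theorem \ref{thm:errorGeneral} to step-sizes $\eta t^{-\theta}$ into Proposition \ref{pro:totalGeneral} (applied with $\theta=0$, $\eta=1/\sqrt{m}$), while you carry out the identical computation directly --- the sums $\sum_k\eta_k = t/\sqrt m$, $\sum_k\eta_k^2 = t/m$, the harmonic sums $\lesssim \log t$ for the last iterate, and the same balancing $t^{1/2+\beta}=m^{3/4+\beta/2}$ giving $t^*=\lceil m^{(2\beta+3)/(4\beta+2)}\rceil$ and the rate $m^{-\beta/(2\beta+1)}\log m$. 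One small caveat on your remark that ``no admissibility constraint needs checking'': the implicit constant in Theorem \ref{thm:errorGeneral} does depend on $\eta_1$ through the norm bound $R_t \le \sqrt{(a_0\kappa)^2\eta_1+2|V|_0}\,\sqrt{\sum_k\eta_k}$ (this is exactly why Proposition \ref{pro:totalGeneral} assumes $\eta\le 1$), so your argument is only uniform in $m$ because $\eta_1=1/\sqrt m\le 1$ here.
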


\begin{corollary}\label{cor:genExplicit}
   Under Assumptions \ref{as:Boundness} and \ref{as:approximationerror},  let $\eta_t = 1/\sqrt{t}$  for all $t\in \mN$. Then for all $t \in \mN,$ and $g_t = \overline{w}_t$ (or $w_t$),
\bea
  \mE [\mcE(g_t) - \inf_{w\in\mcF}  \mcE(w)] \lesssim { t^{1/4}\log t \over \sqrt{m}}+ { \log t\over \sqrt{t}} + { 1 \over t^{\beta / 2}} .
\eea
In particular, if we choose $t^* = \lceil m^{2 \over 2\beta+1} \rceil,$ there holds \eref{optimalboundsNonSmooth}.
\end{corollary}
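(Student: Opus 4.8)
The plan is to specialize Theorem \ref{thm:errorGeneral} to the step-size $\eta_k = 1/\sqrt{k}$ and then estimate the resulting sums. First I would record the two elementary order estimates that drive everything: $\sum_{k=1}^t k^{-1/2} \asymp \sqrt t$ (by comparison with $\int_1^t x^{-1/2}\,dx$) and $\sum_{k=1}^t k^{-1} \asymp \log t$ (the harmonic series). Substituting these into the averaged-iterate bound of Theorem \ref{thm:errorGeneral} immediately yields the three displayed terms for $g_t=\overline w_t$: the sample term becomes $\sqrt{\sum_k \eta_k/m}\asymp t^{1/4}/\sqrt m$, the computational term becomes $\sum_k\eta_k^2/\sum_k\eta_k \asymp \log t/\sqrt t$, and the approximation term becomes $(\sum_k\eta_k)^{-\beta}\asymp t^{-\beta/2}$. (The averaged-iterate first term is in fact sharper, without the $\log t$, but the corollary states a common bound valid for both iterates.)

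For the last iterate $g_t=w_t$ the same substitution requires controlling the two weighted sums in Theorem \ref{thm:errorGeneral}. For the first, $\sum_{k=1}^{t-1}\frac{\eta_k}{\eta_t(t-k)} = \sqrt t \sum_{k=1}^{t-1}\frac{k^{-1/2}}{t-k}$, I would split the range at $k=t/2$: on $k\le t/2$ use $t-k\ge t/2$ to bound the inner sum by $\frac2t\sum_{k\le t/2}k^{-1/2}\asymp t^{-1/2}$, while on $k>t/2$ use $k^{-1/2}\asymp t^{-1/2}$ together with $\sum_{t/2<k<t}(t-k)^{-1}\asymp \log t$, so that $\sum_{k=1}^{t-1}\frac{k^{-1/2}}{t-k}\asymp t^{-1/2}\log t$ and the whole expression is $\asymp \log t$. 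For the second, partial fractions give $\sum_{k=1}^{t-1}\frac{\eta_k^2}{\eta_t(t-k)}=\sqrt t\sum_{k=1}^{t-1}\frac{1}{k(t-k)}=\sqrt t\cdot\frac2t\sum_{k=1}^{t-1}k^{-1}\asymp \log t/\sqrt t$. Combined with $\eta_t=t^{-1/2}$ and the final term $\frac{(\sum_k\eta_k)^{1-\beta}}{\eta_t t}\asymp \frac{t^{(1-\beta)/2}}{t^{1/2}}=t^{-\beta/2}$, these reproduce exactly the claimed bound $\frac{t^{1/4}\log t}{\sqrt m}+\frac{\log t}{\sqrt t}+t^{-\beta/2}$.

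Finally I would optimize over $t$. Ignoring logarithmic factors, the bound is governed by the competing sample term $t^{1/4}/\sqrt m$ (increasing in $t$) and approximation term $t^{-\beta/2}$ (decreasing in $t$), the middle term being dominated at the optimum. Balancing $t^{1/4}/\sqrt m \asymp t^{-\beta/2}$ gives $t^{(2\beta+1)/4}\asymp m^{1/2}$, i.e. $t^*\asymp m^{2/(2\beta+1)}$, which is the stated stopping rule. Substituting $t^*=\lceil m^{2/(2\beta+1)}\rceil$, a direct exponent computation shows both the first and third terms are of order $m^{-\beta/(2\beta+1)}$ (up to $\log m$), since $\frac{1}{2(2\beta+1)}-\frac12=-\frac{\beta}{2\beta+1}$, while the middle term is of smaller order $m^{-1/(2\beta+1)}\log m$ because $\beta\le 1$, yielding \eref{optimalboundsNonSmooth}.

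The step I expect to be the main obstacle is the estimate of the weighted sum $\sum_{k=1}^{t-1}\frac{k^{-1/2}}{t-k}$: unlike the partial-fraction sum it does not telescope, and the logarithmic factor only emerges after carefully isolating the regime $k>t/2$ where $t-k$ is small. This is precisely the term responsible for the extra $\log t$ in the last-iterate bound relative to the averaged iterate, so getting its order right, neither losing nor over-counting the logarithm, is the crux of the argument.
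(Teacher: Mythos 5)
Your proposal is correct and follows essentially the same route as the paper: the paper also specializes Theorem \ref{thm:errorGeneral} (via its general Proposition \ref{pro:totalGeneral} with $\eta=1$, $\theta=1/2$), and its supporting Lemmas \ref{Lemma:EstimatingTerm2} and \ref{Lemma:EstimatingTerm2A} use precisely your two key estimates --- the split of $\sum_{k=1}^{t-1}\frac{k^{-1/2}}{t-k}$ at $k=t/2$ and the partial-fraction identity for $\sum_{k=1}^{t-1}\frac{1}{k(t-k)}$ --- before balancing the sample and approximation terms to get $t^*\asymp m^{2/(2\beta+1)}$. The only cosmetic difference is that you inline these estimates at $\theta=1/2$ rather than invoking the general proposition for arbitrary $\theta\in[0,1)$.
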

From the above two corollaries, we  see that the algorithm with constant step-size $1/\sqrt{m}$ can stop earlier
than the one with decaying step-size $1/\sqrt{t}$ when $\beta\leq 1/2,$ while they have the same convergence rate,
since $m^{2\beta+3 \over 4\beta+2} / m^{2 \over 2\beta+1} = m^{2\beta-1 \over 4\beta+1}.$
Note that the bound in \eref{optimalboundsNonSmooth} is slightly worse than that in \eref{optimalboundsSmooth}, see Section \ref{sec:discussion} for more discussion.

Similar to the smooth case, we also have the following results for SGM with one pass where regularization is realized by step-size.

\begin{corollary}\label{cor:genExplicitCSingle}
   Under Assumptions \ref{as:Boundness} and \ref{as:approximationerror}, let $\eta_t = m^{-{2\beta \over 2\beta+1}}$  for all $t\in \mN$. Then for all $t \in \mN,$ and $g_t = \overline{w}_t$ (or $w_t$),
\begin{align*}
  \mE [\mcE(g_t) - \inf_{w\in\mcF}  \mcE(w)] 
  \lesssim { m^{-{4\beta+1 \over 4\beta+2}}\sqrt{t} \log t  } +  m^{-{2\beta \over 2\beta+1}} \log t  + m^{{2\beta^2 \over 2\beta+1}} t^{-\beta}.
  \end{align*}
In particular, \eref{optimalboundsNonSmooth} holds for $t^* = m.$
\end{corollary}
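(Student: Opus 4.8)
The plan is to obtain the corollary by directly substituting the constant step-size $\eta_t \equiv \eta := m^{-\frac{2\beta}{2\beta+1}}$ into the two bounds of Theorem~\ref{thm:errorGeneral} and simplifying. Since $g_t$ stands for either $\overline{w}_t$ or $w_t$, the displayed estimate should be read as the (worse) bound valid for both iterates; as we will see, the logarithmic factors originate entirely from the last-iterate bound, so the claim for $\overline{w}_t$ will in fact be slightly stronger.

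First, I would record the elementary partial sums for a constant step-size: $\sum_{k=1}^t \eta_k = \eta t$ and $\sum_{k=1}^t \eta_k^2 = \eta^2 t$. Plugging these into the averaged-iterate bound of Theorem~\ref{thm:errorGeneral} gives $\sqrt{\eta t / m}$ for the sample term, $\eta$ for the computational term, and $(\eta t)^{-\beta}$ for the approximation term. With $\eta = m^{-\frac{2\beta}{2\beta+1}}$ these become, respectively, $m^{-\frac{4\beta+1}{4\beta+2}}\sqrt{t}$, $m^{-\frac{2\beta}{2\beta+1}}$, and $m^{\frac{2\beta^2}{2\beta+1}} t^{-\beta}$, using $1 + \frac{2\beta}{2\beta+1} = \frac{4\beta+1}{2\beta+1}$ and $\eta^{-\beta} = m^{\frac{2\beta^2}{2\beta+1}}$. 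This already produces the three terms in the statement, apart from the $\log t$ factors.

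For the last iterate I would exploit that with a constant step-size the weighted harmonic sums simplify, since $\eta_k/\eta_t = 1$: thus $\sum_{k=1}^{t-1}\frac{\eta_k}{\eta_t(t-k)} = \sum_{j=1}^{t-1}\frac{1}{j} = H_{t-1}\lesssim \log t$ (the $t=1$ case being trivial, the sum being empty). Hence the first term of the last-iterate bound becomes $\sqrt{\eta t/m}\,\log t$, the second and third together give $\eta H_{t-1} + \eta \lesssim \eta \log t$, and the fourth term is again $(\eta t)^{-\beta}$. Substituting $\eta = m^{-\frac{2\beta}{2\beta+1}}$ reproduces exactly the three displayed terms, now carrying $\log t$ on the first two; since these dominate the log-free averaged-iterate terms, the single displayed bound holds for both choices of $g_t$.

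Finally, to obtain~\eref{optimalboundsNonSmooth} I would set $t^*=m$ and check that the sample and approximation terms balance. The first term becomes $m^{-\frac{4\beta+1}{4\beta+2}}\sqrt{m}\,\log m = m^{-\frac{\beta}{2\beta+1}}\log m$, since $\frac{1}{2} - \frac{4\beta+1}{4\beta+2} = -\frac{\beta}{2\beta+1}$; the third becomes $m^{\frac{2\beta^2}{2\beta+1}-\beta} = m^{-\frac{\beta}{2\beta+1}}$; and the middle term $m^{-\frac{2\beta}{2\beta+1}}\log m$ decays strictly faster, hence is negligible. Collecting the two matching dominant terms gives the rate $m^{-\frac{\beta}{2\beta+1}}\log m$. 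I do not anticipate any genuine obstacle: the argument is a pure specialization of Theorem~\ref{thm:errorGeneral}, and the only mildly delicate points are the harmonic-sum estimate $H_{t-1}\lesssim \log t$ and the exponent arithmetic ensuring that the first and third terms align at $t^*=m$.
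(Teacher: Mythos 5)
Your proof is correct and takes essentially the same route as the paper: the paper obtains this corollary by specializing its Proposition \ref{pro:totalGeneral} (the bound for step-sizes $\eta_t = \eta t^{-\theta}$, itself a direct specialization of Theorem \ref{thm:errorGeneral}) to $\theta = 0$ and $\eta = m^{-{2\beta \over 2\beta+1}}$, which amounts to exactly your computation — constant partial sums $\eta t$, $\eta^2 t$, the harmonic-sum estimate $H_{t-1}\lesssim \log t$, and the same exponent arithmetic at $t^*=m$. The only point you leave implicit is that the constant hidden in Theorem \ref{thm:errorGeneral} depends on $\eta_1$ through $\sqrt{(a_0\kappa)^2\eta_1 + 2|V|_0}$, which is harmless here since $\eta_1 \le 1$ — precisely the condition $0<\eta\le 1$ imposed in the paper's Proposition \ref{pro:totalGeneral}.
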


\begin{corollary}\label{cor:genExplicitSingle}
   Under Assumptions \ref{as:Boundness}  and \ref{as:approximationerror}, let $\eta_t = t^{-{2\beta \over 2\beta+1}}$  for all $t\in \mN$. Then for all $t \in \mN,$ and $g_t = \overline{w}_t$ (or $w_t$),
\begin{align*}
  \mE [\mcE(g_t) - \inf_{w\in\mcF}  \mcE(w)] 
  \lesssim m^{- {1 \over 2}}t^{1 \over 4\beta+2}\log t   + { t^{-{\min(2\beta,1) \over 2\beta+1}} \log t} + t^{-{\beta \over 2\beta+1}}  .
\end{align*}
In particular, \eref{optimalboundsNonSmooth} holds for $t^* = m.$
\end{corollary}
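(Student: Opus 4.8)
The plan is to treat this as a direct corollary of Theorem~\ref{thm:errorGeneral}, substituting the tuned decaying step-size $\eta_t = \eta_1 t^{-\theta}$ with exponent $\theta = \frac{2\beta}{2\beta+1}$ and estimating the resulting sums by integral comparison. First I would record the elementary bounds: since $\beta\in(0,1]$ gives $\theta\in(0,2/3]\subset(0,1)$, one has $\sum_{k=1}^t \eta_k \asymp t^{1-\theta} = t^{1/(2\beta+1)}$, while $\sum_{k=1}^t \eta_k^2 = \sum_{k=1}^t k^{-2\theta}$ behaves as $t^{1-2\theta}$ when $2\theta<1$ (i.e.\ $\beta<1/2$), as $\log t$ when $2\theta=1$ ($\beta=1/2$), and as a bounded constant when $2\theta>1$ ($\beta>1/2$). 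This trichotomy is exactly what produces the exponent $\min(2\beta,1)$ in the statement.

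For the averaged iterate $\overline{w}_t$ these estimates plug directly into the three terms of Theorem~\ref{thm:errorGeneral}. The sample-error term gives $\sqrt{\sum_k\eta_k/m}\asymp m^{-1/2}t^{1/(4\beta+2)}$; the computational-error term gives $\sum_k\eta_k^2/\sum_k\eta_k \asymp t^{-\min(2\beta,1)/(2\beta+1)}$ (with a genuine $\log t$ at the boundary $\beta=1/2$); and the approximation-error term gives $(\sum_k\eta_k)^{-\beta}\asymp t^{-\beta/(2\beta+1)}$, matching the three summands of the displayed bound.

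The more delicate part is the last-iterate bound, where the extra weights $\sum_{k=1}^{t-1}\frac{\eta_k}{\eta_t(t-k)}$ and $\sum_{k=1}^{t-1}\frac{\eta_k^2}{\eta_t(t-k)}$ appear. Writing $\eta_k/\eta_t=(t/k)^\theta$, I would split each sum at $k=\lceil t/2\rceil$: for $k\le t/2$ the factor $(t-k)^{-1}$ is comparable to $t^{-1}$ and the remaining sum is controlled by $t^{\theta-1}\sum_{k\le t/2}k^{-\theta}=O(1)$, whereas for $k>t/2$ the ratio $(t/k)^\theta$ is bounded and $\sum_{t/2<k<t}(t-k)^{-1}\asymp \log t$. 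This is precisely where the $\log t$ factors in the statement originate; the same splitting applied to $\eta_k^2/(\eta_t(t-k))$ reproduces the computational-error term up to $\log t$. I expect this harmonic-type weighted sum to be the main obstacle of the whole calculation, since it is the only step that is not a routine integral comparison.

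Finally, I would set $t^*=m$ and check the dominant term. The sample term becomes $m^{-1/2}m^{1/(4\beta+2)}\log m = m^{-\beta/(2\beta+1)}\log m$, the approximation term becomes $m^{-\beta/(2\beta+1)}$, and the computational term is $m^{-\min(2\beta,1)/(2\beta+1)}\log m$, which is no larger since $\min(2\beta,1)\ge\beta$ for all $\beta\in(0,1]$. Hence the sample and approximation errors balance at the rate $m^{-\beta/(2\beta+1)}\log m$, yielding \eref{optimalboundsNonSmooth}.
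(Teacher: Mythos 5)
Your proposal is correct and follows essentially the same route as the paper: the paper also treats this as a direct consequence of Theorem~\ref{thm:errorGeneral}, substitutes $\eta_t=\eta t^{-\theta}$ with $\theta=\frac{2\beta}{2\beta+1}$, and controls $\sum_k k^{-\theta}$, $\sum_k k^{-2\theta}$ and the weighted harmonic sums $\sum_{k<t}\frac{k^{-q}}{t-k}$ by exactly the integral-comparison and split-at-$t/2$ arguments you describe (packaged in the paper as Lemmas~\ref{lemma:basicEstimate}--\ref{Lemma:EstimatingTerm2A} and the intermediate Proposition~\ref{pro:totalGeneral}, which you simply inline for this specific $\theta$). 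Your final balancing at $t^*=m$, including the observation that $\min(2\beta,1)\geq\beta$ makes the computational term subdominant, matches the paper's conclusion.
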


\subsection{Discussion and Proof Sketch}
\label{sec:discussion}

As mentioned in the introduction,  the literature on theoretical  properties of the iteration in Algorithm ~\ref{alg:SIGD}
is vast, both in learning theory and in optimization. A first line of works focuses on a single pass and convergence
of the expected risk. Approaches in this sense include classical results in optimization (see \cite{nemirovski2009robust} and references therein), but also approaches based on so-called ``online to batch" conversion (see \cite{orab14} and references therein). The latter are based on analyzing a  sequential prediction setting  and then on considering the averaged iterate to turn regret bounds in expected risk bounds. A second line of works focuses on multiple passes, but measures the quality of the corresponding iteration in terms of the minimization of the empirical risk.
In this view, Algorithm~\ref{alg:SIGD} is seen as an instance of incremental methods for the minimization of objective functions that are sums of a finite, but possibly large, number of terms \cite{bertsekas2011incremental}.  These latter works, while interesting in their own right, do not yield any direct information on the generalization properties of considering multiple passes.

Here,  we follow the approach in \cite{bousquet2008tradeoffs} advocating the combination of statistical and computational errors. The general proof strategy is to consider several intermediate steps to relate the expected risk of the empirical iteration to the minimal expected risk.
The argument we sketch below is a simplified and less sharp version with respect to the one used in the actual proof,
but it is easier to illustrate and still carries some important aspects which are useful for comparison with related results.

Consider an intermediate element $\tilde w\in \mcF$ and decompose the excess risk as
\begin{eqnarray*}
 {\mathbb E}
\mcE(w_t)-\inf_{w\in \mcF}\mcE =
 {\mathbb E} (\mcE(w_t)- \mcE_{\mathbf{z}}(w_t))
+{\mathbb E}(\mcE_{\mathbf{z}}(w_t)-\mcE_{\mathbf{z}}(\tilde w))
+{\mathbb E}\mcE_{\mathbf{z}}(\tilde w)-\inf_{w\in \mcF} \mcE.
\end{eqnarray*}
The first term on the right-hand side is the generalization error of the iterate. The second term
can be seen as a computational error. To discuss the last term, it is useful to consider a few different choices for  $\tilde w$. Assuming the empirical and expected risks to have  minimizers $w^*_{\mathbf{z}}$ and $w^*$,  a possibility is to set $\tilde w=w^*_{\mathbf{z}}$, this can be seen to be  the choice made in \cite{hardt2015train}. In this case, it is immediate to see that the last term is negligible since,
$$
{\mathbb E}\mcE_{\mathbf{z}}(\tilde w)={\mathbb E} \min_{w\in \mcF}\mcE_{\mathbf{z}}( w) \le
\min_{w\in \mcF} {\mathbb E}\mcE_{\mathbf{z}}( w)
= \min_{w\in \mcF} \mcE( w),
$$
and hence,
$$
{\mathbb E}\mcE_{\mathbf{z}}(\tilde w)  -\min_{w\in \mcF} \mcE\le 0.
$$
On the other hand,  in this case the computational error depends on the norm $\|w^*_{\mathbf{z}}\|$ which is in general hard  to estimate.   A more convenient choice  is to set $\tilde w=w^*$. A reasoning similar to the one above shows that the last term is still  negligible and the computational error  can still be controlled depending on $\|w^*\|$. In a non-parametric setting, the existence of a minimizer is not ensured and corresponds to a limit case where there is small approximation error.
Our approach  is then to consider an {\em almost} minimizer of the expected risk with a prescribed accuracy.
Following \cite{lin2015iterative}, we do this introducing Assumption~\eqref{approxerror} and
choosing $\tilde w $ as the unique minimizer of $\mcE+\lambda\|\cdot\|^2$, $\lambda>0$.
Then the last term in the error decomposition  can be upper bounded by the approximation error.

For the generalization error,  the stability results from \cite{hardt2015train} provide sharp estimates
for  smooth loss functions and in  the `capacity independent' limit, that is under no assumptions on the covering numbers of the considered function space. For this setting, the obtained bound is optimal in the sense that it  matches the best available bound for Tikhonov regularization \cite{steinwart2008support,cucker2007learning}.   For the non-smooth case a standard argument based on Rademacher complexity can be used, and easily extended to be capacity dependent. However, the corresponding bound is not sharp and improvements are likely to hinge on
deriving  better norm estimates  for  the iterates. The question does not seem to be  straightforward and is deferred to a future work.

The computational error for the averaged iterates can be controlled using classic arguments \cite{boyd2007stochastic}, whereas for the last iterate the arguments in \cite{lin2015iterative,shamir2013stochastic} are needed. Finally, Theorems~\ref{thm:errorSmooth}, \ref{thm:errorGeneral}  result from estimating and balancing the various error terms with respect to the choice of the step-size and number of passes.

We conclude this section with some perspective on the results in the paper.
We note that  since  the primary goal of this study was to  analyze the implicit  regularization effect of step-size and number of passes, we have considered a very simple iteration.  However, it would be very interesting to consider  more sophisticated, `accelerated' iterations \cite{schmidt2013minimizing}, and assess the potential advantages in terms of computational and generalization aspects.
Similarly, we chose to keep the analysis in the paper relatively simple, but several improvements can be considered
for example deriving high probability bounds and sharper error bounds under further assumptions. Some of these improvements are relatively straightforward,
see e.g.  \cite{lin2015iterative}, but others will require non-trivial extensions of results developed for Tikhonov regularization in the last few years. Finally, here we only referred to  a simple cross-validation approach to parameter tuning, but
it would clearly be very interesting to find ways to tune parameters  online. A remarkable result in this direction
is derived in  \cite{orab14},  where it is shown that, in the capacity independent setting, adaptive online parameter tuning is indeed possible.

\section{Numerical Simulations} \label{sec:simulations}

\begin{table}
\caption{Benchmark datasets and Gaussian kernel width $\sigma$ used in our experiments.}
\begin{center}
\begin{adjustbox}{max width=0.48\textwidth, max totalheight=0.9\textheight}
\begin{tabular}{M{3cm}M{1.3cm}M{1.3cm}M{1.3cm}M{1.3cm}N}
\toprule
{\em Dataset} & $n$ &  $n_{test}$ & $d$ & $\sigma$ &\\
\midrule
{\em BreastCancer} & 400 & 169 & 30 & 0.4 &\\
{\em Adult} & 32562 & 16282 & 123 &    4 &\\
{\em Ijcnn1} & 49990 & 91701 & 22 &      0.6 &\\
\bottomrule\hline
\end{tabular}
\end{adjustbox}
\end{center}
\label{tab:dataSpec}
\end{table}

We carry out some numerical simulations to illustrate our results\footnote{Code:  \texttt{lcsl.github.io/MultiplePassesSGM}}.
The experiments are executed 10 times each, on the benchmark datasets\footnote{Datasets: \texttt{archive.ics.uci.edu/ml} and \texttt{www.csie.ntu.edu.tw/\texttildelow cjlin/libsvmtools/} \texttt{datasets/}} reported in Table \ref{tab:dataSpec}, in which the Gaussian kernel bandwidths $\sigma$ used by SGM and SIGM\footnote{In what follows, we name one pass SGM and multiple passes SGM as SGM and SIGM, respectively.}
 for each learning problem are also shown. Here, the loss function is the hinge loss\footnote{Experiments with the logistic loss have also been carried out,
showing similar empirical results to those
considering the hinge loss. The details are not included in this text due to space limit.
}.
The experimental platform is a server with 12 $\times$ Intel$^\circledR$ Xeon$^\circledR$ E5-2620 v2 (2.10GHz) CPUs and 132 GB of RAM.
Some of the experimental results, as specified in the following, have been obtained by running the experiments on subsets of the data samples chosen uniformly at random.
In order to apply hold-out cross-validation, the training set is split in two parts: one for empirical risk minimization and the other for validation error computation (80\% - 20\%, respectively).
All the samples are randomly shuffled at each repetition.

\subsection{Regularization in SGM and SIGM}

\begin{figure}[t!]
    \centering
    \subfigure[]{ 	\label{fig:Adult1000_SIGD1}\includegraphics[width=0.45\textwidth]{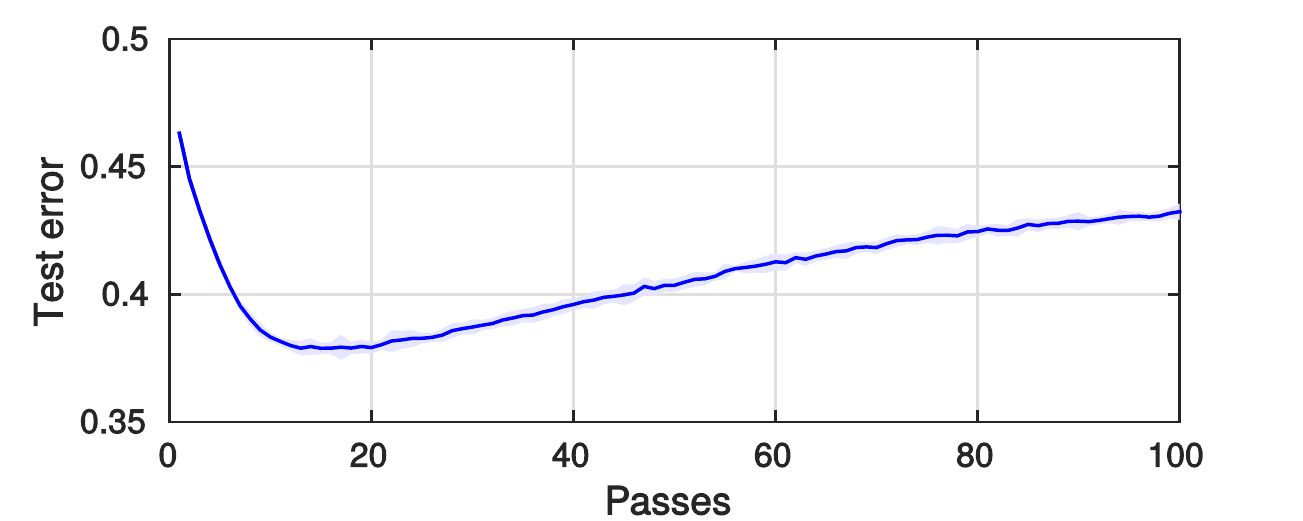}
    }
    ~ 
    \subfigure[]{	\label{fig:Adult1000_SIGD2}\includegraphics[width=0.45\textwidth]{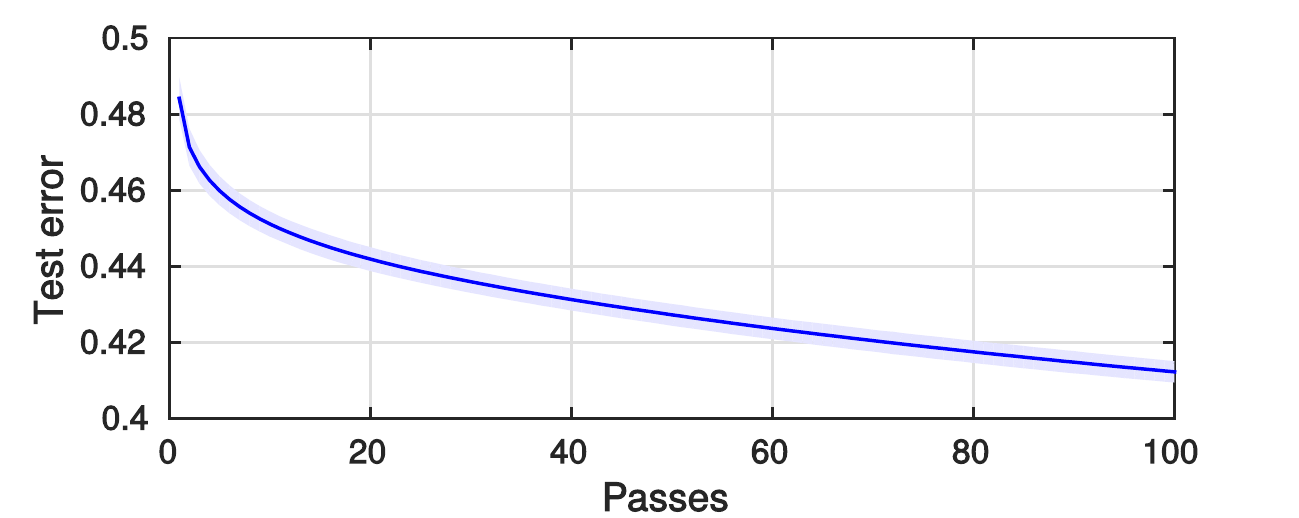}
    }
    \caption{Test error for SIGM with fixed (a) and decaying (b) step-size with respect to the number of passes on {\em Adult} ({\em n = 1000}).}
    \label{fig:Adult1000_SIGD}
\end{figure}

\begin{figure}[t!]
    \centering
    \subfigure[]{	\label{fig:Adult1000_SGD1}\includegraphics[width=0.45\textwidth]{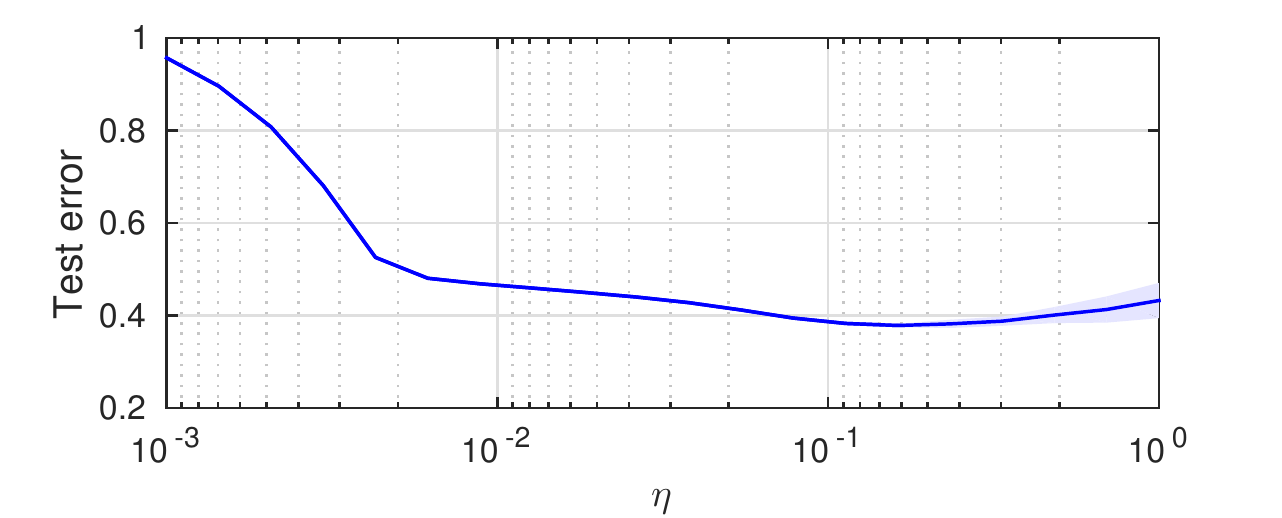}
    }
    ~ 
    \subfigure[]{	\includegraphics[width=0.45\textwidth]{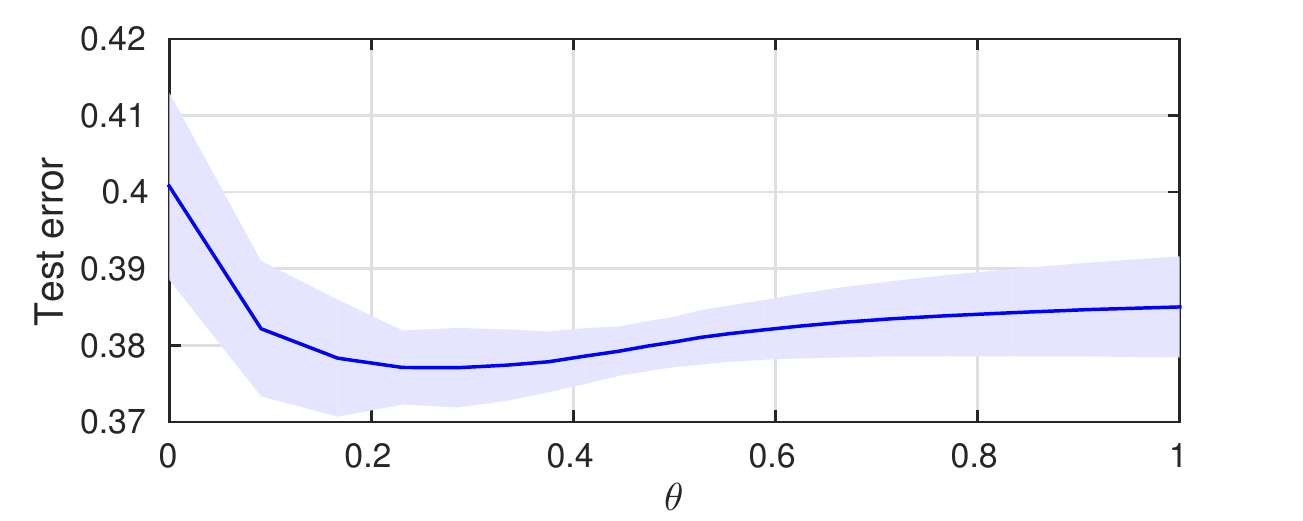}\label{fig:Adult1000_SGD2}
    }
    \caption{Test error for SGM with fixed (a) and decaying (b) step-size cross-validation on {\em Adult} ({\em n = 1000}).}
	\label{fig:Adult1000_SGD}
\end{figure}

\label{sec:simulation_regularization}
In this subsection, we illustrate four concrete examples showing different regularization effects of the step-size in SGM and the number of passes in SIGM. In all these four examples, we consider the {\em Adult} dataset with sample size $n = 1000$. 

In the first experiment, the SIGM step-size is fixed as $\eta = 1/\sqrt{n}$.
 The test error computed with respect to the hinge loss at each pass is reported in Figure \ref{fig:Adult1000_SIGD1}.
Note that the minimum test error is reached for a number of passes smaller than 20, after which it significantly increases, a so-called overfitting regime. This result clearly illustrates the regularization effect of the number of passes.
In the second experiment, we
consider SIGM with decaying step-size ($\eta = 1/4$ and $\theta = 1/2$).
As shown in Figure \ref{fig:Adult1000_SIGD2}, overfitting is not observed in the first 100 passes. In this case, the convergence to the optimal solution appears slower than that in the fixed step-size case.

In the last two experiments, we consider SGM and show that the step-size plays the role of a regularization parameter.
 For the  fixed step-size case, i.e., $\theta = 0$, we perform SGM with different $ \eta \in ( 0, 1 ]$ (logarithmically scaled).
We plot the errors in Figure \ref{fig:Adult1000_SGD1}, showing that a large step-size ($\eta = 1$) leads to overfitting, while a smaller one (e.g., $\eta = 10^{-3}$) is associated to oversmoothing.
 For the decaying step-size case, we fix $\eta_1 = 1/4$, and run SGM with different $\theta \in [0,1]$. The errors are plotted in Figure \ref{fig:Adult1000_SGD2}, from which we see that the exponent $\theta$ has a regularization effect. In fact, a more `aggressive' choice (e.g., $\theta = 0$, corresponding to a fixed step-size) leads to overfitting, while for a larger $\theta$ (e.g., $\theta = 1$) we observe oversmoothing.

\subsection{Accuracy and Computational Time Comparison}
\label{sec:experiments_accuracy}
\begin{table}
\caption{Comparison of SGM and SIGM with cross-validation with decaying (D) and constant (C) step-sizes, in terms of computational time and accuracy. SGM performs cross-validation on 30 candidate step-sizes, while SIGM achieves implicit regularization via early stopping.}
\begin{center}
\begin{adjustbox}{max width=0.6\textwidth, max totalheight=0.8\textheight}
\begin{tabular}{M{1.6cm}M{1.5cm}M{0.8cm}M{2.1cm}M{2.1cm}M{2.1cm}N}
\toprule
{\em Dataset} & {\em Algorithm} & {\em Step Size} & \specialcell{{\em Test Error}\\ {\em (hinge loss)}} &\specialcell{{\em Test Error}\\ {\em (class. error)}} & \specialcell{{\em Training}\\ {\em Time (s)}} & \\
\midrule
\multirow{2}{*}{\specialcell{ \\ {\em BreastCancer}\\ $ n = 400 $ }}
 & $SGM$ & C &  $0.127 \pm 0.022$ & $3.1 \pm 1.1\%$  & 1.7 $\pm$ 0.2 &   \\
 & $SGM$ & D &  $0.135 \pm 0.024$ & $3.0 \pm 1.1\%$  & 1.4 $\pm$ 0.3 &   \\
 & $SIGM$ & C & 0.131 $\pm$ 0.023 & $3.2 \pm 1.1\%$ & $1.4 \pm 0.8$ & \\
 & $SIGM$ & D & 0.204 $\pm$ 0.017 & $3.9 \pm 1.0\%$ & $1.8 \pm 0.5$ & \\
 & $LIBSVM$ & & & $2.8 \pm 1.3\%$ & $0.2 \pm 0.0$ & \\
 \hline
\multirow{2}{*}{\specialcell{ \\ {\em Adult}\\ $ n = 1000 $ }}
 & $SGM$ & C& 0.380 $\pm$ 0.003 & $16.6 \pm 0.3\%$ & 5.7 $\pm$ 0.6 & \\
 & $SGM$ & D&  0.378 $\pm$ 0.002 & $16.2 \pm 0.2\%$ & 5.4$\pm$ 0.3 & \\
 & $SIGM$ & C & $0.383 \pm 0.002$ & $16.1 \pm 0.0\%$& $3.2 \pm 0.4$ &\\
 & $SIGM$ & D & $0.450 \pm 0.002$ & $23.6 \pm 0.0\%$& $1.6 \pm 0.2$ &\\
 & $LIBSVM$ & & & $18.7 \pm 0.0\%$ & $5.8\pm 0.5$ & \\
 \hline
\multirow{2}{*}{\specialcell{ \\ {\em Adult}\\ $ n = 32562 $ }}
 & $SGM$ & C & 0.342 $\pm$ 0.001 & $15.2 \pm 0.8 \%$ & 320.0 $\pm$ 3.3 &   \\
 & $SGM$ & D & 0.340 $\pm$ 0.001 & $15.1 \pm 0.7 \%$ & 332.1 $\pm$ 3.3 &   \\
 & $SIGM$ & C & $0.343 \pm 0.001$ & $15.7 \pm 0.9 \%$& $366.2 \pm 3.9$ & \\
 & $SIGM$ & D& $0.364 \pm 0.001$ & $17.1 \pm 0.8 \%$& $442.4 \pm 4.2$ &    \\
 & $LIBSVM$ & &  & $ 15.3 \pm 0.7\%$ & $ 6938.7\pm 171.7$ & \\
 \hline
\multirow{2}{*}{\specialcell{ \\ {\em Ijcnn1}\\ $ n = 1000 $ }}
 & $SGM$ & C & 0.199 $\pm$ 0.016 & $8.4 \pm 0.8 \%$ & 3.9 $\pm$ 0.3 &  \\
 & $SGM$ & D & 0.199 $\pm$ 0.009 & $9.1 \pm 0.1 \%$ & 3.8$\pm$ 0.3 &  \\
 & $SIGM$ & C & $0.205 \pm 0.010$ & $9.3 \pm 0.5 \%$ & $1.7 \pm 0.4$ &  \\
 & $SIGM$ & D & $0.267 \pm 0.006$ & $9.4\pm 0.6 \%$ & $2.2 \pm 0.4$ &  \\
 & $LIBSVM$ & &  & $7.1 \pm 0.7\%$ & $0.6 \pm 0.1$ & \\
 \hline
\multirow{2}{*}{\specialcell{ \\ {\em Ijcnn1}\\ $ n = 49990 $ }}
 & $SGM$ & C & $0.041 \pm 0.002$ & $ 1.5 \pm 0.0 \%$ & $564.9 \pm 6.3$ &\\
 & $SGM$ & D & $0.059 \pm 0.000$ & $ 1.7 \pm 0.0 \%$ & $578.9 \pm 1.8$ &\\
 & $SIGM$ & C & 0.098 $\pm$ 0.001 & $ 4.7 \pm 0.1\%$ & 522.2 $\pm$ 20.7 & \\
 & $SIGM$ & D & 0.183 $\pm$ 0.000 & $ 9.5 \pm 0.0\%$ & 519.3 $\pm$ 25.8 & \\
 & $LIBSVM$ & &  & $ 0.9 \pm 0.0\%$ & $ 770.4 \pm 38.5$ & \\
 \bottomrule
 \hline
\end{tabular}
\label{tab:testSetComparison}
\end{adjustbox}
\end{center}
\end{table}
In this subsection, we compare SGM with cross-validation and SIGM with benchmark algorithm LIBSVM~\cite{CC01a}, both in terms of accuracy and computational time. For SGM, with 30 parameter guesses, we use cross-validation to tune the step-size (either setting $\theta=0$ while tuning $\eta$, or setting $\eta = 1/4$ while tuning $\theta$). For SIGM, we use two kinds of step-size suggested by Section \ref{sec:theory}: $\eta= 1/\sqrt{m}$ and $\theta=0$, or $\eta=1/4$ and $\theta=1/2,$ using early stopping via cross-validation.
  The test errors with respect to the hinge loss, the test relative misclassification errors and the computational times are collected in Table \ref{tab:testSetComparison}.

We first start comparing accuracies. The results in Table \ref{tab:testSetComparison} indicate that SGM with constant and decaying step-sizes and SIGM with fixed step-size reach comparable test errors, which are in line with the LIBSVM baseline.
 Observe that SIGM with decaying step-size attains consistently higher test errors, a phenomenon already illustrated in Section \ref{sec:simulation_regularization} in theory.

We now compare the computational times for cross-validation.
We see from Table \ref{tab:testSetComparison} that the training times of SIGM and SGM, either with constant or decaying step-sizes, are roughly the same.
We also observe that SGM and SIGM are faster than LIBSVM  on relatively large datasets ({\em Adult with n = 32562}, and {\em Ijcnn1 with n = 49990}). Moreover, for small datasets ({\em BreastCancer} with {\em n = 400}, {\em Adult} with {\em n = 1000}, and {\em Ijcnn1} with {\em n = 1000}), SGM and SIGM are comparable with or slightly slower than LIBSVM.

\section*{Acknowledgments}
 This material is based upon work supported by the Center for Brains, Minds and Machines (CBMM), funded by NSF STC award CCF-1231216. L. R. acknowledges the financial support of the Italian Ministry of Education, University and Research FIRB project RBFR12M3AC.
  The authors would like to thank Dr. Francesco Orabona for the fruitful discussions on this research topic, and Dr. Silvia Villa and the referees for their valuable comments.

\bibliographystyle{plain}

\renewcommand\appendixpagename{Appendices: Proofs}
\appendix
\appendixpage

\section{Basic Lemmas}

The following basic lemma is useful to our proofs, which will be used several times.
Its proof follows from the convexity of $V(y,\cdot)$ and the fact that $V_-'(y,a)$ is bounded.
\begin{lemma}\label{lemma:general}
 Under Assumption \ref{as:Boundness}, for any $k \in \mN$ and $w\in \mcF$, we have
\be\label{generalInequality}
   \|w_{k+1}-w\|^2 \leq \|w_k-w\|^2 + (a_0\kappa)^2 \eta_k^2 + 2\eta_{k} \left[ V(y_{j_k}, \la w, \Phi(x_{j_k})\ra) - V(y_{j_k}, \la w_k, \Phi(x_{j_k})\ra )\right].
\ee
\end{lemma}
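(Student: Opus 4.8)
The plan is to expand the squared distance $\|w_{k+1}-w\|^2$ using the SGM update rule \eqref{SIGD}, and then control the resulting cross term by convexity and control the quadratic term by the boundedness assumptions.

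First I would write $w_{k+1} - w = (w_k - w) - \eta_k V'_-(y_{j_k}, \la w_k, \Phi(x_{j_k})\ra)\Phi(x_{j_k})$, abbreviating $g_k := V'_-(y_{j_k}, \la w_k, \Phi(x_{j_k})\ra)$ for the scalar (left-hand) derivative. Expanding the norm gives
\be
\|w_{k+1}-w\|^2 = \|w_k - w\|^2 - 2\eta_k g_k \la w_k - w, \Phi(x_{j_k})\ra + \eta_k^2 g_k^2 \|\Phi(x_{j_k})\|^2. \nonumber
\ee
The quadratic term is immediate: by \eqref{boundedDeriviative} we have $|g_k| \le a_0$, and by \eqref{boundedkernel} we have $\|\Phi(x_{j_k})\|^2 \le \kappa^2$, so $\eta_k^2 g_k^2 \|\Phi(x_{j_k})\|^2 \le (a_0\kappa)^2 \eta_k^2$, which matches the second term on the right-hand side of \eqref{generalInequality}.

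The crux is bounding the cross term $-2\eta_k g_k \la w_k - w, \Phi(x_{j_k})\ra$ by $2\eta_k[V(y_{j_k}, \la w, \Phi(x_{j_k})\ra) - V(y_{j_k}, \la w_k, \Phi(x_{j_k})\ra)]$. Since $\eta_k > 0$, this reduces to the scalar subgradient inequality
\be
g_k\,(\la w_k, \Phi(x_{j_k})\ra - \la w, \Phi(x_{j_k})\ra) \le V(y_{j_k}, \la w_k, \Phi(x_{j_k})\ra) - V(y_{j_k}, \la w, \Phi(x_{j_k})\ra). \nonumber
\ee
Writing $a = \la w_k, \Phi(x_{j_k})\ra$ and $b = \la w, \Phi(x_{j_k})\ra$, this is exactly the statement that the left-hand derivative $g_k = V'_-(y_{j_k}, a)$ is a subgradient of the convex univariate function $V(y_{j_k}, \cdot)$ at the point $a$, i.e. $V'_-(y_{j_k},a)(a - b) \le V(y_{j_k}, a) - V(y_{j_k}, b)$. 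This is the main step; it follows from convexity of $V(y,\cdot)$ in its second argument, which guarantees that the left-hand derivative exists everywhere, is nondecreasing, and lies below the chords from both sides — equivalently, $V(y,b) \ge V(y,a) + V'_-(y,a)(b-a)$ for all $a,b$. I expect this subgradient inequality to be the only nontrivial ingredient; the rest is algebraic bookkeeping. Substituting both bounds into the expansion and using $\la w_k-w,\Phi(x_{j_k})\ra = a - b$ yields \eqref{generalInequality} directly.
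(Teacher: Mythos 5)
Your proposal is correct and follows essentially the same route as the paper's proof: expand the squared norm via the update rule, bound the quadratic term by $(a_0\kappa)^2\eta_k^2$ using \eqref{boundedkernel} and \eqref{boundedDeriviative}, and handle the cross term with the subgradient inequality $V'_-(y,a)(b-a) \le V(y,b) - V(y,a)$ furnished by convexity of $V(y,\cdot)$. The only cosmetic difference is that you state the subgradient property explicitly as the left-hand derivative lying below the chords, whereas the paper simply invokes it; the mathematical content is identical.
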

\begin{proof}
 Since $w_{k+1}$ is given by \eref{SIGD}, by expanding the inner product, we have
   $$
   \|w_{k+1}-w\|^2=\|w_k-w\|^2 + \eta_k^2\|V'_- (y_{j_k}, \la w_k, \Phi(x_{j_k}) \ra) \Phi(x_{j_k})\|^2 + 2\eta_{k}V'_{-}(y_{j_k}, \la w_k, \Phi(x_{j_k}) \ra ) \left\la w-w_k, \Phi(x_{j_k})\right\ra.
  $$
The bounded assumption \eref{boundedkernel} implies that
$\|\Phi(x_{j_k})\| \leq \kappa $ and by  \eref{boundedDeriviative}, $|V'_{-}(y_{j_k}, \la w_k, \Phi(x_{j_k}) \ra )| \leq a_0.$
We thus have
$$
   \|w_{k+1}-w\|^2 \leq \|w_k-w\|^2 + (a_0\kappa)^2 \eta_k^2 + 2\eta_{k}V'_{-}(y_{j_k}, \la w_k, \Phi(x_{j_k}) \ra ) [ \left\la w, \Phi(x_{j_k})\right\ra - \left\la w_k, \Phi(x_{j_k})\right\ra ].
  $$
Using
the convexity of $V (y_{j_k}, \cdot)$ which tells us that
$$V'_- (y_{j_k}, a) (b-a)\leq V (y_{j_k}, b)-V (y_{j_k}, a), \qquad \forall a, b\in\mR,$$
 we reach the desired bound. The proof is complete.
\end{proof}

Taking the expectation of \eref{generalInequality} with respect to the random variable $j_k$, and noting that $w_{k}$ is independent from $j_k$ given ${\bf z}$, one can get the following result.
\begin{lemma}\label{lemma:empiricalGF}
  Under Assumption \ref{as:Boundness},  for any fixed $k \in \mN,$ given any ${\bf z}$, assume that $w\in \mcF$ is independent of the random variable $j_k$. Then we have
  \be\label{empiricalGF}
  \mE_{j_{k}} [\|w_{k+1} - w\|^2] \leq \|w_k - w\|^2 + (a_0 \kappa)^2 \eta_k^2  + 2\eta_k \left( \mcE_{\bf z}(w) - \mcE_{\bf z}(w_k)\right).
  \ee
\end{lemma}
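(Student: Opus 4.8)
The plan is to obtain this lemma by taking the conditional expectation over $j_k$ of the pointwise bound already furnished by Lemma \ref{lemma:general}. That lemma gives, for every realization of $j_k$,
\be\label{planstart}
\|w_{k+1}-w\|^2 \leq \|w_k-w\|^2 + (a_0\kappa)^2 \eta_k^2 + 2\eta_{k} \left[ V(y_{j_k}, \la w, \Phi(x_{j_k})\ra) - V(y_{j_k}, \la w_k, \Phi(x_{j_k})\ra )\right],
\ee
so the entire task reduces to applying $\mE_{j_k}[\cdot]$, the expectation with respect to $j_k$ conditioned on ${\bf z}$ (and on the history $j_1,\dots,j_{k-1}$), to both sides and identifying the resulting terms.

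The first observation I would record is the measurability/independence structure that lets most terms pass through the expectation untouched. By the recursion \eref{SIGD} started from $w_1=0$, the iterate $w_k$ is a deterministic function of ${\bf z}$ and of $j_1,\dots,j_{k-1}$ only; hence, given ${\bf z}$, it is independent of $j_k$ (this is exactly the conditional independence noted in the footnote to Algorithm \ref{alg:SIGD}). Combined with the hypothesis that $w$ is independent of $j_k$, this shows that $\|w_k-w\|^2$ is constant as far as $j_k$ is concerned and therefore survives the expectation unchanged, while the deterministic term $(a_0\kappa)^2\eta_k^2$ is of course unaffected. The only terms that genuinely require computation are the two loss values inside the bracket.

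For those, I would use that $j_k$ is uniformly distributed on $[m]$ and independent of both $w$ and $w_k$ given ${\bf z}$, so that
\be\label{lossterm}
\mE_{j_k}\!\left[ V(y_{j_k}, \la w, \Phi(x_{j_k})\ra) \right] = \frac{1}{m}\sum_{j=1}^{m} V(y_j, \la w, \Phi(x_j)\ra) = \mcE_{\bf z}(w),
\ee
and identically $\mE_{j_k}[ V(y_{j_k}, \la w_k, \Phi(x_{j_k})\ra) ] = \mcE_{\bf z}(w_k)$, recalling the definition of the empirical risk and that $f_w(x)=\la w,\Phi(x)\ra$. Substituting these identities into the expectation of \eref{planstart} yields \eref{empiricalGF} directly. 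There is essentially no analytic obstacle here, since no norm estimates or convexity arguments beyond those already used in Lemma \ref{lemma:general} are needed; the one point deserving care is the justification that $w_k$ is independent of $j_k$ given ${\bf z}$, which is what permits the uniform-averaging step in \eref{lossterm} to be applied simultaneously to $w$ and to $w_k$.
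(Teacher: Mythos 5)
Your proposal is correct and follows exactly the paper's own argument: the paper likewise obtains Lemma \ref{lemma:empiricalGF} by taking the expectation of \eref{generalInequality} with respect to $j_k$, using that $w_k$ is independent of $j_k$ given ${\bf z}$ so that the bracketed loss terms average to the empirical risks. Your write-up merely makes explicit the measurability and uniform-averaging steps that the paper leaves implicit.
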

\section{Sample Errors}
Note that our goal is to bound the excess generalization error $\mE[\mcE(w_{T}) - \inf_{w\in \mcF} \mcE(w)],$ whereas the left-hand side of \eref{empiricalGF} is related to an empirical error.
The difference between the generalization and empirical errors is a so-called sample error.
To estimate this sample error, we introduce the following lemma, which gives a uniformly upper bound for sample errors over a ball $B_R = \{w\in \mcF:\|w\| \leq R\}$. Its proof is
based on a standard symmetrization technique and Rademacher complexity, e.g. \cite{bartlett2005local,meir2003generalization}.
For completeness, we provide a proof here.
\begin{lemma}
  \label{lemma:UniformSampleErrors}
 Assume \eref{boundedkernel} and \eref{boundedDeriviative}. For any $R>0,$ we have
  \bea
  \left|\mE_{\bf z} \left[\sup_{w \in B_R}(\mcE(w) - \mcE_{\bf z}(w))\right]\right|
  \leq {2a_0 \kappa R \over \sqrt{m}}.
  \eea
\end{lemma}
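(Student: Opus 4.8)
The plan is to follow the classical three-step route for uniform deviation bounds: symmetrization to replace the one-sided deviation by a Rademacher average over the sample, a contraction inequality to remove the loss, and a Cauchy--Schwarz estimate to control the remaining linear Rademacher complexity. A preliminary observation I would record is that \eref{boundedDeriviative} together with the convexity of $V(y,\cdot)$ makes $a\mapsto V(y,a)$ Lipschitz with constant $a_0$, uniformly in $y$, since $|V(y,b)-V(y,a)|=|\int_a^b V'_-(y,s)\,ds|\le a_0|b-a|$.

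First I would symmetrize. Let ${\bf z}'=\{z_i'=(x_i',y_i')\}_{i=1}^m$ be an independent copy of ${\bf z}$ drawn from $\rho$. Since $\mcE(w)=\mE_{{\bf z}'}[\mcE_{{\bf z}'}(w)]$, Jensen's inequality (pulling the supremum out of the inner expectation) yields
\[
\mE_{\bf z}\Big[\sup_{w\in B_R}(\mcE(w)-\mcE_{\bf z}(w))\Big]\le \mE_{{\bf z},{\bf z}'}\Big[\sup_{w\in B_R}\frac1m\sum_{i=1}^m\big(V(y_i',\la w,\Phi(x_i')\ra)-V(y_i,\la w,\Phi(x_i)\ra)\big)\Big].
\]
Because the pairs $(z_i,z_i')$ are exchangeable, inserting i.i.d. Rademacher signs $\sigma_i\in\{-1,+1\}$ in front of each summand leaves the expectation unchanged; splitting the two halves and using subadditivity of the supremum then bounds the right-hand side by $2\,\mE_{{\bf z},\sigma}\big[\sup_{w\in B_R}\frac1m\sum_{i=1}^m\sigma_i V(y_i,\la w,\Phi(x_i)\ra)\big]$.

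Next, conditioning on ${\bf z}$ and applying the contraction inequality with Lipschitz constant $a_0$ strips off the loss and produces a factor $a_0$ in front of $\mE_{{\bf z},\sigma}\big[\sup_{\|w\|\le R}\frac1m\sum_i\sigma_i\la w,\Phi(x_i)\ra\big]$. For this linear term I would use $\sup_{\|w\|\le R}\la w,\sum_i\sigma_i\Phi(x_i)\ra=R\,\|\sum_i\sigma_i\Phi(x_i)\|$ and then Jensen once more to pass to the second moment, $\mE_\sigma\|\sum_i\sigma_i\Phi(x_i)\|\le(\sum_i\|\Phi(x_i)\|^2)^{1/2}\le\kappa\sqrt m$, where I used $\mE_\sigma[\sigma_i\sigma_j]=\delta_{ij}$ and the bound \eref{boundedkernel}. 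Multiplying the factors $2$, $a_0$, and $\kappa R/\sqrt m$ gives exactly $2a_0\kappa R/\sqrt m$.

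Finally, the absolute value is removed for free: since $0\in B_R$, we have $\sup_{w\in B_R}(\mcE(w)-\mcE_{\bf z}(w))\ge\mcE(0)-\mcE_{\bf z}(0)$, and taking $\mE_{\bf z}$ gives a quantity $\ge 0$ because $\mE_{\bf z}[\mcE_{\bf z}(0)]=\mcE(0)$; hence the expression inside the absolute value is nonnegative and the one-sided bound already controls it. I expect the contraction step to be the only delicate point: one must invoke the version of the Ledoux--Talagrand inequality for the plain supremum (rather than its absolute value), which is precisely the form that does not generate a spurious additive term and keeps the constant at $a_0$; the symmetrization and the norm estimate are entirely routine.
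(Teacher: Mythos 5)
Your proof follows essentially the same route as the paper's: symmetrization against an independent copy ${\bf z}'$, insertion of Rademacher signs, Talagrand's contraction inequality with Lipschitz constant $a_0$ (justified exactly as you do, via \eref{boundedDeriviative} and convexity), and then Cauchy--Schwarz plus Jensen to obtain the factor $\kappa R/\sqrt{m}$. The only difference is your handling of the outer absolute value --- observing that the expectation of the supremum is nonnegative since $0\in B_R$ and $\mE_{\bf z}[\mcE_{\bf z}(0)]=\mcE(0)$ --- which is correct and slightly neater than the paper's remark that the same procedure can be repeated with $\mcE(w)-\mcE_{\bf z}(w)$ replaced by $\mcE_{\bf z}(w)-\mcE(w)$.
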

\begin{proof}
 Let ${\bf z'} = \{z_i' = (x_i',y'_i)\}_{i=1}^{m}$ be another training sample from $\rho$, and assume that it is independent from $\bf z.$
  We have
  \bea
  \mE_{\bf z} \left[\sup_{w \in B_R}(\mcE(w) - \mcE_{\bf z}(w))\right] = \mE_{\bf z} \left[\sup_{w \in B_R}\mE_{\bf z'}[\mcE_{\bf z'}(w) - \mcE_{\bf z}(w)]\right]
  \leq \mE_{\bf z,z'} \left[\sup_{w \in B_R}(\mcE_{\bf z'}(w) - \mcE_{\bf z}(w))\right].
  \eea
  Let $ \sigma_1, \sigma_2, \dots, \sigma_m$ be independent random variables drawn from the Rademacher distribution, i.e. $\Pr(\sigma_i = +1) = \Pr(\sigma_i = -1) = 1/2$ for $i=1,2,\dots,m$.
   Using a standard symmetrization technique, for example in \cite{meir2003generalization}, we get
   \bea
  \mE_{\bf z} \left[\sup_{w \in B_R}(\mcE(w) - \mcE_{\bf z}(w))\right]
  &\leq& \mE_{\bf z,z', \sigma} \left[\sup_{w \in B_R}\left\{ {1\over m} \sum_{i=1}^m \sigma_{i} (V(y_i',\la w, \Phi(x_i')\ra ) - V(y_i,\la w, \Phi(x_i)\ra ) )\right\}\right]\\
  &\leq& 2 \mE_{\bf z, \sigma} \left[\sup_{w \in B_R}\left\{ {1\over m} \sum_{i=1}^m \sigma_{i} V(y_i,\la w, \Phi(x_i)\ra ) \right\}\right].
  \eea
 With  \eref{boundedDeriviative}, by applying Talagrand's contraction lemma, see e.g. \cite{bartlett2005local}, we derive
\bea
  \mE_{\bf z} \left[\sup_{w \in B_R}(\mcE(w) - \mcE_{\bf z}(w))\right]
  \leq 2a_0 \mE_{\bf z, \sigma} \left[\sup_{w \in B_R} {1\over m} \sum_{i=1}^m \sigma_{i} \la w, \Phi(x_i) \ra \right]
= 2a_0 \mE_{\bf z, \sigma} \left[\sup_{w \in B_R} \left\la w, {1\over m} \sum_{i=1}^m \sigma_{i} \Phi(x_i) \right\ra \right].
  \eea
  Using Cauchy-Schwartz inequality, we reach
  \bea
  \mE_{\bf z} \left[\sup_{w \in B_R}(\mcE(w) - \mcE_{\bf z}(w))\right] \leq 2a_0 \mE_{\bf z, \sigma} \left[\sup_{w \in B_R} \|w\| \left\| {1\over m} \sum_{i=1}^m \sigma_{i}\Phi(x_i) \right\| \right]
  \leq 2a_0R \mE_{\bf z, \sigma} \left[ \left\| {1\over m} \sum_{i=1}^m \sigma_{i} \Phi(x_i) \right\| \right].
  \eea
  By Jensen's inequality, we get
  \bea
  \mE_{\bf z} \left[\sup_{w \in B_R}(\mcE(w) - \mcE_{\bf z}(w))\right]
  \leq 2a_0R  \left[ \mE_{\bf z, \sigma} \left\| {1\over m} \sum_{i=1}^m \sigma_{i}\Phi(x_i) \right\|^2 \right]^{1/2}
  = 2a_0R \left[  {1\over m^2} \mE_{\bf z, \sigma} \sum_{i=1}^m \left\|\Phi(x_i) \right\|^2 \right]^{1/2}.
  \eea
  The desired result thus follows by introducing  \eref{boundedkernel} to the above. Note that the above procedure also applies if we replace $\mcE(w) - \mcE_{\bf z}(w)$ with $\mcE_{\bf z}(w) - \mcE(w)$.
  The proof is complete.
\end{proof}

The following lemma gives upper bounds on the iterated sequence.
\begin{lemma}\label{lemma:boundIteration}
 Under Assumption \ref{as:Boundness}. Then for any $t \in \mN$, we have
\bea
   \|w_{t+1}\| \leq \sqrt{ (a_0\kappa)^2 \sum_{k=1}^t \eta_k^2 + 2 |V|_0 \sum_{k=1}^t \eta_{k}}.
\eea
\end{lemma}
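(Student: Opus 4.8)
The plan is to apply the one-step inequality from Lemma~\ref{lemma:general} with the comparison point $w=0$ and then telescope the resulting recursion. First I would set $w=0$ in \eref{generalInequality}, obtaining
\[
\|w_{k+1}\|^2 \leq \|w_k\|^2 + (a_0\kappa)^2\eta_k^2 + 2\eta_k\left[V(y_{j_k},0) - V(y_{j_k}, \la w_k, \Phi(x_{j_k})\ra)\right].
\]
I would then invoke two consequences of Assumption~\ref{as:Boundness}: since $V$ takes values in $\mR_+$, the term $V(y_{j_k}, \la w_k, \Phi(x_{j_k})\ra)$ is nonnegative and may simply be discarded; and by the definition of $|V|_0$ we have $V(y_{j_k},0)\le |V|_0$. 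This yields the clean recursion
\[
\|w_{k+1}\|^2 \leq \|w_k\|^2 + (a_0\kappa)^2\eta_k^2 + 2|V|_0\,\eta_k.
\]

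Next I would sum this over $k=1,\dots,t$. The $\|w_k\|^2$ terms telescope, and using the initialization $w_1=0$, so that $\|w_1\|^2=0$, I arrive at
\[
\|w_{t+1}\|^2 \leq (a_0\kappa)^2\sum_{k=1}^t\eta_k^2 + 2|V|_0\sum_{k=1}^t\eta_k,
\]
and taking square roots gives the stated bound.

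There is no substantial obstacle here: the entire argument is an immediate corollary of Lemma~\ref{lemma:general}, and choosing the origin as the comparison element is the natural move, both because $w_1=0$ makes the telescoped sum clean and because the quantity to be controlled is $\|w_{t+1}\|$ itself. The only point meriting any attention is that the two loss-related quantities are bounded in the correct directions, namely bounding $V(y,0)$ from above by $|V|_0$ while exploiting the nonnegativity of $V$ to drop $V(y_{j_k}, \la w_k, \Phi(x_{j_k})\ra)$.
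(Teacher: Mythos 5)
Your proposal is correct and follows essentially the same route as the paper's own proof: applying Lemma~\ref{lemma:general} with $w=0$, dropping the nonnegative loss term, bounding $V(y_{j_k},0)$ by $|V|_0$, telescoping from $w_1=0$, and taking square roots. No gaps.
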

\begin{proof}
  Using Lemma \ref{lemma:general} with $w = 0,$ we have
  \bea
  \|w_{k+1}\|^2 \leq \|w_k\|^2 + (a_0\kappa)^2 \eta_k^2 + 2\eta_k \left[ V(y_{j_k}, 0) - V(y_{j_k}, \la w_k, \Phi(x_{j_k}) \ra )\right].
  \eea
  Noting that $V(y,a) \geq 0$ and $V(y_{j_k}, 0) \leq |V|_0,$ we thus get
  \bea
  \|w_{k+1}\|^2 \leq \|w_k\|^2 + (a_0\kappa)^2 \eta_k^2 + 2\eta_k |V|_0.
  \eea
  Applying this inequality iteratively for $k=1,\cdots, t,$ and introducing with $w_1=0$, one can get that
  $$\|w_{t+1}\|^2 \leq  (a_0\kappa)^2 \sum_{k=1}^t \eta_k^2 + 2 |V|_0 \sum_{k=1}^t \eta_{k},$$
  which leads to the desired result by taking square root on both sides.
\end{proof}

According to the above two lemmas, we can bound the sample errors as follows.
\begin{lemma}\label{lemma:sampleError}
 Assume  \eref{boundedkernel} and \eref{boundedDeriviative}. Then, for any $k\in \mN$,
  \bea
  |\mE_{{\bf z},J}[\mcE_{\bf z}(w_k) - \mcE(w_k)]| \leq {2a_0\kappa R_k \over \sqrt{m}},
  \eea
  where
  \be\label{RT1}
  R_k = \sqrt{ (a_0\kappa)^2 \sum_{k=1}^t \eta_k^2 + 2 |V|_0\sum_{k=1}^t \eta_{k}}.
  \ee
\end{lemma}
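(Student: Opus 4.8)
The plan is to reduce the data-dependent sample error of the random iterate $w_k$ to a \emph{uniform} sample error over a fixed ball, and then invoke Lemma \ref{lemma:UniformSampleErrors}. The obstacle to a direct application of Lemma \ref{lemma:UniformSampleErrors} is that $w_k$ depends on the sample $\bf z$ (and on the indices $J$), so one cannot simply substitute $w = w_k$ into a bound that was derived for a \emph{fixed} $w$. The device to circumvent this is that, by Lemma \ref{lemma:boundIteration} applied with $t = k-1$, the norm of the iterate admits the bound $\|w_k\| \leq R_k$, and this bound holds \emph{deterministically} — for every realization of $\bf z$ and $J$ — precisely because the right-hand side $R_k$ depends only on the step-sizes and the constants $a_0, \kappa, |V|_0$, not on the randomness.

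First I would record that $w_k \in B_{R_k}$ always, so that pointwise
$$
\mcE(w_k) - \mcE_{\bf z}(w_k) \leq \sup_{w \in B_{R_k}}\bigl(\mcE(w) - \mcE_{\bf z}(w)\bigr).
$$
Taking the expectation $\mE_{{\bf z},J}$ on both sides, I would then observe that the right-hand side no longer depends on $J$ (the ball $B_{R_k}$ is deterministic), so the inner expectation over $J$ acts trivially and
$$
\mE_{{\bf z},J}\bigl[\mcE(w_k) - \mcE_{\bf z}(w_k)\bigr] \leq \mE_{\bf z}\Bigl[\sup_{w \in B_{R_k}}\bigl(\mcE(w) - \mcE_{\bf z}(w)\bigr)\Bigr] \leq {2 a_0 \kappa R_k \over \sqrt{m}},
$$
where the last inequality is exactly Lemma \ref{lemma:UniformSampleErrors} with $R = R_k$.

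To obtain the matching lower bound and hence control the absolute value, I would repeat the same argument with the roles of $\mcE$ and $\mcE_{\bf z}$ interchanged, using the symmetric form of Lemma \ref{lemma:UniformSampleErrors} (the final remark in its proof notes that the same estimate holds for $\mcE_{\bf z}(w) - \mcE(w)$). Combining the two one-sided bounds yields the claimed inequality. The only genuinely delicate point is the one already flagged: the deterministic nature of the radius $R_k$ is what permits passing from the random iterate to the uniform supremum and what makes the expectation over $J$ collapse; everything else is a routine assembly of Lemmas \ref{lemma:UniformSampleErrors} and \ref{lemma:boundIteration}. (I would also note that the index in the definition of $R_k$ in \eqref{RT1} should run over $\{1,\dots,k-1\}$, matching $t=k-1$ in Lemma \ref{lemma:boundIteration}.)
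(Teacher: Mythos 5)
Your proof is correct and is precisely the argument the paper intends (the paper only says ``According to the above two lemmas'' without spelling it out): since $R_k$ depends only on the step-sizes and the constants $a_0,\kappa,|V|_0$, the iterate $w_k$ lies in the fixed ball $B_{R_k}$ for every realization of ${\bf z}$ and $J$, so the expectation over $J$ collapses and Lemma \ref{lemma:UniformSampleErrors} applies with $R=R_k$, with the symmetric version giving the other side of the absolute value. Your closing remark also correctly identifies the typo in \eref{RT1}: the sums should run over the indices up to $k-1$ (or up to $k$, which only enlarges $R_k$ and still gives a valid, non-decreasing bound, as used later in the paper).
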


When the loss function is smooth,
by Theorems 2.2 and 3.9 from \cite{hardt2015train}, we can control the sample errors as follows.
\begin{lemma}\label{lemma:sampleErrorsSmooth}
  Under Assumptions \ref{as:Boundness} and \ref{as:smooth}, let $\eta_t \leq 2/(\kappa^2 L)$  for all $k \in [T],$
  \bea
  \left|\mE_{{\bf z},J}[\mcE_{\bf z}(w_k) - \mcE(w_k) ]\right| \leq  { 2(a_0\kappa)^2\sum_{i=1}^k \eta_i \over m} .
  \eea
\end{lemma}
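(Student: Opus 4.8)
The plan is to derive this sample-error bound from the algorithmic-stability machinery of \cite{hardt2015train}, exactly as the statement's reference to their Theorems 2.2 and 3.9 suggests. Recall that an algorithm is $\epsilon$-uniformly stable if, for any two samples $\mathbf{z},\mathbf{z}'$ differing in a single point and any test point $(x,y)$, one has $\mE_J[V(y,\la w_k,\Phi(x)\ra) - V(y,\la w_k',\Phi(x)\ra)] \le \epsilon$, where $w_k$ and $w_k'$ denote the iterates of \eref{SIGD} run on $\mathbf{z}$ and $\mathbf{z}'$ under the \emph{same} realization of the index sequence $J$. The stability-to-generalization theorem then yields $|\mE_{{\bf z},J}[\mcE_{\bf z}(w_k) - \mcE(w_k)]| \le \epsilon$, so it suffices to establish uniform stability with $\epsilon = 2(a_0\kappa)^2 m^{-1}\sum_{i=1}^k \eta_i$.

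To control $\delta_k := \mE_J\|w_k - w_k'\|$, I would couple the two runs step by step. First, under Assumptions \ref{as:Boundness} and \ref{as:smooth} the per-sample objective $w\mapsto V(y,\la w,\Phi(x)\ra)$ is convex, and its gradient $V'(y,\la w,\Phi(x)\ra)\Phi(x)$ is Lipschitz in $w$ with constant $L\|\Phi(x)\|^2 \le L\kappa^2$ by \eref{boundedkernel} and Assumption \ref{as:smooth}; hence the objective is $L\kappa^2$-smooth. A standard co-coercivity argument shows that for a convex, $\beta$-smooth function the gradient-descent map $w\mapsto w-\eta\nabla$ is non-expansive whenever $\eta\le 2/\beta$; with $\beta = L\kappa^2$ this is precisely the hypothesis $\eta_k\le 2/(\kappa^2 L)$.

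Next I would split on the index $j_k$, which is uniform on $[m]$. With probability $(m-1)/m$ it selects a point common to $\mathbf{z}$ and $\mathbf{z}'$, so both iterates undergo the \emph{same} non-expansive update and $\|w_{k+1}-w_{k+1}'\|\le \|w_k-w_k'\|$. With probability $1/m$ it hits the differing point; bounding each gradient by $a_0\kappa$ via \eref{boundedDeriviative} and \eref{boundedkernel} gives $\|w_{k+1}-w_{k+1}'\|\le \|w_k-w_k'\| + 2\eta_k a_0\kappa$. Taking expectation over $j_k$ and using $w_1 = w_1' = 0$ yields the recursion $\delta_{k+1}\le \delta_k + 2\eta_k a_0\kappa/m$, so that $\delta_k \le (2a_0\kappa/m)\sum_{i=1}^{k-1}\eta_i \le (2a_0\kappa/m)\sum_{i=1}^{k}\eta_i$. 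Finally, since $|V'_-|\le a_0$ the loss is $a_0$-Lipschitz in its second argument, giving $|V(y,\la w_k,\Phi(x)\ra) - V(y,\la w_k',\Phi(x)\ra)| \le a_0\kappa\|w_k-w_k'\|$; combining with the bound on $\delta_k$ produces the stability parameter $\epsilon = 2(a_0\kappa)^2 m^{-1}\sum_{i=1}^k\eta_i$ and hence the claim.

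The crux I expect is the non-expansiveness of the gradient update, since it is what forces the "common point" contributions to telescope with zero growth; it rests essentially on the combination of convexity, the derived $L\kappa^2$-smoothness of the composite loss, and the step-size cap $\eta_k\le 2/(\kappa^2 L)$. The remaining ingredients, namely the bounded-gradient estimate for the differing coordinate, the summation of the linear recursion, and the Lipschitz conversion from iterate distance to loss distance, are routine.
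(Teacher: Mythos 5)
Your proof is correct and takes essentially the same route as the paper's: the paper likewise reduces the claim to checking that the per-sample loss $V(y,\la \cdot,\Phi(x)\ra)$ is $(\kappa^2 L)$-smooth and $(a_0\kappa)$-Lipschitz in $w$, and then directly invokes Theorems 2.2 and 3.8 of \cite{hardt2015train}, whose content is precisely the stability-to-generalization step and the coupling/non-expansiveness recursion you spell out. The only difference is that you inline the proofs of those cited theorems rather than quoting them.
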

\begin{proof}
  Note that by \eref{boundedkernel},  Assumption \ref{as:smooth} and \eref{reproducingProperty}, for all $(x,y)\in {\bf z},$ $w,w'\in \mcF,$
  \bea\begin{split} & \|V'(y,\la w, \Phi(x)\ra ) \Phi(x) - V'(y,\la w', \Phi(x) \ra ) \Phi(x) \|
  \leq  \kappa |V'(y, \la w, \Phi(x)\ra ) - V'(y, \la w', \Phi(x)\ra)|  \\
  \leq& \kappa L |\la w, \Phi(x)\ra  - \la w', \Phi(x)\ra| = \kappa L |\la w - w', \Phi(x)\ra | \leq \kappa L \|w - w'\| \|\Phi(x)\|  \\
  \leq& \kappa^2 L \|w-w'\|,\end{split}\eea
  and
  \bea
  \|V'(y,\la w, \Phi(x)\ra ) \Phi(x) \| \leq \kappa a_0.
  \eea
  That is, for every $(x,y)\in {\bf z},$ $V(y, \la \cdot,\Phi(x)\ra)$ is $(\kappa^2 L)$-smooth and $(\kappa a_0)$-Lipschitz.
  Now the results follow directly by using Theorems 2.2 and 3.8 from \cite{hardt2015train}.
\end{proof}
\section{Excess Errors for Weighted Averages}

\begin{lemma}\label{lemma:weightedSum}
  Under Assumption \ref{as:Boundness},
  assume that there exists a non-decreasing sequence $\{b_k>0\}_k$ such that
  \be\label{sampleErrorAssumption}
  \left|\mE_{{\bf z},J}[\mcE_{\bf z}(w_k) - \mcE(w_k) ]\right| \leq b_k, \quad \forall k \in [T] .
  \ee
   Then for any $t\in [T]$ and any fixed $w\in \mcF$,
  \be\label{weightedSumA}
  \sum_{k=1}^t 2\eta_k \mE_{{\bf z},J}\left[ \mcE(w_k) \right] \leq  b_t \sum_{k=1}^t {2 \eta_k} +
     (a_0 \kappa)^2 \sum_{k=1}^t\eta_k^2  + \sum_{k=1}^t 2\eta_k \mcE(w)  + \|w\|^2.
  \ee
\end{lemma}
\begin{proof}
  By Lemma \ref{lemma:empiricalGF}, we have \eref{empiricalGF}.
  Rewriting $ - \mcE_{\bf z}(w_k)$ as
  $$ -\mcE_{\bf z}(w_k) + \mcE(w_k) - \mcE(w_k),$$
  taking the expectation with respect to $J(T)$ and ${\bf z}$ on both sides, noting that $w$ is independent of $J$ and $\bf z$,
  and applying Condition \eref{sampleErrorAssumption},
  we derive
  \bea
  \mE_{{\bf z},J}[\|w_{k+1} - w\|^2] \leq \mE_{{\bf z},J}[\|w_k - w\|^2] + (a_0 \kappa)^2 \eta_k^2  + 2\eta_k \left(\mcE(w) - \mE_{{\bf z},J} \left[ \mcE(w_k)\right] \right) + {2 \eta_k b_k},
  \eea
  which is equivalent to
  \bea
 2\eta_k \mE_{{\bf z},J}\left[ \mcE(w_k) \right] \leq 2\eta_k \mcE(w)  + \mE_{{\bf z},J}[\|w_k - w\|^2 - \|w_{k+1} - w\|^2] + (a_0 \kappa)^2 \eta_k^2   + {2 \eta_k b_k}.
  \eea
  Summing up over $k=1, \cdots,t$, and introducing with $w_1=0,$
  \bea
 \sum_{k=1}^t 2\eta_k \mE_{{\bf z},J}\left[ \mcE(w_k) \right] \leq \sum_{k=1}^t 2\eta_k \mcE(w)  + \|w\|^2 +  (a_0 \kappa)^2 \sum_{k=1}^t\eta_k^2   +  \sum_{k=1}^t {2 \eta_k b_k}.
  \eea
  The proof can be finished by noting that $b_k$ is non-decreasing.
\end{proof}

Now, we are in a position to prove Theorem \ref{thm:convergence}.
\begin{proof}
  [Proof of Theorem \ref{thm:convergence}]
  According to Lemma \ref{lemma:sampleError}, Condition \eref{sampleErrorAssumption} is satisfied for
  $$ b_t = {2a_0\kappa \sqrt{\sum_{k=1}^t (a_0\kappa\eta_k)^2 + 2|V|_0 \sum_{k=1}^t\eta_k}  \over \sqrt{m}}.$$
  By Lemma \ref{lemma:weightedSum}, we thus have \eref{weightedSumA}.
  Dividing both sides by $\sum_{k=1}^{t}2\eta_k,$ and using the convexity of $V(y,\cdot)$ which implies
  \be\label{convexityWei}
  {\sum_{k=1}^t \eta_k \mcE(w_k) \over \sum_{k=1}^t \eta_k }  \geq \mcE({\sum_{k=1}^t \eta_k w_k \over \sum_{k=1}^t \eta_k }) =  \mcE(\overline{w}_t),
  \ee
  we get that
  \bea
   \mE_{{\bf z},J}\left[ \mcE(\overline{w}_t) \right] \leq b_t  +  {(a_0 \kappa)^2 \over 2} {\sum_{k=1}^t\eta_k^2 \over \sum_{k=1}^t \eta_k}  +   \mcE(w)  + {\|w\|^2 \over 2\sum_{k=1}^t \eta_k}.
  \eea
For any fixed $\epsilon>0,$ we know that there exists a $w_{\epsilon}\in \mcF,$ such that $\mcE(w_{\epsilon}) \leq \inf_{w\in\mcF}  \mcE(w) + \epsilon.$
Letting $t = t^*(m),$ and $w = w_{\epsilon},$ we have
\bea
 \mE_{{\bf z},J}\left[ \mcE(w_{t^*(m),w}) \right] \leq  b_{t^*(m)} +  {(a_0 \kappa)^2 \over 2} {\sum_{k=1}^{t^*(m)}\eta_k^2 \over \sum_{k=1}^{t^*(m)} \eta_k}  +  \inf_{w\in\mcF}  \mcE(w) + \epsilon  + {\|w_{\epsilon}\|^2 \over 2\sum_{k=1}^{t^*(m)} \eta_k} .
  \eea
Letting $m\to \infty$ , and using Conditions (A) and (B) which imply
\bea
\lim_{m\to \infty}{1 \over \sum_{k=1}^{t^*(m)} \eta_k}=0,\quad  \lim_{m\to \infty}{\sum_{k=1}^{t^*(m)} \eta_k^2 \over \sum_{k=1}^{t^*(m)} \eta_k}=0,\quad \mbox{and } \lim_{m \to \infty}{\sum_{k=1}^{t^*(m)} \eta_k^2 \over m} =  \lim_{m \to \infty}{\sum_{k=1}^{t^*(m)} \eta_k^2 \over \sum_{k=1}^{t^*(m)} \eta_k} {\sum_{k=1}^{t^*(m)} \eta_k \over m} =0,
\eea
 we reach
\bea
   \lim_{m\to \infty} \mE_{{\bf z},J}\left[ \mcE(w_{t^*(m),w}) \right] \leq  \inf_{w\in\mcF}  \mcE(w) + \epsilon .
\eea
Since $\epsilon>0$ is arbitrary, the desired result thus follows. The proof is complete.
\end{proof}

\begin{lemma}\label{lemma:weightedSumErrors}
  Under the assumptions of Lemma \ref{lemma:weightedSum}, let Assumption \ref{as:approximationerror} hold.
   Then for any $t\in [T],$
  \be\label{averageIntermErrorA}
  \sum_{k=1}^t 2\eta_k \mE_{{\bf z},J}\left[ \mcE(w_k) - \inf_{w\in\mcF}  \mcE(w)\right]
 \leq  b_t { \sum_{k=1}^t 2\eta_k}   + (a_0 \kappa)^2 \sum_{k=1}^t \eta_k^2  + 2c_{\beta}\left(\sum_{k=1}^t \eta_k\right)^{1-\beta}.
  \ee
\end{lemma}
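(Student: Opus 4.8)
The plan is to combine the general weighted-sum bound from Lemma \ref{lemma:weightedSum} with a careful choice of the free vector $w$ so as to introduce the approximation error. Observe that Lemma \ref{lemma:weightedSum} holds for \emph{any} fixed $w\in\mcF$, so I would not specialize $w$ to a single minimizer (which need not exist in the non-parametric setting), but rather keep it free and later optimize the resulting bound over $w$. Starting from \eqref{weightedSumA}, I subtract $\sum_{k=1}^t 2\eta_k \inf_{w\in\mcF}\mcE(w)$ from both sides to obtain
\be
\sum_{k=1}^t 2\eta_k \mE_{{\bf z},J}\left[ \mcE(w_k) - \inf_{w\in\mcF}\mcE(w)\right] \leq b_t \sum_{k=1}^t 2\eta_k + (a_0\kappa)^2 \sum_{k=1}^t \eta_k^2 + \left[\sum_{k=1}^t 2\eta_k \left(\mcE(w) - \inf_{w\in\mcF}\mcE(w)\right) + \|w\|^2\right].
\ee
The key observation is that the bracketed term on the right, after dividing through conceptually by $\sum_{k=1}^t 2\eta_k$, is exactly twice the quantity appearing inside the infimum defining the approximation error $\mathcal{D}(\lambda)$ in \eqref{approxerror}, with the correspondence $\lambda = 1/\sum_{k=1}^t\eta_k$.

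More precisely, the plan is to write, for the specific value $\lambda = \left(\sum_{k=1}^t\eta_k\right)^{-1}$,
\be
\sum_{k=1}^t 2\eta_k \left(\mcE(w) - \inf_{w'\in\mcF}\mcE(w')\right) + \|w\|^2 = \left(\sum_{k=1}^t 2\eta_k\right) \left[\left(\mcE(w) - \inf_{w'\in\mcF}\mcE(w')\right) + \frac{\lambda}{2}\|w\|^2\right].
\ee
Taking the infimum over $w\in\mcF$ on the right turns the bracket into precisely $\mathcal{D}(\lambda)$ by the definition in \eqref{approxerror}. Since the left-hand side of the target inequality does not depend on $w$, I may freely take the infimum over $w$ on the right, yielding the bound
\be
\sum_{k=1}^t 2\eta_k \mE_{{\bf z},J}\left[ \mcE(w_k) - \inf_{w\in\mcF}\mcE(w)\right] \leq b_t \sum_{k=1}^t 2\eta_k + (a_0\kappa)^2 \sum_{k=1}^t \eta_k^2 + \left(\sum_{k=1}^t 2\eta_k\right) \mathcal{D}\!\left(\frac{1}{\sum_{k=1}^t\eta_k}\right).
\ee
Finally, I invoke Assumption \ref{as:approximationerror}, namely $\mathcal{D}(\lambda)\leq c_\beta \lambda^\beta$, with $\lambda = 1/\sum_{k=1}^t\eta_k$, so that the last term is bounded by $2c_\beta \left(\sum_{k=1}^t\eta_k\right)^{1-\beta}$, which is exactly the third term in \eqref{averageIntermErrorA}.

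The main subtlety, rather than a genuine obstacle, is the bookkeeping around the infimum: since the minimizer of $\mcE + \lambda\|\cdot\|^2/2$ need not be attained in closed form, I would phrase the argument by noting that for any $\epsilon>0$ there is a near-minimizer $w_\epsilon$ achieving $\mathcal{D}(\lambda)+\epsilon$, apply Lemma \ref{lemma:weightedSum} with $w=w_\epsilon$, and then let $\epsilon\to 0$; this avoids assuming existence of an exact minimizer and matches the earlier convergence-proof style in the excerpt. No independence or expectation issues arise beyond those already handled in Lemma \ref{lemma:weightedSum}, since $w_\epsilon$ is a deterministic element of $\mcF$ independent of $J$ and ${\bf z}$, exactly as required there.
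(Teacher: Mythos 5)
Your proposal is correct and follows essentially the same route as the paper's own proof: subtract $\sum_{k=1}^t 2\eta_k \inf_{w\in\mcF}\mcE(w)$ from both sides of \eqref{weightedSumA}, take the infimum over $w$ on the right to recognize $\left(\sum_{k=1}^t 2\eta_k\right)\mathcal{D}\left(1/\sum_{k=1}^t\eta_k\right)$, and apply Assumption \ref{as:approximationerror}. The $\epsilon$-near-minimizer bookkeeping you add is harmless but unnecessary, since the inequality holds for every fixed $w$ and the left-hand side is $w$-free, so passing to the infimum of the right-hand side requires no attainment of the minimizer.
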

\begin{proof}
  By Lemma \ref{lemma:weightedSum}, we have \eref{weightedSumA}.
  Subtracting $\sum_{k=1}^t 2\eta_k \inf_{w\in\mcF}  \mcE(w)$ from both sides,
  \begin{align*}
  \sum_{k=1}^t 2\eta_k \mE_{{\bf z},J}\left[ \mcE(w_k) - \inf_{w\in\mcF}  \mcE(w)\right]
 \leq   b_t { \sum_{k=1}^t 2\eta_k} + (a_0 \kappa)^2 \sum_{k=1}^t \eta_k^2  + \sum_{k=1}^t 2\eta_k \left[ \mcE(w) - \inf_{w\in\mcF}  \mcE(w)\right] + \|w\|^2.
  \end{align*}
   Taking the infimum over $w\in \mcF$, recalling that $\mathcal{D}(\lambda)$ is defined by \eref{approxerror}, we have
  \bea
  \begin{split}
  \sum_{k=1}^t 2\eta_k \mE_{{\bf z},J}\left[ \mcE(w_k) - \inf_{w\in\mcF}  \mcE(w)\right]
 \leq   b_t { \sum_{k=1}^t 2\eta_k}  + (a_0 \kappa)^2 \sum_{k=1}^t \eta_k^2  + \sum_{k=1}^t 2\eta_k \mathcal{D}\left({1 \over \sum_{k=1}^t\eta_k}\right) .
  \end{split}
  \eea
  Using Assumption \ref{as:approximationerror} to the above, we get the desired result. The proof is complete.
\end{proof}

Collecting some of the above analysis, we get the following result.
\begin{pro}\label{pro:averageIntermError}
  Under the assumptions of Lemma \ref{lemma:weightedSumErrors}, we have
\be\label{averageIntermError}
\mE_{{\bf z},J}[\mcE(\overline{w}_t)] - \inf_{w\in\mcF}  \mcE(w)  \leq b_t + {(a_0 \kappa)^2\over 2} {\sum_{k=1}^t \eta_k^2 \over \sum_{k=1}^{t}\eta_k}  +   c_{\beta} \left({1\over \sum_{k=1}^t \eta_k}\right)^{\beta}.
\ee
\end{pro}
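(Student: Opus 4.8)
The plan is to derive this directly from Lemma \ref{lemma:weightedSumErrors} by dividing through by the total step-size mass and then invoking convexity of the loss to pass from a weighted average of the risks $\mcE(w_k)$ to the risk of the averaged iterate $\overline{w}_t$. The inequality in Lemma \ref{lemma:weightedSumErrors} already packages the sample, computational, and approximation contributions on its right-hand side; essentially all that remains is normalization and a single application of Jensen's inequality.

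Concretely, I would start from
\be
\sum_{k=1}^t 2\eta_k \mE_{{\bf z},J}\left[ \mcE(w_k) - \inf_{w\in\mcF}  \mcE(w)\right] \leq  b_t \sum_{k=1}^t 2\eta_k + (a_0 \kappa)^2 \sum_{k=1}^t \eta_k^2  + 2c_{\beta}\left(\sum_{k=1}^t \eta_k\right)^{1-\beta}
\ee
and divide both sides by $2\sum_{k=1}^t \eta_k$. On the right-hand side this produces exactly the three target terms: the $b_t$ term is unchanged, the computational term becomes $\tfrac{(a_0\kappa)^2}{2}\,\sum_k \eta_k^2/\sum_k\eta_k$, and the approximation term simplifies via $(\sum_k\eta_k)^{1-\beta}/(\sum_k\eta_k) = (1/\sum_k\eta_k)^{\beta}$. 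On the left-hand side, the $\inf$ term pulls out as $\inf_{w\in\mcF}\mcE(w)$ since it is a constant, leaving the weighted average $\sum_k \eta_k \mE_{{\bf z},J}[\mcE(w_k)]/\sum_k\eta_k$.

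To finish I would lower-bound this weighted average by $\mE_{{\bf z},J}[\mcE(\overline{w}_t)]$. By linearity of expectation the weighted average of expectations equals the expectation of the weighted average $\sum_k \eta_k \mcE(w_k)/\sum_k\eta_k$, and then the convexity of $V(y,\cdot)$ (hence of $\mcE$) gives, as in~\eref{convexityWei},
\be
{\sum_{k=1}^t \eta_k \mcE(w_k) \over \sum_{k=1}^t \eta_k } \geq \mcE\!\left({\sum_{k=1}^t \eta_k w_k \over \sum_{k=1}^t \eta_k }\right) = \mcE(\overline{w}_t),
\ee
so taking expectations preserves the inequality. Combining this lower bound with the normalized right-hand side yields precisely~\eref{averageIntermError}.

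I do not anticipate a genuine obstacle here: the estimate is essentially immediate once Lemma \ref{lemma:weightedSumErrors} is in hand, and the only substantive (already available) ingredient is the Jensen step through convexity of the loss. The one point requiring mild care is bookkeeping of the expectation operators, ensuring that the lower bound via convexity is applied \emph{inside} $\mE_{{\bf z},J}$ rather than to the expected iterates; this is handled cleanly by first exchanging the finite weighted sum with the expectation.
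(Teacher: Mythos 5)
Your proposal is correct and coincides with the paper's own proof: both divide \eref{averageIntermErrorA} from Lemma \ref{lemma:weightedSumErrors} by $2\sum_{k=1}^t \eta_k$ and then apply the convexity bound \eref{convexityWei} inside the expectation to pass from the weighted average of risks to $\mcE(\overline{w}_t)$. Your remark about applying Jensen pointwise before taking $\mE_{{\bf z},J}$ is exactly how the paper handles it, so nothing is missing.
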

\begin{proof}
  By Lemma \ref{lemma:weightedSumErrors}, we have \eref{averageIntermErrorA}.  Dividing both sides by $\sum_{k=1}^{t}2\eta_k,$ and using \eref{convexityWei}, we get the desired bound.
\end{proof}

\section{From Weighted Averages to the Last Iterate}
A basic tool for studying the convergence for iterates is the following decomposition, as often done in \cite{shamir2013stochastic} for classical online learning or subgradient descent algorithms \cite{lin2015iterative}.
It enables us to study the weighted excess generalization error $2\eta_t\mE_{{\bf z},J}[\mcE(w_t) - \inf_{w\in\mcF}  \mcE(w)]$  in terms of ``weighted averages'' and moving weighted averages.
In what follows, we will write $\mE_{{\bf z},J}$ as $\mE$ for short.
\begin{lemma}We have
  \begin{multline}
 2 \eta_{t} \mE\left\{\mcE(w_t) -\inf_{w\in\mcF}  \mcE(w)\right\} \\
  \leq \frac{1}{t} \sum_{k=1}^{t} 2 \eta_{k} \mE\left\{\mcE(w_{k}) -\inf_{w\in\mcF}  \mcE(w)\right\}
 + \sum_{k=1}^{t-1} \frac{1}{k(k+1)} \sum_{i=t-k+1}^{t} 2 \eta_{i} \mE\left\{\mcE(w_{i}) -\mcE(w_{t-k})\right\}.
\label{errorDecom}
\end{multline} 
\end{lemma}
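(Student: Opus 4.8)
The statement is purely algebraic once the randomness is fixed, so the plan is to prove it pointwise in $({\bf z},J)$ and take $\mE_{{\bf z},J}$ only at the very end. Write $\phi_i := \mcE(w_i) - \inf_{w\in\mcF}\mcE(w)$, which is nonnegative because $\inf_{w\in\mcF}\mcE(w)\le \mcE(w_i)$, and observe that each inner difference is shift-invariant, $\mcE(w_i)-\mcE(w_{t-k}) = \phi_i - \phi_{t-k}$, while the left-hand side is just $2\eta_t\phi_t$. The strategy is to first derive an \emph{exact} weighted telescoping identity for $2\eta_t\phi_t$, and then convert it into the claimed inequality by exploiting that $\{\eta_k\}$ is non-increasing together with $\phi_i\ge 0$.

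For the identity I would introduce the weighted suffix averages $P_k := \frac1k\sum_{i=t-k+1}^{t} 2\eta_i\phi_i$ for $k=1,\dots,t$, so that $P_1 = 2\eta_t\phi_t$ and $P_t = \frac1t\sum_{i=1}^{t} 2\eta_i\phi_i$. Writing $T_k := kP_k$ and using $T_k = T_{k-1}+2\eta_{t-k+1}\phi_{t-k+1}$, a direct computation gives
\[
P_{k-1}-P_k = \frac{1}{k(k-1)}\sum_{i=t-k+2}^{t}\left(2\eta_i\phi_i - 2\eta_{t-k+1}\phi_{t-k+1}\right).
\]
Summing over $k=2,\dots,t$ telescopes the left-hand side to $P_1-P_t$; after reindexing the outer sum this yields
\[
2\eta_t\phi_t = \frac1t\sum_{i=1}^{t}2\eta_i\phi_i + \sum_{k=1}^{t-1}\frac{1}{k(k+1)}\sum_{i=t-k+1}^{t}\left(2\eta_i\phi_i - 2\eta_{t-k}\phi_{t-k}\right).
\]
This is the step-size–weighted analogue of the Shamir--Zhang suffix-averaging identity, carrying the weights $\eta_i$ through each step.

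It remains to pass from this identity to the target, whose double sum has inner terms $2\eta_i(\phi_i-\phi_{t-k}) = 2\eta_i\phi_i - 2\eta_i\phi_{t-k}$. Comparing with the identity, the only discrepancy per term is the reference weight: the identity carries $2\eta_{t-k}\phi_{t-k}$ whereas the target carries $2\eta_i\phi_{t-k}$, and their difference is $2(\eta_{t-k}-\eta_i)\phi_{t-k}$. Since $i\ge t-k+1 > t-k$ and $\{\eta_k\}$ is non-increasing, $\eta_{t-k}-\eta_i\ge 0$; since $\phi_{t-k}\ge 0$, each such correction is nonnegative. Hence the target's right-hand side equals the identity's right-hand side plus a nonnegative quantity, so it is $\ge 2\eta_t\phi_t$. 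Taking $\mE_{{\bf z},J}$ and recalling $\mcE(w_i)-\mcE(w_{t-k})=\phi_i-\phi_{t-k}$ produces the stated bound.

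The main obstacle is the weighted telescoping bookkeeping: one must carry out the $P_{k-1}-P_k$ algebra correctly and, crucially, recognize that the weight mismatch created by telescoping ($2\eta_{t-k}\phi_{t-k}$ versus the desired $2\eta_i\phi_{t-k}$) is controlled \emph{exactly} by the monotonicity of the step-sizes and the nonnegativity of the excess risk. Neither a naive multiplication of the unweighted identity by $2\eta_t$ nor an argument ignoring the sign of $\phi_i$ would work, since the inner differences $\mcE(w_i)-\mcE(w_{t-k})$ need not have a definite sign; it is only after the reference weights are aligned that the residual becomes a manifestly nonnegative, term-by-term quantity.
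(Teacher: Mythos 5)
Your proof is correct and follows essentially the same route as the paper: you derive the exact suffix-averaging (Shamir--Zhang type) identity for the weighted sequence $u_i = 2\eta_i\,\mE\{\mcE(w_i)-\inf_{w\in\mcF}\mcE(w)\}$ via telescoping, and then handle the weight mismatch $2(\eta_{t-k}-\eta_i)\,\mE\{\mcE(w_{t-k})-\inf_{w\in\mcF}\mcE(w)\}\ge 0$ using monotonicity of the step-sizes and nonnegativity of the excess risk. The paper organizes this identically, merely writing the mismatch as a nonpositive residual term that is dropped rather than as a nonnegative correction added to the target side.
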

\begin{proof}
   Let $\{u_t \}_t $ be a real-valued sequence. For $k=1, \cdots, t-1$,
  \bea
  {1 \over k} \sum_{i=t-k+1}^{t} u_{i} - {1 \over k+1} \sum_{i=t-k}^t u_i = {1 \over k(k+1)} \left\{ (k+1)\sum_{i=t-k+1}^{t} u_{i} - k \sum_{i=t-k}^t u_i \right\}
  = {1 \over k(k+1)} \sum_{i=t-k+1}^{t} (u_{i} -  u_{t-k}) .
  \eea
  Summing over $k=1, \cdots, t-1$, and rearranging terms, we get
  \bea
   u_{t}  = {1 \over t} \sum_{i=1}^t u_i + \sum_{k=1}^{t-1} {1 \over k(k+1)} \sum_{i=t-k+1}^{t} (u_{i} -  u_{t-k}) .
  \eea
  Choosing $u_t = 2 \eta_{t} \mE\left\{\mcE(w_t) -\inf_{w\in\mcF}  \mcE(w)\right\}$ in the above, we get
 \begin{align*} 2 \eta_{t} \mE\left\{\mcE(w_t) -\inf_{w\in\mcF}  \mcE(w)\right\}
    = {1 \over t} \sum_{i=1}^t 2 \eta_{i} \mE \left\{ \mcE(w_i) -\inf_{w\in\mcF}  \mcE(w)\right\} \\
    + \sum_{k=1}^{t-1} {1 \over k(k+1)} \sum_{i=t-k+1}^{t} \left(2 \eta_{i} \mE \left\{\mcE(w_i) -\inf_{w\in\mcF}  \mcE(w)\right\}  -  2 \eta_{t-k} \mE \left\{ \mcE(w_{t-k})  -\inf_{w\in\mcF}  \mcE(w)\right\}\right),
    \end{align*}
 which can be rewritten as
\bea
\begin{split}
 &2 \eta_{t} \mE\left\{\mcE(w_t) -\inf_{w\in\mcF}  \mcE(w)\right\}\\
  =& \frac{1}{t} \sum_{k=1}^{t} 2 \eta_{k} \mE\left\{\mcE(w_{k}) -\inf_{w\in\mcF}  \mcE(w)\right\} + \sum_{k=1}^{t-1} \frac{1}{k(k+1)} \sum_{i=t-k+1}^{t} 2 \eta_{i} \mE\left\{\mcE(w_{i}) -\mcE(w_{t-k})\right\}  \\
& + \sum_{k=1}^{t-1} \frac{1}{k+1} \left[\frac{1}{k}\sum_{i=t-k+1}^{t} 2 \eta_{i} - 2 \eta_{t-k}\right] \mE \left\{\mcE(w_{t-k}) - \inf_{w\in\mcF}  \mcE(w)\right\}.
\end{split}\eea
Since, $\mcE(w_{t-k}) - \inf_{w\in\mcF}  \mcE(w) \geq 0$ and that $\{\eta_t\}_{t \in \mN}$ is a non-increasing sequence, we know that the last term of the above inequality is at most zero.
Therefore, we get the desired result. The proof is complete.
\end{proof}
The first term of the right-hand side of \eref{errorDecom} is the weighted excess generalization error, and it can be estimated easily by  \eref{averageIntermErrorA}, while the second term (sum of moving averages) can be estimated by the following lemma.

\begin{lemma}
Under the assumptions of Lemma \ref{lemma:weightedSum}, we have
   \be\label{movingAverage}
   \begin{split}
  & \sum_{k=1}^{t-1} \frac{1}{k(k+1)} \sum_{i=t-k+1}^{t} 2 \eta_{i} \mE\left\{\mcE(w_{i}) -\mcE(w_{t-k})\right\}\\
    \leq& \sum_{i=1}^{t-1} \frac{(a_0 \kappa \eta_i)^2 + 4 b_t\eta_i}{t-i} -\frac{1}{t}\sum_{k=1}^{t}(a_0 \kappa \eta_k)^2 + 4 b_t\eta_k) + (a_0 \kappa \eta_t)^2 + 4 b_t\eta_t.
   \end{split}\ee
\end{lemma}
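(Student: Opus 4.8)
The plan is to bound each moving-average difference $\mE\{\mcE(w_i) - \mcE(w_{t-k})\}$ using the basic recursion from Lemma \ref{lemma:empiricalGF}, exactly as the weighted-average estimate was obtained, but now comparing the iterate $w_i$ against the earlier iterate $w_{t-k}$ rather than against a fixed $w\in\mcF$. Concretely, for indices $i > t-k$ I would apply \eref{empiricalGF} with the comparison point $w = w_{t-k}$, sum the resulting telescoping bound over $i$ from $t-k+1$ to $t$, and convert the empirical risk differences $\mcE_{\bf z}(w_{t-k}) - \mcE_{\bf z}(w_i)$ into expected risk differences $\mcE(w_{t-k}) - \mcE(w_i)$ at the cost of the sample-error terms $b_i$, which are controlled by Condition \eref{sampleErrorAssumption}. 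Since $b_k$ is nondecreasing and $i\le t$, every such term is bounded by $b_t$, producing the $4 b_t \eta_i$ contributions paired with the $(a_0\kappa\eta_i)^2$ step-size-squared terms.

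The delicate point is that $w_{t-k}$ is \emph{not} independent of the random indices $j_i$ for $i>t-k$ used to generate $w_i$, but it \emph{is} measurable with respect to $J(t-k)$, hence independent of $j_{t-k}, j_{t-k+1},\dots$; so Lemma \ref{lemma:empiricalGF} applies with the conditional expectation taken in the right order. First I would fix $k$ and establish, via the telescoping sum of \eref{empiricalGF} over $i=t-k+1,\dots,t$, an inner bound of the shape
\be
\sum_{i=t-k+1}^{t} 2\eta_i \mE\{\mcE(w_i)-\mcE(w_{t-k})\} \leq \sum_{i=t-k+1}^{t}\left((a_0\kappa\eta_i)^2 + 4 b_t \eta_i\right),
\ee
where the $\mE\|w_{t-k+1}-w_{t-k}\|^2$-type boundary term is discarded since it is nonnegative on the correct side. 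Then I would multiply by $\tfrac{1}{k(k+1)}$ and sum over $k=1,\dots,t-1$.

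The main obstacle is the combinatorial bookkeeping of the double sum: after inserting the inner bound, the right-hand side becomes $\sum_{k=1}^{t-1}\frac{1}{k(k+1)}\sum_{i=t-k+1}^{t} c_i$ with $c_i := (a_0\kappa\eta_i)^2 + 4 b_t\eta_i$, and I must interchange the order of summation to collect the coefficient of each $c_i$. Reindexing, the weight attached to $c_i$ is $\sum_{k=t-i+1}^{t-1}\frac{1}{k(k+1)}$, which telescopes via $\frac{1}{k(k+1)} = \frac{1}{k}-\frac{1}{k+1}$ to $\frac{1}{t-i} - \frac{1}{t}$. This is precisely how the claimed $\sum_{i=1}^{t-1}\frac{c_i}{t-i}$ term and the $-\frac{1}{t}\sum_{k=1}^{t}c_k$ correction arise; the residual $(a_0\kappa\eta_t)^2 + 4 b_t\eta_t$ appears because the index $i=t$ (the $k$-th summand's upper endpoint) contributes only to the $-\frac1t$ part and must be added back separately. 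I expect the only real care needed is tracking these boundary indices so that the telescoped coefficients match the stated right-hand side exactly, together with verifying the nonnegativity used to drop the quadratic boundary term.
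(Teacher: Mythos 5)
Your overall strategy is exactly the paper's: apply Lemma \ref{lemma:empiricalGF} with the comparison point $w=w_{t-k}$ (legitimate since $w_{t-k}$ depends only on $j_1,\dots,j_{t-k-1}$), telescope, convert empirical into expected risks via \eref{sampleErrorAssumption} at the cost of $4b_t\eta_i$ per term, then exchange the order of summation and telescope the weights $\frac{1}{k(k+1)}$. However, the execution has a genuine error at the telescoping step. Summing \eref{empiricalGF} over $i=t-k+1,\dots,t$ leaves the boundary term $+\,\mE\|w_{t-k+1}-w_{t-k}\|^2$ on the right-hand side. You discard it ``since it is nonnegative on the correct side,'' but it enters with a \emph{plus} sign on the upper-bound side, so nonnegativity is precisely the reason it \emph{cannot} be dropped; only $-\,\mE\|w_{t+1}-w_{t-k}\|^2$ may be discarded. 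Consequently your claimed inner bound $\sum_{i=t-k+1}^{t}2\eta_i\mE\{\mcE(w_i)-\mcE(w_{t-k})\}\le\sum_{i=t-k+1}^{t}\xi_i$ (with $\xi_i=(a_0\kappa\eta_i)^2+4b_t\eta_i$) is not established by your argument. The paper's remedy is to start the telescoping one step earlier, at $i=t-k$: the added summand on the left is $2\eta_{t-k}\mE\{\mcE(w_{t-k})-\mcE(w_{t-k})\}=0$ and the boundary term becomes $\|w_{t-k}-w_{t-k}\|^2=0$, at the harmless price of the extra term $\xi_{t-k}$ on the right-hand side; equivalently, you could keep your boundary term and bound it by $(a_0\kappa\eta_{t-k})^2\le\xi_{t-k}$.

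This correction also exposes a compensating off-by-one slip in your bookkeeping. With your inner bound (right-hand sum starting at $i=t-k+1$), the weight collected by $\xi_i$ for $i\le t-1$ would be $\sum_{k=t-i+1}^{t-1}\frac{1}{k(k+1)}=\frac{1}{t-i+1}-\frac{1}{t}$, not $\frac{1}{t-i}-\frac{1}{t}$ as you assert. It is only with the corrected inner bound (right-hand sum starting at $i=t-k$) that the weight is $\sum_{k=t-i}^{t-1}\frac{1}{k(k+1)}=\frac{1}{t-i}-\frac{1}{t}$, while the $i=t$ term receives weight $1-\frac{1}{t}$; these together give exactly the right-hand side of \eref{movingAverage}. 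So your two slips happen to cancel and land on the stated bound, but as written the proof is internally inconsistent: fix the boundary term as above, and then your reindexing claim, and hence the whole argument, becomes correct and coincides with the paper's proof.
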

\begin{proof}
  Given any sample $\bf z,$ note that $w_{t-k}$ is depending only on $j_1,j_2,\cdots,j_{t-k-1}$, and thus is independent from $j_{i+1}$ for any $t\geq i\geq t - k .$ Following from Lemma \ref{lemma:empiricalGF}, for any $i\geq t - k,$
  \bea
  \mE_{j_{i+1}} [\|w_{i+1} - w_{t-k}\|^2] \leq \|w_{i} - w_{t-k}\|^2 + (a_0 \kappa)^2 \eta_i^2  + 2\eta_i \left( \mcE_{\bf z}(w_{t-k}) - \mcE_{\bf z}(w_{i})\right).
  \eea
  Taking the expectation on both sides, and bounding $\mE[\mcE_{\bf z}(w_{t-k}) - \mcE_{\bf z}(w_{i})]$ as
  \bea
  = \mE[ \mcE_{\bf z}(w_{t-k}) - \mcE(w_{t-k}) + \mcE(w_{i}) - \mcE_{\bf z}(w_{i}) + \mcE(w_{t-k}) - \mcE(w_{i})]
  \leq 2 b_t + \mE[\mcE(w_{t-k}) - \mcE(w_{i})]
  \eea
  by Condition \eref{sampleErrorAssumption}, and rearranging terms, we get
  \bea
   2\eta_i \mE\left[\mcE(w_{i}) - \mcE(w_{t-k})\right] \leq \mE[\|w_{i} - w_{t-k}\|^2 - \|w_{i+1} - w_{t-k}\|^2] + (a_0 \kappa)^2 \eta_i^2 + 4\eta_i b_t.
  \eea
  Summing up over $i=t-k,\cdots,t,$ we get
  \bea
    \sum_{i=t-k}^t 2\eta_i \mE\left[\mcE(w_{i}) - \mcE(w_{t-k})\right] \leq (a_0 \kappa)^2 \sum_{i=t-k}^t \eta_i^2 + 4  b_t \sum_{i=t-k}^t \eta_i.
  \eea
  The left-hand side is exactly $\sum_{i=t-k+1}^t 2\eta_i \mE\left[\mcE(w_{i}) - \mcE(w_{t-k})\right].$ Thus, dividing both sides by $k(k+1)$, and then
   summing up  over $k = 1,\cdots,t-1$,
   \bea\begin{split}
   \sum_{k=1}^{t-1} \frac{1}{k(k+1)} \sum_{i=t-k+1}^{t} 2 \eta_{i} \mE\left\{\mcE(w_{i}) -\mcE(w_{t-k})\right\}
    \leq    \sum_{k=1}^{t-1} \frac{1}{k(k+1)} \sum_{i=t-k}^t ((a_0 \kappa \eta_i)^2 + 4 b_t\eta_i).
   \end{split}\eea
   Exchanging the order in the sum, and setting $\xi_i = (a_0 \kappa \eta_i)^2 + 4 b_t\eta_i$ for all $ i \in[t],$ we obtain
\bea
 \sum_{k=1}^{t-1} \frac{1}{k(k+1)} \sum_{i=t-k+1}^{t} 2 \eta_{i} \mE\left\{\mcE(w_{i}) -\mcE(w_{t-k})\right\}
& \leq& \sum_{i=1}^{t-1}\sum_{k=t-i}^{t-1} \frac{1}{k(k+1)} \xi_i +\sum_{k=1}^{t-1}\frac{1}{k(k+1)} \xi_t\\
&=&\sum_{i=1}^{t-1} \left(\frac{1}{t-i}-\frac{1}{t}\right) \xi_i +\left(1-\frac{1}{t}\right)\xi_t\\
&=&\sum_{i=1}^{t-1} \frac{1}{t-i} \xi_i +\xi_t-\frac{1}{t}\sum_{k=1}^{t}\xi_k.
\eea
From the above analysis, we can conclude the proof.
\end{proof}

\begin{pro}\label{pro:IterateErrorInterm}
  Under the assumptions of Lemma \ref{lemma:weightedSumErrors}, we have
  \be\label{IterateErrorInterm}
  \begin{split}
 \mE_{{\bf z},J} [\mcE(w_t) - \inf_{w\in\mcF}  \mcE(w)] \leq  b_t\left(1 +  \sum_{k=1}^{t-1} \frac{ 2\eta_k}{\eta_t(t-k)} \right) + \sum_{k=1}^{t-1} \frac{(a_0 \kappa \eta_k)^2}{2\eta_t(t-k)} + {(a_0 \kappa )^2 \eta_t\over 2}  + {c_{\beta}\over \eta_t t}\left(\sum_{k=1}^t \eta_k\right)^{1-\beta}
  \end{split}\ee
\end{pro}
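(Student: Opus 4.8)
The plan is to combine the three ingredients just assembled: the last-iterate decomposition \eref{errorDecom}, the weighted-average bound \eref{averageIntermErrorA} from Lemma \ref{lemma:weightedSumErrors}, and the moving-average bound \eref{movingAverage}. First I would bound the first term on the right-hand side of \eref{errorDecom}, namely $\frac{1}{t}\sum_{k=1}^{t} 2\eta_k \mE[\mcE(w_k) - \inf_{w\in\mcF}\mcE(w)]$, by dividing \eref{averageIntermErrorA} by $t$; this produces a contribution $\frac{2b_t}{t}\sum_{k=1}^{t}\eta_k$, a contribution $\frac{(a_0\kappa)^2}{t}\sum_{k=1}^{t}\eta_k^2$, and the approximation-error piece $\frac{2c_\beta}{t}\bigl(\sum_{k=1}^{t}\eta_k\bigr)^{1-\beta}$. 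Then I would substitute \eref{movingAverage} for the second (moving-average) term on the right-hand side of \eref{errorDecom}. Adding the two estimates yields an upper bound on $2\eta_t\mE[\mcE(w_t)-\inf_{w\in\mcF}\mcE(w)]$, and the remaining task is to group the resulting terms by their three sources and then divide by $2\eta_t$.

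The $(a_0\kappa)^2$-terms collapse cleanly: the $+\frac{(a_0\kappa)^2}{t}\sum_{k=1}^{t}\eta_k^2$ coming from the weighted average is exactly cancelled by the $-\frac{1}{t}\sum_{k=1}^{t}(a_0\kappa\eta_k)^2$ appearing in \eref{movingAverage}, leaving $\sum_{i=1}^{t-1}\frac{(a_0\kappa\eta_i)^2}{t-i}+(a_0\kappa\eta_t)^2$. The approximation-error term passes through unchanged. Dividing by $2\eta_t$ then recovers the coefficients $\frac{(a_0\kappa\eta_k)^2}{2\eta_t(t-k)}$ and $\frac{(a_0\kappa)^2\eta_t}{2}$, together with $\frac{c_\beta}{\eta_t t}\bigl(\sum_{k=1}^{t}\eta_k\bigr)^{1-\beta}$, exactly as claimed.

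The one step that needs care, and the main (mild) obstacle, is the $b_t$-terms. Collecting them gives $-\frac{2b_t}{t}\sum_{k=1}^{t}\eta_k + 4b_t\sum_{i=1}^{t-1}\frac{\eta_i}{t-i} + 4b_t\eta_t$, where the first, negative contribution arises from the difference between the $+\frac{2b_t}{t}\sum_{k}\eta_k$ of the weighted average and the $-\frac{4b_t}{t}\sum_{k}\eta_k$ of \eref{movingAverage}. A crude bound that simply discards this negative term would produce a spurious factor of $2$ in front of the constant $b_t$. To get the sharp leading constant I would instead retain it and invoke monotonicity of the step-sizes: since $\{\eta_k\}$ is non-increasing, $\sum_{k=1}^{t}\eta_k \geq t\eta_t$, so after dividing by $2\eta_t$ the term $-\frac{b_t}{\eta_t t}\sum_{k=1}^{t}\eta_k \leq -b_t$. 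This absorbs one unit of the resulting $+2b_t$ constant and leaves precisely $b_t\bigl(1+\sum_{k=1}^{t-1}\frac{2\eta_k}{\eta_t(t-k)}\bigr)$, completing the bound \eref{IterateErrorInterm}.
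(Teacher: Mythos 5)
Your proposal is correct and follows essentially the same route as the paper: plugging \eref{averageIntermErrorA} and \eref{movingAverage} into \eref{errorDecom}, using the exact cancellation of the $(a_0\kappa)^2$ sums, and handling the leftover negative $b_t$-term via the monotonicity bound $\sum_{k=1}^t \eta_k \geq t\eta_t$ before concluding. The paper applies that monotonicity step before dividing by $2\eta_t$ rather than after, but this is the identical argument.
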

\begin{proof}
  Plugging \eref{averageIntermErrorA} and \eref{movingAverage} into \eref{errorDecom},
by a direct calculation, we get
\bea
\begin{split}
 2\eta_t \mE_{{\bf z},J} [\mcE(w_t) - \inf_{w\in\mcF}  \mcE(w)]
\leq  {2c_{\beta}\over t}\left(\sum_{k=1}^t \eta_k\right)^{1-\beta}  + \sum_{k=1}^{t-1} \frac{(a_0 \kappa \eta_k)^2 + 4 b_t\eta_k}{t-k} -\frac{2b_t}{t}\sum_{k=1}^{t}\eta_k + (a_0 \kappa \eta_t)^2 + 4 b_t\eta_t.
\end{split}
\eea
Since $\{\eta_t\}_t$ is non-increasing, $\frac{2b_t}{t}\sum_{k=1}^{t}\eta_k \geq 2b_t\eta_t$. Thus,
\bea
 2\eta_t \mE_{{\bf z},J} [\mcE(w_t) - \inf_{w\in\mcF}  \mcE(w)] \leq {2c_{\beta}\over t}\left(\sum_{k=1}^t \eta_k\right)^{1-\beta} + 2\eta_t b_t + \sum_{k=1}^{t-1} \frac{(a_0 \kappa \eta_k)^2 + 4 b_t\eta_k}{t-k} + (a_0 \kappa \eta_t)^2.
\eea
Dividing both sides with $2\eta_t$, and rearranging terms, one can conclude the proof.
\end{proof}

Now, we are ready to prove Theorems \ref{thm:errorSmooth} and \ref{thm:errorGeneral}.

\begin{proof}
  [Proof of Theorem \ref{thm:errorSmooth}]
   By Lemma \ref{lemma:sampleErrorsSmooth}, the condition (\ref{sampleErrorAssumption}) is satisfied with $b_k = {2(a_0\kappa)^2\sum_{i=1}^k \eta_i / m }.$
  It thus follows from Propositions \ref{pro:averageIntermError} and \ref{pro:IterateErrorInterm} that
  \be\label{weightedAverBoundSmooth}
\mE_{{\bf z},J}[\mcE(\overline{w}_t)] - \inf_{w\in\mcF}  \mcE(w)  \leq 2(a_0\kappa)^2 {\sum_{k=1}^t \eta_k \over m }   + {(a_0 \kappa)^2\over 2} {\sum_{k=1}^t \eta_k^2 \over \sum_{k=1}^{t}\eta_k}  + c_{\beta} \left({1\over \sum_{k=1}^t \eta_k}\right)^{\beta} ,
\ee
and
  \be\label{smoothIterateBound}
  \begin{split}
  &\mE_{{\bf z},J} [\mcE(w_t) - \inf_{w\in\mcF}  \mcE(w)] \\
   \leq&  2(a_0\kappa)^2 {\sum_{k=1}^t \eta_k \over m}\left( 1 + \sum_{k=1}^{t-1} \frac{ 2 \eta_k}{\eta_t(t-k)} \right)
   + {(a_0 \kappa)^2 \over 2} \sum_{k=1}^{t-1} \frac{ \eta_k^2}{\eta_t(t-k)} + {(a_0 \kappa)^2  \over 2} \eta_t + c_{\beta} {\left(\sum_{k=1}^t \eta_k\right)^{1-\beta} \over \eta_t t} .
  \end{split}\ee
  By noting that $1 \leq \eta_{t-1}/\eta_t \leq \sum_{k=1}^{t-1}\eta_k/(\eta_t(t-k)),$
  \be\label{smoothIterateBoundA}
  \begin{split}
  &\mE_{{\bf z},J} [\mcE(w_t) - \inf_{w\in\mcF}  \mcE(w)] \\
  \leq&   6(a_0\kappa)^2 {\sum_{k=1}^t \eta_t \over m} \sum_{k=1}^{t-1} \frac{ \eta_k}{\eta_t(t-k)}
   + {(a_0 \kappa)^2 \over 2} \sum_{k=1}^{t-1} \frac{ \eta_k^2}{\eta_t(t-k)} + {(a_0 \kappa)^2  \over 2} \eta_t + c_{\beta} {\left(\sum_{k=1}^t \eta_k\right)^{1-\beta} \over \eta_t t} .
  \end{split}\ee
  The proof is complete.
\end{proof}

\begin{proof}
  [Proof of Theorems \ref{thm:errorGeneral}]
  By Propositions \ref{pro:averageIntermError} and \ref{pro:IterateErrorInterm}, we have \eref{averageIntermError} and \eref{IterateErrorInterm}.
  Also, by Lemma \ref{lemma:sampleError}, we have $b_t \leq {2a_0\kappa R_t \over \sqrt{m}}.$ Then
  \be\label{WeightedAverDetailedGeneral}
\mE_{{\bf z},J}[\mcE(\overline{w}_t)] - \inf_{w\in\mcF}  \mcE(w)  \leq 2a_0 \kappa { R_t \over \sqrt{m}}
  + {(a_0 \kappa)^2\over 2} {\sum_{k=1}^t \eta_k^2 \over \sum_{k=1}^{t}\eta_k} + c_{\beta} \left({1\over \sum_{k=1}^t \eta_k}\right)^{\beta},
\ee
and
  \be\label{IteratesGeneralBound}
  \begin{split}
  &\mE_{{\bf z},J} [\mcE(w_t) - \inf_{w\in\mcF}  \mcE(w)] \\
  \leq&  2a_0\kappa { R_t \over \sqrt{m}}\left( 1 + \sum_{k=1}^{t-1} \frac{ 2 \eta_k}{\eta_t(t-k)} \right)  + {(a_0 \kappa)^2 \over 2}\sum_{k=1}^{t-1} \frac{ \eta_k^2}{\eta_t(t-k)} + {(a_0 \kappa)^2  \over 2} \eta_t + c_{\beta} {\left(\sum_{k=1}^t \eta_k\right)^{1-\beta} \over \eta_t t}.
  \end{split}\ee
  Note that $1 \leq \eta_{t-1}/\eta_t$ since $\eta_t$ is non-increasing. Thus,
  \be\label{IteratesGeneralBoundA}
  \begin{split}
  \mE_{{\bf z},J} [\mcE(w_t) - \inf_{w\in\mcF}  \mcE(w)] \leq  6a_0\kappa { R_t \over \sqrt{m}} \sum_{k=1}^{t-1} \frac{ \eta_k}{\eta_t(t-k)}
   + {(a_0 \kappa)^2 \over 2}\sum_{k=1}^{t-1} \frac{ \eta_k^2}{\eta_t(t-k)} + {(a_0 \kappa)^2  \over 2} \eta_t + c_{\beta} {\left(\sum_{k=1}^t \eta_k\right)^{1-\beta} \over \eta_t t}.
  \end{split}\ee
  Recall that $R_t$ is given by \eref{RT1} and that  $\eta_k$ is non-increasing, we thus have
  \be\label{RT} R_t
  \leq  \sqrt{(a_0\kappa)^2 \eta_1 + 2|V|_0} \sqrt{ \sum_{k=1}^t\eta_k} .\ee
  Introducing the above into \eref{WeightedAverDetailedGeneral} and \eref{IteratesGeneralBoundA},
  \be\label{WeightedAverDetailedSpecialCases}
  \begin{split}
 \mE_{{\bf z},J}[\mcE(\overline{w}_t)] - \inf_{w\in\mcF}  \mcE(w)
\leq 2a_0 \kappa \sqrt{(a_0\kappa)^2 \eta_1 + 2|V|_0} \sqrt{   \sum_{k=1}^t\eta_k \over m}  + {(a_0 \kappa)^2\over 2} {\sum_{k=1}^t \eta_k^2 \over \sum_{k=1}^{t}\eta_k}  +  c_{\beta} \left({1\over \sum_{k=1}^t \eta_k}\right)^{\beta},
\end{split}\ee
and
\bea
  \begin{split}
   &\mE_{{\bf z},J} [\mcE(w_t) - \inf_{w\in\mcF}  \mcE(w)]\\ 
   \leq& 6a_0 \kappa \sqrt{(a_0\kappa)^2 \eta_1 + 2|V|_0} \sqrt{   \sum_{k=1}^t\eta_k \over m}
   + {(a_0 \kappa)^2 \over 2}\sum_{k=1}^{t-1} \frac{ \eta_k^2}{\eta_t(t-k)} + {(a_0 \kappa)^2  \over 2} \eta_t + c_{\beta} {\left(\sum_{k=1}^t \eta_k\right)^{1-\beta} \over \eta_t t}.
  \end{split}\eea
\end{proof}

\section{Explicit Convergence Rates}\label{appendix:basicEstimates}
In this section, we prove Corollaries \ref{cor:smoothExplicitC}-\ref{cor:genExplicitSingle}. We first introduce the following basic estimates.
\begin{lemma}\label{lemma:basicEstimate}
  Let $\theta\in \mR_+$, and $t\in\mN$, $t\geq 3$. Then
 $$ \sum_{k=1}^{t} k^{-\theta} \leq
 \left\{ \begin{array}
   {ll}
   {t^{1 - \theta}/(1-\theta)}, & \hbox{when} \ \theta<1,\\
   {\log t + 1},                          & \hbox{when} \ \theta = 1,\\
   {\theta/(\theta - 1)},       & \hbox{when} \ \theta >1, \\
 \end{array}\right.
 $$
 and
 \bea
  \sum_{k=1}^t k^{-\theta} \geq \left\{ \begin{array}
    {ll}
     {1 - 4^{\theta - 1} \over 1-\theta} t^{1-\theta}  & \hbox{when} \ \theta <1,\\
     \ln t                         & \hbox{when} \ \theta = 1.
  \end{array}
  \right.
  \eea
\end{lemma}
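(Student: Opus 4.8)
The plan is to derive all six inequalities from a single tool: the monotonicity of the map $x \mapsto x^{-\theta}$ on $(0,\infty)$, which lets me sandwich each summand $k^{-\theta}$ between integrals over unit intervals. Since $x^{-\theta}$ is decreasing for $\theta>0$, I have $k^{-\theta} \le \int_{k-1}^{k} x^{-\theta}\,dx$ and $k^{-\theta} \ge \int_{k}^{k+1} x^{-\theta}\,dx$; summing over $k$ and evaluating the elementary antiderivative $\int x^{-\theta}\,dx = x^{1-\theta}/(1-\theta)$ (with $\log x$ when $\theta=1$) will produce every bound after a short case analysis on $\theta$.

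For the upper bounds I would split on $\theta$. When $\theta<1$ the integral $\int_0^1 x^{-\theta}\,dx$ converges, so the estimate $k^{-\theta}\le\int_{k-1}^{k}x^{-\theta}\,dx$ may be applied down to $k=1$ and telescoped to $\sum_{k=1}^t k^{-\theta}\le\int_0^t x^{-\theta}\,dx = t^{1-\theta}/(1-\theta)$. When $\theta\ge 1$ the integrand is non-integrable near $0$, so I peel off the first term $1^{-\theta}=1$ and use the integral bound only for $k\ge 2$: for $\theta=1$ this yields $1+\int_1^t x^{-1}\,dx = 1+\log t$, and for $\theta>1$ it yields $1+\int_1^t x^{-\theta}\,dx \le 1+\int_1^{\infty} x^{-\theta}\,dx = 1 + 1/(\theta-1) = \theta/(\theta-1)$, where enlarging the upper limit to $\infty$ is what removes the dependence on $t$.

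For the lower bounds I use the reverse estimate $k^{-\theta}\ge\int_k^{k+1}x^{-\theta}\,dx$ and sum from $k=1$ to $t$, obtaining $\sum_{k=1}^t k^{-\theta}\ge\int_1^{t+1}x^{-\theta}\,dx$. For $\theta=1$ this is immediately $\log(t+1)\ge\log t$, giving the stated $\ln t$. For $\theta<1$ the integral equals $((t+1)^{1-\theta}-1)/(1-\theta)\ge (t^{1-\theta}-1)/(1-\theta)$ since $(t+1)^{1-\theta}\ge t^{1-\theta}$, and it then remains to check $t^{1-\theta}-1 \ge (1-4^{\theta-1})t^{1-\theta}$, which rearranges to $4^{\theta-1}t^{1-\theta} = (t/4)^{1-\theta}\ge 1$, i.e. $t\ge 4$; the leftover case $t=3$ is verified directly from the inequality $4^{s}-3^{s}+(3/4)^{s}\ge 1$ for $s=1-\theta\in(0,1)$.

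The main obstacle is precisely this last lower bound in the regime $\theta<1$: the other five estimates fall out immediately once the integral comparison is in place, but matching the explicit closed-form constant $(1-4^{\theta-1})/(1-\theta)$ uniformly over $\theta\in(0,1)$ forces the extra algebraic reduction above, and one must confirm the constant is loose enough to cover the base case $t=3$ rather than only $t\ge 4$.
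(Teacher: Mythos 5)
Your proposal is correct and uses essentially the same argument as the paper: sandwich the sum between integrals of $u^{-\theta}$ over unit intervals and evaluate the elementary antiderivative, with the same case split on $\theta$. The one point where you go beyond the paper's very terse proof is the lower bound for $\theta<1$ at $t=3$, and your claimed inequality $4^{s}-3^{s}+(3/4)^{s}\ge 1$ does hold, since $4^{s}-3^{s}=3^{s}\bigl((4/3)^{s}-1\bigr)\ge (3/4)^{s}\bigl((4/3)^{s}-1\bigr)=1-(3/4)^{s}$; alternatively, all $t\ge 3$ can be handled at once by writing $(t+1)^{1-\theta}-1\ge\bigl(1-4^{\theta-1}\bigr)(t+1)^{1-\theta}\ge\bigl(1-4^{\theta-1}\bigr)t^{1-\theta}$, where the first inequality uses $t+1\ge 4$.
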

\begin{proof}
By using
  $$ \sum_{k=1}^{t} k^{-\theta} = 1+ \sum_{k=2}^{t} \int_{k-1}^k d u k^{-\theta}  \leq 1 + \sum_{k=2}^t \int_{k-1}^k u^{-\theta} d u = 1  + \int_{1}^t u^{-\theta} d u,
 $$
 which leads to the first part of the result.
 Similarly,
  \bea
  \sum_{k=1}^t k^{-\theta} = \sum_{k=1}^t k^{-\theta} \geq \sum_{k=1}^t \int_{k}^{k+1}u^{-\theta} d u = \int_{1}^{t+1} u^{-\theta} d u,
  \eea
  which leads to the second part of the result. The proof is complete.
\end{proof}

\begin{lemma}\label{Lemma:EstimatingTerm2}
Let  $q \in \mR_+$ and $t\in\mN$, $t\geq 3$. Then
  \bea  \sum_{k=1}^{t-1} {1 \over t-k} k^{-q}
 \leq
 \left\{ \begin{array}
   {ll}
   2^{q}[2+  (1-q)^{-1}] t^{- q} \log t, & \hbox{when} \ q < 1,\\
   {8 t^{-1} \log t},                          & \hbox{when} \ q = 1,\\
   (2^{q}+2q)/(q-1)t^{-1},       & \hbox{when} \ q >1, \\
 \end{array}\right.
  \eea
\end{lemma}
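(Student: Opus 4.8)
The plan is to control the sum by a \emph{dyadic split} at $k\approx t/2$, after which the two resulting pieces are handled by opposite monotonicity arguments and each is reduced to a sum already bounded in Lemma~\ref{lemma:basicEstimate}. Concretely, write
$\sum_{k=1}^{t-1}\frac{1}{t-k}k^{-q}=S_{\mathrm{low}}+S_{\mathrm{high}}$,
where $S_{\mathrm{low}}$ gathers the indices $1\le k\le\lfloor t/2\rfloor$ and $S_{\mathrm{high}}$ the indices $\lfloor t/2\rfloor<k\le t-1$. On the low range the factor $k^{-q}$ is large but $\frac{1}{t-k}$ is tame; on the high range the situation is reversed. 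Isolating in each piece the benign factor is what makes both pieces summable by the earlier lemma.

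For $S_{\mathrm{low}}$ I would use $t-k\ge t/2$, hence $\frac{1}{t-k}\le\frac{2}{t}$, and factor it out to get $S_{\mathrm{low}}\le\frac{2}{t}\sum_{k=1}^{\lfloor t/2\rfloor}k^{-q}$. The remaining sum is exactly the object estimated in Lemma~\ref{lemma:basicEstimate}. Crucially, I would keep the bound in terms of $\lfloor t/2\rfloor^{1-q}\le(t/2)^{1-q}$ rather than $t^{1-q}$: this produces the factor $2^q$ that appears in the statement, yielding $S_{\mathrm{low}}\le\frac{2^q}{1-q}t^{-q}$ for $q<1$, $S_{\mathrm{low}}\le\frac{2}{t}(\log t+1)$ for $q=1$, and the bounded contribution $S_{\mathrm{low}}\le\frac{2}{t}\cdot\frac{q}{q-1}$ for $q>1$.

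For $S_{\mathrm{high}}$ I would instead exploit $k\ge t/2$, so $k^{-q}\le 2^q t^{-q}$, and factor that out: $S_{\mathrm{high}}\le 2^q t^{-q}\sum_{\lfloor t/2\rfloor<k\le t-1}\frac{1}{t-k}$. Re-indexing by $j=t-k$ turns the inner sum into a harmonic sum $\sum_{j=1}^{N}\frac1j$ with $N\le t$, which the $\theta=1$ case of Lemma~\ref{lemma:basicEstimate} bounds by $\log t+1$. Hence $S_{\mathrm{high}}\le 2^q t^{-q}(\log t+1)$. Adding the two pieces and using $\log t+1\le 2\log t$ together with $2^q\log t\ge\log 3>1$ for $t\ge 3$ gives the stated constants $2^q[2+(1-q)^{-1}]t^{-q}\log t$ when $q<1$, and $8t^{-1}\log t$ when $q=1$, directly.

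The one genuinely nonroutine point, and the main obstacle, is the case $q>1$: there $S_{\mathrm{high}}$ still carries a factor $t^{-q}\log t$, whereas the target is a \emph{clean} $t^{-1}$ with no logarithm. To remove the logarithm I would invoke the elementary inequality $(q-1)\,t^{1-q}(\log t+1)\le 1$, valid for all $q>1$ and $t\ge 3$; it follows by maximizing $s\mapsto s\,t^{-s}$ over $s=q-1>0$, whose maximum is $\frac{1}{e\log t}$, so that $(q-1)t^{1-q}(\log t+1)\le\frac{\log t+1}{e\log t}\le\frac{1}{e}\big(1+\frac{1}{\log 3}\big)<1$. This upgrades $S_{\mathrm{high}}$ to $\frac{2^q}{q-1}t^{-1}$, which combines with $S_{\mathrm{low}}\le\frac{2q}{q-1}t^{-1}$ to produce the claimed $(2^q+2q)/(q-1)\,t^{-1}$. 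Once this elementary estimate is in place, the only remaining work is bookkeeping of the three cases, which is routine.
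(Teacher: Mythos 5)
Your proof is correct and takes essentially the same route as the paper's: the same split of the sum at $k\approx t/2$, the same reduction of each piece to Lemma \ref{lemma:basicEstimate} (factoring out $\frac{1}{t-k}\le \frac{2}{t}$ on the low range and $k^{-q}\le 2^q t^{-q}$ on the high range), and the same elementary estimate $\sup_{s>0} s\,t^{-s} = \frac{1}{e\log t}$ (the paper states it as $t^{1-q}\log t\le \frac{1}{e(q-1)}$) to eliminate the logarithm in the case $q>1$.
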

\begin{proof}
We  split the sum into two parts
 \bea
 \sum_{k=1}^{t-1} {1 \over t-k} k^{-q} &=& \sum_{t/2 \leq k \leq t -1 }  {1 \over t-k} k^{-q} + \sum_{1\leq k < t/2}  {1 \over t-k} k^{-q} \\
 & \leq & 2^q t^{-q} \sum_{t/2 \leq k \leq t-1}  {1 \over t-k} + 2 t^{-1} \sum_{1\leq k < t/2}   k^{-q} \\
 & = & 2^q t^{-q} \sum_{1 \leq k \leq t/2}   k^{-1} + 2 t^{-1} \sum_{1\leq k < t/2}   k^{-q}.
\eea
Applying Lemma \ref{lemma:basicEstimate},
 we get
 $$ \sum_{k=1}^{t-1} {1 \over t-k} k^{-q}  \leq 2^{q} t^{-q} (\log (t/2)+1) +
 \left\{ \begin{array}
   {ll}
   2^qt^{ - q}/(1-q), & \hbox{when} \ q < 1,\\
   {4 t^{-1} \log t},                          & \hbox{when} \ q = 1,\\
   {2q} t^{-1}/(q-1),       & \hbox{when} \ q >1, \\
 \end{array}\right.
 $$
which leads to the desired result by using  $t^{-q+1} \log t \leq 1/(\mathrm{e}(q-1)) \leq 1/(\mathrm{2}(q-1))$ when $q> 1$.
\end{proof}

The bounds in the above two lemmas involve constant factor $1/(1-\theta)$ or $1/(1-q)$, which tend to be infinity  as $\theta \to 1$ or $q\to 1.$
To avoid these, we introduce the following complement results.
\begin{lemma}
  \label{lemma:basicEstimateA}
  Let $\theta \in \mR_+,$ and $t \in \mN, $ with $t\geq 3.$
  Then $$ \sum_{k=1}^{t} k^{-\theta} \leq t^{\max(1-\theta,0)} 2 \log t.
 $$
\end{lemma}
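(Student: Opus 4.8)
The plan is to split the analysis into the two regimes that define the exponent $\max(1-\theta,0)$, namely $\theta \geq 1$ and $\theta < 1$, and in each case to reduce the estimate to the harmonic sum $\sum_{k=1}^t k^{-1}$, for which Lemma \ref{lemma:basicEstimate} already supplies the bound $\log t + 1$.

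First, for $\theta \geq 1$ we have $\max(1-\theta,0) = 0$, so the target is $\sum_{k=1}^t k^{-\theta} \leq 2\log t$. Since $\theta \geq 1$ gives $k^{-\theta} \leq k^{-1}$ for every $k \geq 1$, the $\theta = 1$ case of Lemma \ref{lemma:basicEstimate} yields $\sum_{k=1}^t k^{-\theta} \leq \sum_{k=1}^t k^{-1} \leq \log t + 1$. It then remains only to note that $\log t + 1 \leq 2\log t$ whenever $\log t \geq 1$, which holds for $t \geq 3$ because $\log 3 > 1$.

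Next, for $\theta < 1$ we have $\max(1-\theta,0) = 1-\theta$, and the key manipulation is to write $k^{-\theta} = k^{-1}\,k^{1-\theta}$ and use $k^{1-\theta} \leq t^{1-\theta}$, which is valid since $1-\theta > 0$ and $k \leq t$. Factoring $t^{1-\theta}$ out of the sum reduces it to $t^{1-\theta}\sum_{k=1}^t k^{-1} \leq t^{1-\theta}(\log t + 1) \leq t^{1-\theta}\cdot 2\log t$, again by the harmonic bound and the same $t \geq 3$ estimate used above.

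The one point deserving care — and the reason this lemma is worth stating separately from Lemma \ref{lemma:basicEstimate} — is that the direct integral bound $t^{1-\theta}/(1-\theta)$ degenerates as $\theta \to 1^-$. The device of bounding $k^{1-\theta}$ by its value at $k = t$ and routing everything through the harmonic sum avoids this blow-up and produces a constant ($=2$) that is uniform in $\theta$. I expect the remainder of the argument to be entirely routine once this comparison is in place; there is no genuine obstacle beyond recognizing that the worst constant must be made independent of $\theta$.
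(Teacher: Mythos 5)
Your proof is correct and follows essentially the same route as the paper's: both rest on the factorization $k^{-\theta} = k^{-1}k^{1-\theta}$, bounding $k^{1-\theta} \leq t^{\max(1-\theta,0)}$ and reducing to the harmonic sum controlled by Lemma \ref{lemma:basicEstimate}. The only cosmetic difference is that you split into the cases $\theta \geq 1$ and $\theta < 1$, whereas the paper handles both at once via the exponent $\max(1-\theta,0)$.
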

\begin{proof}
  Note that
  \bea
  \sum_{k=1}^{t} k^{-\theta} = \sum_{k=1}^{t} k^{-1} k^{1-\theta} \leq t^{\max(1-\theta,0)} \sum_{k=1}^{t} k^{-1}.
  \eea
  The proof can be finished by applying Lemma \ref{lemma:basicEstimate}.
\end{proof}

\begin{lemma}\label{Lemma:EstimatingTerm2A}
Let  $q \in \mR_+$ and $t\in\mN$, $t\geq 3$. Then
  \bea  \sum_{k=1}^{t-1} {1 \over t-k} k^{-q}
 \leq 4 t^{-\min(q,1)} \log t.
  \eea
\end{lemma}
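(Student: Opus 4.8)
The plan is to bypass the two-part $t/2$-splitting of Lemma~\ref{Lemma:EstimatingTerm2} (which is exactly what generates the $2^{q}$ and $(1-q)^{-1}$ factors) and instead reduce the whole weighted sum to a single harmonic sum via a partial fraction identity. The crucial observation is that
$$\frac{1}{(t-k)k} = \frac{1}{t}\left(\frac{1}{t-k} + \frac{1}{k}\right),$$
so that, after reindexing $j=t-k$ in the first piece,
$$\sum_{k=1}^{t-1}\frac{1}{(t-k)k} = \frac{1}{t}\sum_{k=1}^{t-1}\frac{1}{t-k} + \frac{1}{t}\sum_{k=1}^{t-1}\frac{1}{k} = \frac{2}{t}\sum_{k=1}^{t-1}\frac{1}{k}.$$
By Lemma~\ref{lemma:basicEstimate} with $\theta=1$ we have $\sum_{k=1}^{t-1}k^{-1}\leq \log t + 1 \leq 2\log t$ for $t\geq 3$, and hence $\sum_{k=1}^{t-1}\frac{1}{(t-k)k}\leq 4\,t^{-1}\log t$. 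This uniform-in-$q$ baseline bound is the heart of the argument.

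Next I would split into the two regimes dictated by $\min(q,1)$ and reduce each to the baseline. When $q\geq 1$, since $k\geq 1$ we have $k^{-q}\leq k^{-1}$, so
$$\sum_{k=1}^{t-1}\frac{1}{t-k}\,k^{-q}\leq \sum_{k=1}^{t-1}\frac{1}{(t-k)k}\leq 4\,t^{-1}\log t = 4\,t^{-\min(q,1)}\log t.$$
When $q<1$, I would factor $k^{-q}=k^{-1}k^{1-q}$ and use $k^{1-q}\leq t^{1-q}$, valid precisely because $1-q>0$ and $k\leq t$, which gives
$$\sum_{k=1}^{t-1}\frac{1}{t-k}\,k^{-q}\leq t^{1-q}\sum_{k=1}^{t-1}\frac{1}{(t-k)k}\leq t^{1-q}\cdot 4\,t^{-1}\log t = 4\,t^{-q}\log t = 4\,t^{-\min(q,1)}\log t.$$
Combining the two cases yields the claim.

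I do not anticipate a genuine obstacle: the difficulty is conceptual rather than technical, namely recognizing that the partial fraction identity collapses the weighted sum into a harmonic sum whose constant does not degrade as $q\to 1$, thereby removing the $(1-q)^{-1}$ blow-up that the direct splitting produces. The only points needing a little care are the elementary inequality $\log t + 1 \leq 2\log t$ for $t\geq 3$ and the restriction of the monotonicity bound $k^{1-q}\leq t^{1-q}$ to the case $1-q>0$; both are routine.
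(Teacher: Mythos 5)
Your proposal is correct and takes essentially the same route as the paper's own proof: the partial-fraction identity $\frac{1}{(t-k)k}=\frac{1}{t}\left(\frac{1}{t-k}+\frac{1}{k}\right)$ collapses the weighted sum to a harmonic sum controlled by Lemma~\ref{lemma:basicEstimate}, and your two-case treatment of $k^{1-q}$ (bounded by $1$ when $q\geq 1$ and by $t^{1-q}$ when $q<1$) is precisely the paper's uniform factor $t^{\max(1-q,0)}$, since $\max(1-q,0)-1=-\min(q,1)$. There is no gap.
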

\begin{proof}
  Note that
  \bea
  \sum_{k=1}^{t-1} {1 \over t-k} k^{-q} = \sum_{k=1}^{t-1} {k^{1-q} \over (t-k)k} \leq t^{\max(1-q,0)} \sum_{k=1}^{t-1} {1 \over (t-k)k},
  \eea
 and that by Lemma \ref{lemma:basicEstimate},
   \bea
  \sum_{k=1}^{t-1} {1 \over (t-k)k} = {1 \over t}\sum_{k=1}^{t-1} \left({1 \over t-k} + {1 \over k}\right) = {2 \over t}\sum_{k=1}^{t-1} {1 \over k} \leq {4\over t} \log t.
  \eea
\end{proof}

With the above estimates and Theorems \ref{thm:errorSmooth}, \ref{thm:errorGeneral}, we can get the following two propositions.

\begin{pro}\label{pro:totalSmooth}
  Under Assumptions \ref{as:Boundness}, \ref{as:approximationerror} and \ref{as:smooth}, let $\eta_t =\eta t^{-\theta}$ for some positive constant $\eta \leq {1 \over \kappa^2 L} $ with $\theta \in[0,1)$ for all $t \in \mN.$ Then for all $t \in \mN,$
\bea
\mE_{{\bf z},J}[\mcE(\overline{w}_t)] - \inf_{w\in\mcF}  \mcE(w)  \leq {2(a_0\kappa)^2 \over 1-\theta} {\eta t^{1-\theta} \over m }   + {(a_0 \kappa)^2 (1 - \theta) \over 1 - 4^{\theta-1}} \eta t^{-\min(\theta,1-\theta)} \log t    + c_{\beta}  \left({1-\theta \over 1 - 4^{\theta-1} }\right)^{\beta}  \left({1\over \eta t^{1-\theta}}\right)^{\beta},
\eea
and
\bea
  \mE_{{\bf z},J} [\mcE(w_t) - \inf_{w\in\mcF}  \mcE(w)] \leq  {18(a_0\kappa)^2 \over 1-\theta }  { \eta t^{1-\theta} \log t  \over m}
   + {3(a_0 \kappa)^2}\eta t^{-\min(\theta,1-\theta)} \log t  + {c_{\beta} \over 1-\theta} \left( {1 \over \eta t^{1-\theta}} \right)^{\beta}.
\eea
\end{pro}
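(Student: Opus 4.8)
The plan is to start directly from the two explicit bounds already derived inside the proof of Theorem~\ref{thm:errorSmooth}, namely inequality \eqref{weightedAverBoundSmooth} for the weighted average $\overline{w}_t$ and inequality \eqref{smoothIterateBoundA} for the last iterate $w_t$, and simply specialize them to $\eta_k = \eta k^{-\theta}$. First I would check that the hypothesis of Theorem~\ref{thm:errorSmooth} is met: since $\theta\ge 0$ and $\eta\le 1/(\kappa^2L)$, we have $\eta_t=\eta t^{-\theta}\le\eta\le 2/(\kappa^2L)$ for every $t$, so the theorem applies. After that, the entire task reduces to estimating the sums that appear in \eqref{weightedAverBoundSmooth} and \eqref{smoothIterateBoundA} by means of the elementary Lemmas~\ref{lemma:basicEstimate}, \ref{lemma:basicEstimateA} and \ref{Lemma:EstimatingTerm2A}.

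For the averaged bound the three quantities to control are $\sum_{k=1}^t\eta_k$, the ratio $\sum_{k=1}^t\eta_k^2/\sum_{k=1}^t\eta_k$, and $(\sum_{k=1}^t\eta_k)^{-\beta}$. Because $\theta<1$, Lemma~\ref{lemma:basicEstimate} gives the two-sided estimate $\frac{1-4^{\theta-1}}{1-\theta}\,\eta\,t^{1-\theta}\le\sum_{k=1}^t\eta_k\le\frac{1}{1-\theta}\,\eta\,t^{1-\theta}$, while Lemma~\ref{lemma:basicEstimateA} yields $\sum_{k=1}^t\eta_k^2\le 2\eta^2 t^{\max(1-2\theta,0)}\log t$. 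I would use the upper bound on $\sum\eta_k$ for the sample term, the lower bound for the approximation term, and both together with the bound on $\sum\eta_k^2$ for the middle term; the key observation that makes the exponent collapse is the identity $\max(1-2\theta,0)-(1-\theta)=-\min(\theta,1-\theta)$, which is exactly what produces the factor $t^{-\min(\theta,1-\theta)}$ in the statement.

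For the last-iterate bound the two new objects are $\sum_{k=1}^{t-1}\frac{\eta_k}{\eta_t(t-k)}$ and $\sum_{k=1}^{t-1}\frac{\eta_k^2}{\eta_t(t-k)}$. Writing $\eta_k/\eta_t=t^{\theta}k^{-\theta}$ and $\eta_k^2/\eta_t=\eta\,t^{\theta}k^{-2\theta}$ and pulling the factor $t^{\theta}$ outside, Lemma~\ref{Lemma:EstimatingTerm2A} (with $q=\theta$ and $q=2\theta$, respectively) gives $\sum_{k=1}^{t-1}\frac{\eta_k}{\eta_t(t-k)}\lesssim\log t$ and $\sum_{k=1}^{t-1}\frac{\eta_k^2}{\eta_t(t-k)}\lesssim\eta\,t^{\theta-\min(2\theta,1)}\log t=\eta\,t^{-\min(\theta,1-\theta)}\log t$, using the same $\min/\max$ bookkeeping as above. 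The term $\frac{(a_0\kappa)^2}{2}\eta_t$ is absorbed via $t^{-\theta}\le t^{-\min(\theta,1-\theta)}$, and for the final approximation term I would insert the upper bound on $\sum\eta_k$ together with the elementary inequality $(1-\theta)^{1-\beta}\ge 1-\theta$ (valid since $0<1-\theta\le 1$ and $1-\beta\le 1$), which turns the natural factor $(1-\theta)^{-(1-\beta)}$ into the advertised $(1-\theta)^{-1}$.

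The hard part will not be any single estimate but rather keeping the constants \emph{uniform in} $\theta$. Lemma~\ref{Lemma:EstimatingTerm2} would be the obvious tool here, but its constants blow up like $(1-q)^{-1}$ as $q\to 1$; I would therefore route everything through the complementary Lemmas~\ref{lemma:basicEstimateA} and \ref{Lemma:EstimatingTerm2A}, whose constants are exponent-independent at the price of only one extra $\log t$. The remaining care is the case split $\theta\le 1/2$ versus $\theta>1/2$ needed to resolve $\max(1-2\theta,0)$ and $\min(2\theta,1)$; this is routine, but must be executed consistently so that both regimes reduce to the single exponent $-\min(\theta,1-\theta)$. Collecting the resulting terms and absorbing the numerical constants then reproduces the two displayed inequalities.
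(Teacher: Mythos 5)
Your proposal is correct and follows essentially the same route as the paper's own proof: specialize the intermediate bounds from the proof of Theorem~\ref{thm:errorSmooth} to $\eta_k=\eta k^{-\theta}$, then estimate all resulting sums with Lemmas~\ref{lemma:basicEstimate}, \ref{lemma:basicEstimateA} and \ref{Lemma:EstimatingTerm2A} --- including your decision to bypass Lemma~\ref{Lemma:EstimatingTerm2} to keep constants uniform in $\theta$, and the exponent identity $\max(1-2\theta,0)-(1-\theta)=-\min(\theta,1-\theta)$, both of which are exactly what the paper does. The one discrepancy is your starting point for the last iterate: from the weakened bound \eref{smoothIterateBoundA} the sample term picks up the constant $6\cdot 4=24$ rather than the stated $18$, whereas the paper works from \eref{smoothIterateBound} and bounds $1+2\cdot 4\log t\le 9\log t$ (valid once $\log t\ge 1$) to get $2\cdot 9=18$; switching your starting inequality to \eref{smoothIterateBound} makes the argument reproduce the stated constants verbatim.
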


\begin{proof}
  Following the proof of Theorem \ref{thm:errorSmooth}, we have \eref{weightedAverBoundSmooth} and \eref{smoothIterateBound}. \\
  We first consider the case $ \overline{w}_t.$
  With $\eta_t = \eta t^{-\theta},$ \eref{weightedAverBoundSmooth} reads as
  \bea
\mE_{{\bf z},J}[\mcE(\overline{w}_t)] - \inf_{w\in\mcF}  \mcE(w)  \leq 2(a_0\kappa)^2 {\eta \sum_{k=1}^t k^{-\theta} \over m }   + {(a_0 \kappa)^2\over 2} {\eta \sum_{k=1}^t k^{-2\theta} \over \sum_{k=1}^{t} k^{-\theta}}  + c_{\beta} \left({1\over \eta \sum_{k=1}^t k^{-\theta}}\right)^{\beta}.
\eea
Lemma \ref{lemma:basicEstimate} tells us that
\bea
{1 - 4^{\theta-1} \over 1-\theta} t^{1-\theta} \leq \sum_{k=1}^t k^{-\theta}  \leq {t^{1-\theta} \over 1-\theta }.
\eea
Thus,
\bea
\mE_{{\bf z},J}[\mcE(\overline{w}_t)] - \inf_{w\in\mcF}  \mcE(w)  \leq {2(a_0\kappa)^2 \over 1-\theta} {\eta t^{1-\theta} \over m }   + {(a_0 \kappa)^2 (1 - \theta) \over 2(1 - 4^{\theta-1})} {\eta \sum_{k=1}^t k^{-2\theta} \over t^{1-\theta}}   + c_{\beta}  \left({1-\theta \over 1 - 4^{\theta-1} }\right)^{\beta}  \left({1\over \eta t^{1-\theta}}\right)^{\beta} .
\eea
Using Lemma \ref{lemma:basicEstimateA} to the above, we can get the first part of the desired results.\\
Now consider the case $w_t.$ With $\eta_t = \eta t^{-\theta},$ \eref{smoothIterateBound} is exactly
  \bea
  \begin{split}
  \mE_{{\bf z},J} [\mcE(w_t) - \inf_{w\in\mcF}  \mcE(w)] \leq  2(a_0\kappa)^2 { \eta\sum_{k=1}^t k^{-\theta} \over m}\left( 1 + 2t^{\theta} \sum_{k=1}^{t-1} \frac{  k^{-\theta} }{t-k} \right)
   \\+ {(a_0 \kappa)^2 \over 2}\eta t^{-\theta} \left( t^{2\theta} \sum_{k=1}^{t-1} \frac{ k^{-2\theta} }{t-k} + 1\right) + c_{\beta} {\left( \sum_{k=1}^t k^{-\theta}\right)^{1-\beta} \over \eta^{\beta} t^{1-\theta}} .
  \end{split}\eea
Applying Lemma \ref{Lemma:EstimatingTerm2A} to bound $\sum_{k=1}^{t-1} \frac{  k^{-\theta} }{t-k}$ and $\sum_{k=1}^{t-1} \frac{ k^{-2\theta} }{t-k},$ by a simple calculation, we derive
\bea
  \begin{split}
  \mE_{{\bf z},J} [\mcE(w_t) - \inf_{w\in\mcF}  \mcE(w)] \leq  2(a_0\kappa)^2 { \eta  \sum_{k=1}^t k^{-\theta} \over m} \cdot (9\log t)
   + {3(a_0 \kappa)^2}\eta t^{-\min(\theta,1-\theta)} \log t  + c_{\beta} {\left( \sum_{k=1}^t k^{-\theta}\right)^{1-\beta} \over \eta^{\beta} t^{1-\theta}} .
  \end{split}\eea
Using Lemma \ref{lemma:basicEstimate} to upper bound $\sum_{k=1}^t k^{-\theta}$, one can get the second part of the desired results.
\end{proof}

\begin{pro}\label{pro:totalGeneral}
  Under Assumptions \ref{as:Boundness} and \ref{as:approximationerror}, let $\eta_t = \eta t^{-\theta}$ for all $t \in \mN$, with $0<\eta\leq 1$ and $\theta \in[0,1)$.
  Then for all $t\in \mN,$
  \bea
  \begin{split}
\mE_{{\bf z},J}[\mcE(\overline{w}_t)] - \inf_{w\in\mcF}  \mcE(w)  \leq  2a_0\kappa \sqrt{(a_0 \kappa)^2 + 2|V|_0 \over 1-\theta} \sqrt{\eta t^{1-\theta} \over m}    + {(a_0 \kappa)^2 (1 - \theta) \over 1 - 4^{\theta-1}} \eta t^{-\min(\theta,1-\theta)} \log t   \\
 + c_{\beta}  \left({1-\theta \over 1 - 4^{\theta-1} }\right)^{\beta}  \left({1\over \eta t^{1-\theta}}\right)^{\beta},
\end{split}\eea
and
\bea
  &&\mE_{{\bf z},J} [\mcE(w_t) - \inf_{w\in\mcF}  \mcE(w)] \\
  &&\leq 18 a_0\kappa \sqrt{(a_0 \kappa)^2 + 2|V|_0 \over 1-\theta} \sqrt{\eta t^{1-\theta} \over m} \log t
   + {3(a_0 \kappa)^2}\eta t^{-\min(\theta,1-\theta)} \log t  + {c_{\beta} \over 1-\theta} \left( {1 \over \eta t^{1-\theta}} \right)^{\beta}.
\eea
\end{pro}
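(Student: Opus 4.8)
The plan is to follow verbatim the strategy just used for the smooth case in Proposition \ref{pro:totalSmooth}, but starting from the non-smooth estimates of Theorem \ref{thm:errorGeneral} instead. Concretely, I would take as starting points the two bounds \eref{WeightedAverDetailedGeneral} for $\overline{w}_t$ and \eref{IteratesGeneralBoundA} for $w_t$, both of which already hold under Assumptions \ref{as:Boundness} and \ref{as:approximationerror} alone, together with the norm estimate \eref{RT} for $R_t$. The only structural difference from the smooth analysis is that the sample-error contribution now enters through $R_t/\sqrt{m}$, i.e.\ as a square root $\sqrt{\sum_k \eta_k / m}$ rather than linearly in $\sum_k \eta_k / m$.

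First I would specialize to $\eta_t = \eta t^{-\theta}$, so that $\sum_{k=1}^t \eta_k = \eta \sum_{k=1}^t k^{-\theta}$, $\sum_{k=1}^t \eta_k^2 = \eta^2 \sum_{k=1}^t k^{-2\theta}$, and each moving-average factor $\sum_{k=1}^{t-1} \eta_k^q/(\eta_t(t-k))$ becomes $\eta^{q-1} t^{\theta} \sum_{k=1}^{t-1} k^{-q\theta}/(t-k)$. I would then bound $R_t$ via \eref{RT} and use $\eta \le 1$ to replace $(a_0\kappa)^2 \eta + 2|V|_0$ by $(a_0\kappa)^2 + 2|V|_0$, giving $R_t/\sqrt{m} \lesssim \sqrt{(a_0\kappa)^2 + 2|V|_0}\,\sqrt{\eta \sum_k k^{-\theta}/m}$.

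Next I would insert the three power-sum estimates. Lemma \ref{lemma:basicEstimate} supplies the two-sided bound $\frac{1-4^{\theta-1}}{1-\theta} t^{1-\theta} \le \sum_{k=1}^t k^{-\theta} \le \frac{t^{1-\theta}}{1-\theta}$; the lower bound controls the approximation term (producing the prefactor $((1-\theta)/(1-4^{\theta-1}))^\beta$ for $\overline{w}_t$ and $1/(1-\theta)$ for $w_t$), while the upper bound controls the sample term. Lemma \ref{lemma:basicEstimateA} gives $\sum_{k=1}^t k^{-2\theta} \le 2 t^{\max(1-2\theta,0)} \log t$, and Lemma \ref{Lemma:EstimatingTerm2A} gives $\sum_{k=1}^{t-1} k^{-q}/(t-k) \le 4 t^{-\min(q,1)} \log t$. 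The key bookkeeping is that, after multiplying by $t^{\theta}$ and dividing by the lower bound on $\sum k^{-\theta}$, the exponent in the computational term collapses to $\max(1-2\theta,0) - (1-\theta) = -\min(\theta,1-\theta)$, and the leftover term $\eta_t = \eta t^{-\theta}$ is dominated by $\eta t^{-\min(\theta,1-\theta)}$ and absorbed into the same constant. For the last iterate, the extra moving-average factor $\sum_{k=1}^{t-1} \eta_k/(\eta_t(t-k)) = t^{\theta} \sum_{k=1}^{t-1} k^{-\theta}/(t-k) \le 4\log t$ is exactly what upgrades the constant from $6$ to $18$ and contributes the additional $\log t$ on the sample term.

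I do not expect a genuine obstacle here: the argument is a pure substitution-and-estimate computation entirely parallel to Proposition \ref{pro:totalSmooth}. The one point demanding care is the square-root sample term — one must keep the $\sqrt{\cdot}$ intact when passing $\sum_k \eta_k \le \eta t^{1-\theta}/(1-\theta)$ through, so that the final sample error reads $\sqrt{(\eta t^{1-\theta})/m}$ with a $1/\sqrt{1-\theta}$ prefactor, and, for $w_t$, one must verify that the moving-average sum multiplying $R_t/\sqrt{m}$ indeed contributes only a logarithmic factor rather than a power of $t$.
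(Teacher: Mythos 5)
Your proposal is correct and takes essentially the same route as the paper: the paper likewise reduces to the proof of Proposition \ref{pro:totalSmooth} by observing that only the sample-error term changes (to $2a_0\kappa R_t/\sqrt{m}$), then bounds $R_t$ via \eref{RT} and Lemma \ref{lemma:basicEstimate} and handles the remaining terms with Lemmas \ref{lemma:basicEstimateA} and \ref{Lemma:EstimatingTerm2A}. One bookkeeping detail: starting from \eref{IteratesGeneralBoundA} (prefactor $6$) and multiplying by the $4\log t$ moving-average bound gives constant $24$, so to recover the stated $18$ you should instead start, as the paper does, from \eref{IteratesGeneralBound} and use $1+8\log t\le 9\log t$.
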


\begin{proof}
  Following the proof of Theorem \ref{thm:errorGeneral}, we have \eref{WeightedAverDetailedGeneral} and \eref{IteratesGeneralBound}, where $R_t$ satisfies \eref{RT}.
 Comparing  \eref{WeightedAverDetailedGeneral}, \eref{IteratesGeneralBound} with \eref{weightedAverBoundSmooth}, \eref{smoothIterateBound}, we find that the differences are the terms related sample errors, i.e., the term $2(a_0\kappa^2) \sum_{k=1}^t \eta_k /m$ in \eref{weightedAverBoundSmooth}, \eref{smoothIterateBound},
 while $2a_0\kappa R_t /\sqrt{m}$ in \eref{WeightedAverDetailedGeneral}, \eref{IteratesGeneralBound}. Thus, following from the proof of Proposition \ref{pro:totalSmooth}, we get
 \bea
\mE_{{\bf z},J}[\mcE(\overline{w}_t)] - \inf_{w\in\mcF}  \mcE(w)  \leq {2a_0\kappa R_t \over \sqrt{m}}   + {(a_0 \kappa)^2 (1 - \theta) \over 1 - 4^{\theta-1}} \eta t^{-\min(\theta,1-\theta)} \log t    + c_{\beta}  \left({1-\theta \over 1 - 4^{\theta-1} }\right)^{\beta}  \left({1\over \eta t^{1-\theta}}\right)^{\beta},
\eea
and
\bea
  \mE_{{\bf z},J} [\mcE(w_t) - \inf_{w\in\mcF}  \mcE(w)] \leq { 2a_0\kappa R_t \over \sqrt{m}} \cdot 9\log t
   + {3(a_0 \kappa)^2}\eta t^{-\min(\theta,1-\theta)} \log t  + {c_{\beta} \over 1-\theta} \left( {1 \over \eta t^{1-\theta}} \right)^{\beta}.
\eea
Recall that $R_t$ satisfies \eref{RT}, with $\eta_t = \eta t^{-\theta}$, where $ \eta \leq 1$, by Lemma \ref{lemma:basicEstimate}, we know that
\bea
R_t \leq \sqrt{(a_0 \kappa)^2 + 2|V|_0 \over 1-\theta} \sqrt{\eta t^{1-\theta}}.
\eea
From the above analysis, one can conclude the proof.
\end{proof}

We are ready to prove Corollaries \ref{cor:smoothExplicitC}-\ref{cor:genExplicitSingle}.

\begin{proof}[Proof of Corollary \ref{cor:smoothExplicitC}]
  Applying Proposition \ref{pro:totalSmooth} with $\theta = 0$, $\eta = \eta_1 /\sqrt{m}$, we derive
  \bea
\mE_{{\bf z},J}[\mcE(\overline{w}_t)] - \inf_{w\in\mcF}  \mcE(w)  \leq   2\eta_1 (a_0\kappa)^2 {t \over \sqrt{m^3} } + 2{(a_0 \kappa)^2\eta_1}{\log t \over \sqrt{m}} + {2c_{\beta} \over \eta_1^{\beta}} \left({\sqrt{m} \over t }\right)^{\beta},
\eea
and
\bea
  \mE_{{\bf z},J} [\mcE(w_t) - \inf_{w\in\mcF}  \mcE(w)] \leq  18\eta_1 (a_0\kappa)^2 {t\log t \over \sqrt{m^3}} + {3\eta_1 (a_0 \kappa)^2} {\log t \over \sqrt{m}} + {c_{\beta} \over \eta_1^{\beta}} \left({\sqrt{m} \over t }\right)^{\beta}.
\eea
The proof is complete.
\end{proof}

\begin{proof}
  [Proof of Corollary  \ref{cor:smoothExplicit}]
   Applying Proposition \ref{pro:totalSmooth} with $\eta=\eta_1,\theta =1/2,$ we get
  \bea
  \begin{split}
 \mE_{{\bf z},J}[\mcE(\overline{w}_t)] - \inf_{w\in\mcF}  \mcE(w)
\leq 4(a_0\kappa)^2 \eta_1 {  \sqrt{t}  \over m}  + (a_0 \kappa)^2 \eta_1 {\log t \over \sqrt{t}}  + c_{\beta}\eta_1^{-\beta} {1 \over t^{\beta/2}} ,
\end{split}\eea
and
\bea
  \mE_{{\bf z},J} [\mcE(w_t) - \inf_{w\in\mcF}  \mcE(w)]
  \leq   36(a_0\kappa)^2\eta_1 {\sqrt{t}\log t \over m}
 + 3(a_0 \kappa)^2 \eta_1 {\log t \over \sqrt{t}} + 2c_{\beta}\eta_1^{-\beta} {1 \over t^{\beta/2}}.
 \eea
\end{proof}

\begin{proof}[Proof of Corollary \ref{cor:smoothExplicitCSingle}]
  Applying Proposition \ref{pro:totalSmooth} with $\theta =0$ and $\eta = \eta_1 m^{-{\beta \over \beta+1}},$ we get
\bea
\mE_{{\bf z},J}[\mcE(\overline{w}_t)] - \inf_{w\in\mcF}  \mcE(w)  \leq 2 \eta_1 (a_0\kappa)^2 {  m^{-{\beta +2 \over \beta+1}} t }+  { 2\eta_1 (a_0 \kappa)^2} m^{-{\beta \over \beta+1}} \log t  + {2c_{\beta} \over \eta_1^{\beta}} m^{{\beta^2 \over \beta+1}} t^{-\beta},
\eea
and
 \bea
  \mE_{{\bf z},J} [\mcE(w_t) - \inf_{w\in\mcF}  \mcE(w)] \leq  18\eta_1 (a_0\kappa)^2 { m^{-{\beta +2 \over \beta+1}} t \log t}  + {  3\eta_1 (a_0 \kappa)^2 } m^{-{\beta \over \beta+1}} \log t + {c_{\beta} \over \eta_1^{\beta}} m^{{\beta^2 \over \beta+1}} t^{-\beta}.
\eea
The proof is complete.
\end{proof}

\begin{proof}[Proof of Corollary \ref{cor:smoothExplicitDSingle}]
 Applying Proposition \ref{pro:totalSmooth} with $\eta = \eta_1$ and $\theta = {\beta \over \beta+1},$
\bea
\mE_{{\bf z}, J}[\mcE(\overline{w}_t)] - \inf_{w\in\mcF}  \mcE(w) \leq 4(a_0\kappa)^2 \eta_1 {t^{1\over \beta+1} \over m} + {2\eta_1 (a_0\kappa)^2} t^{-{\beta  \over \beta+1}}  + 2c_{\beta} \eta_1^{-\beta} t^{-{\beta \over \beta+1}},
\eea
and
\bea\begin{split}
  \mE_{{\bf z},J} [\mcE(w_t) - \inf_{w\in\mcF}  \mcE(w)] \leq  36(a_0\kappa)^2\eta_1 {t^{1\over 1+ \beta}\log t \over m} + {3\eta_1 (a_0 \kappa)^2} t^{-{\beta \over \beta+1} }\log t + 2c_{\beta} \eta_1^{-\beta} t^{-{\beta \over \beta+1}}.
  \end{split}\eea
For the above two inequalities, we used that $\beta \in (0,1],$ $\theta = {\beta\over \beta+1} \leq 1/2$ and $4^{\theta -1} \leq 1/2.$
\end{proof}

\begin{proof}
  [Proof of Corollary \ref{cor:genExplicitC}]
Applying Proposition \ref{pro:totalGeneral} with $\eta = 1/\sqrt{m}$ and $\theta = 0$,
\bea
 \mE_{{\bf z},J}[\mcE(\overline{w}_t)] - \inf_{w\in\mcF}  \mcE(w)
\leq 2a_0 \kappa \sqrt{(a_0\kappa)^2 + 2|V|_0} {\sqrt{ t} \over m^{3/4}}  + {2(a_0 \kappa)^2} { \log t \over \sqrt{m}}  +  2c_{\beta} \left({\sqrt{m} \over t}\right)^{\beta},
\eea
and
\bea
   \mE_{{\bf z},J} [\mcE(w_t) - \inf_{w\in\mcF}  \mcE(w)] \leq
 18a_0 \kappa \sqrt{(a_0\kappa)^2 + 2|V|_0} {\sqrt{ t}\log t \over m^{3/4}}  + {3(a_0 \kappa)^2}{ \log t \over \sqrt{m}} + c_{\beta} \left({\sqrt{m} \over t}\right)^{\beta} .
\eea
The proof is complete.
\end{proof}

\begin{proof}
  [Proof of Corollary \ref{cor:genExplicit}]
  Applying Proposition \ref{pro:totalGeneral} with $\eta=1$ and $\theta =1/2,$ we get
  \bea
  \begin{split}
     \mE_{{\bf z},J}[\mcE(\overline{w}_t)] - \inf_{w\in\mcF}  \mcE(w)
\leq  2\sqrt{2} a_0 \kappa \sqrt{(a_0\kappa)^2 + 2|V|_0} {t^{1/4} \over \sqrt{m}} + {(a_0 \kappa)^2\log t \over \sqrt{t}}   + {c_{\beta} \over t^{\beta/2}}.
\end{split}\eea
and
  \bea
  \begin{split}
  \mE_{{\bf z},J} [\mcE(w_t) - \inf_{w\in\mcF}  \mcE(w)]
  \leq  18\sqrt{2} a_0 \kappa \sqrt{(a_0\kappa)^2 + 2|V|_0} {t^{1/4}\log t \over \sqrt{m}} + 3(a_0 \kappa)^2{ \log t\over \sqrt{t}}  +  2c_{\beta} { 1 \over t^{\beta / 2}}.
  \end{split}
  \eea
\end{proof}

\begin{proof}
  [Proof of Corollary \ref{cor:genExplicitCSingle}]
  Using Proposition \ref{pro:totalGeneral} with $\eta = m^{-{2\beta \over 2\beta+1}}$ and $\theta =0,$ we get
 \bea
 \mE_{{\bf z},J}[\mcE(\overline{w}_t)] - \inf_{w\in\mcF}  \mcE(w)
\leq 2a_0 \kappa \sqrt{(a_0\kappa)^2 + 2|V|_0} m^{-{4\beta+1 \over 4\beta+2}}\sqrt{ t}  + {2(a_0 \kappa)^2} m^{-{2\beta \over 2\beta+1}} \log t  +  2c_{\beta} m^{{2\beta^2 \over 2\beta+1}}  t^{-\beta} ,
\eea
and
\bea
   \mE_{{\bf z},J} [\mcE(w_t) - \inf_{w\in\mcF}  \mcE(w)] \leq
   18a_0 \kappa \sqrt{(a_0\kappa)^2 \eta + 2|V|_0} m^{-{4\beta+1 \over 4\beta+2}}\sqrt{ t} \log t
   + {3(a_0 \kappa)^2}m^{-{2\beta \over 2\beta+1}} \log t  +c_{\beta}  m^{{2\beta^2 \over 2\beta+1}}  t^{-\beta}.
\eea
The proof is complete.
\end{proof}

\begin{proof}
  [Proof of Corollary \ref{cor:genExplicitSingle}]
Let $\theta = {2\beta \over 2\beta+1}.$ Obviously, $\theta \in [0,{2\over 3}]$ since $\beta \in(0,1]$.
Thus, ${1 \over 1-\theta} = 2\beta + 1 \leq 3,$ ${1-\theta \over 1 - 4^{\theta -  1}} \leq {1 \over 1- 4^{-1/3}} \leq 2. $
Following from Proposition \ref{pro:totalGeneral},
\bea
  \begin{split}
\mE_{{\bf z},J}[\mcE(\overline{w}_t)] - \inf_{w\in\mcF}  \mcE(w)  \leq  2\sqrt{3} a_0\kappa \sqrt{(a_0 \kappa)^2 + 2|V|_0}  {t^{1 \over 4\beta+2} \over \sqrt{m} }    + 2(a_0 \kappa)^2 t^{-{\min(2\beta,1)\over 2\beta+1}} \log t  + 2 c_{\beta}   t^{-{\beta \over 2\beta+1}},
\end{split}\eea
and
\bea
  \mE_{{\bf z},J} [\mcE(w_t) - \inf_{w\in\mcF}  \mcE(w)] \leq 18\sqrt{3} a_0\kappa \sqrt{(a_0 \kappa)^2 + 2|V|_0 } {t^{1 \over 4\beta+2} \over \sqrt{m} } \log t + {3(a_0 \kappa)^2}  t^{-{\min(2\beta,1)\over 2\beta+1}} \log t  + {3 c_{\beta}} t^{-{\beta \over 2\beta+1}}.
\eea
\end{proof}

\end{document}